%% file: neurips_2026.tex
\documentclass{article}

\usepackage[preprint]{neurips_2026}

\usepackage[utf8]{inputenc} 
\usepackage[T1]{fontenc}    
\usepackage{hyperref}       
\usepackage{url}            
\usepackage{booktabs}       
\usepackage{amsfonts}       
\usepackage{nicefrac}       
\usepackage{microtype}      
\usepackage{xcolor}         

\usepackage{float}
\usepackage{graphicx}
\usepackage{wrapfig}
\usepackage{subcaption}

\usepackage{amsmath}
\usepackage{amssymb}
\usepackage{mathtools}
\usepackage{amsthm}
\usepackage{enumitem}

\usepackage{algorithm}
\usepackage{algorithmic}
\usepackage{placeins}

\usepackage{bm}
\usepackage{multirow}
\usepackage{color, colortbl}
\definecolor{Gray}{gray}{0.9}

\newtheorem{proposition}{Proposition}
\newtheorem{assumption}{Assumption}
\title{Smart Picks in the Dark: Towards Efficient RLVR \\for Reasoning via Tracing Metacognitive Pivots}

\author{%
Guangcheng Zhu$^{12}\quad$ Shenzhi Yang$^{12}\quad$ Haobo Wang$^1$\thanks{Corresponding author.}$\quad$ Xing Zheng$^2\quad$ Yingfan MA$^2\quad$ \\ \textbf{Xuening Feng$^2\quad$ Zhongqi Chen$^2\quad$ Bowen Song$^2$\footnotemark[1]$\quad$ Weiqiang Wang$^2\quad$ Gang Chen$^1\quad$} \\ \\
$^1$Zhejiang University \quad $^2$Ant Group
}

\begin{document}

\maketitle

\begin{abstract}
Reinforcement learning with verifiable rewards (RLVR) has greatly advanced large reasoning models (LRMs), but it requires timely training on a huge fully-annotated dataset. 
To this end, data-efficient RLVR methods have been widely studied from two perspectives: 
(i) data selection methods identify a small subset of ``golden'' samples that yield near-full-data performance, but they rely on a pre-existing pool of labeled data.
(ii) unsupervised RLVR methods train the model using its own internal supervision signals on large-scale unlabeled data, yet they exhibit suboptimal performance. 
Accordingly, we investigate the ``\textit{pick in the dark}'' setup for RLVR, which aims to select, without prior supervision, unlabeled samples that are most beneficial for training and worthy of annotation. 
Through systematic analysis, we demonstrate that smart picks hinge on a well-calibrated uncertainty estimator to enable strategic partitioning of data for adaptive training regimes.
Building on this insight, we propose \textbf{PivotTrace}, a three-way data triage framework that leverages attention dynamics to trace \textit{metacognitive pivots} during reasoning.
By precisely quantifying uncertainty through pivot density, PivotTrace achieves automated data routing to synergistically maximize both annotation and training efficiency.
Empirically, PivotTrace surpasses the fully supervised LRM with only \textbf{29.3\%} annotated samples and $\textbf{2.75}\times$ faster convergence.
Our code is available at \href{https://github.com/gczhu/PivotTrace}{https://github.com/gczhu/PivotTrace}.
\end{abstract}

\section{Introduction}
\begin{wrapfigure}[11]{r}{0.53\textwidth}
    \vspace{-12pt}
    \centering
    \includegraphics[width=0.51\textwidth]{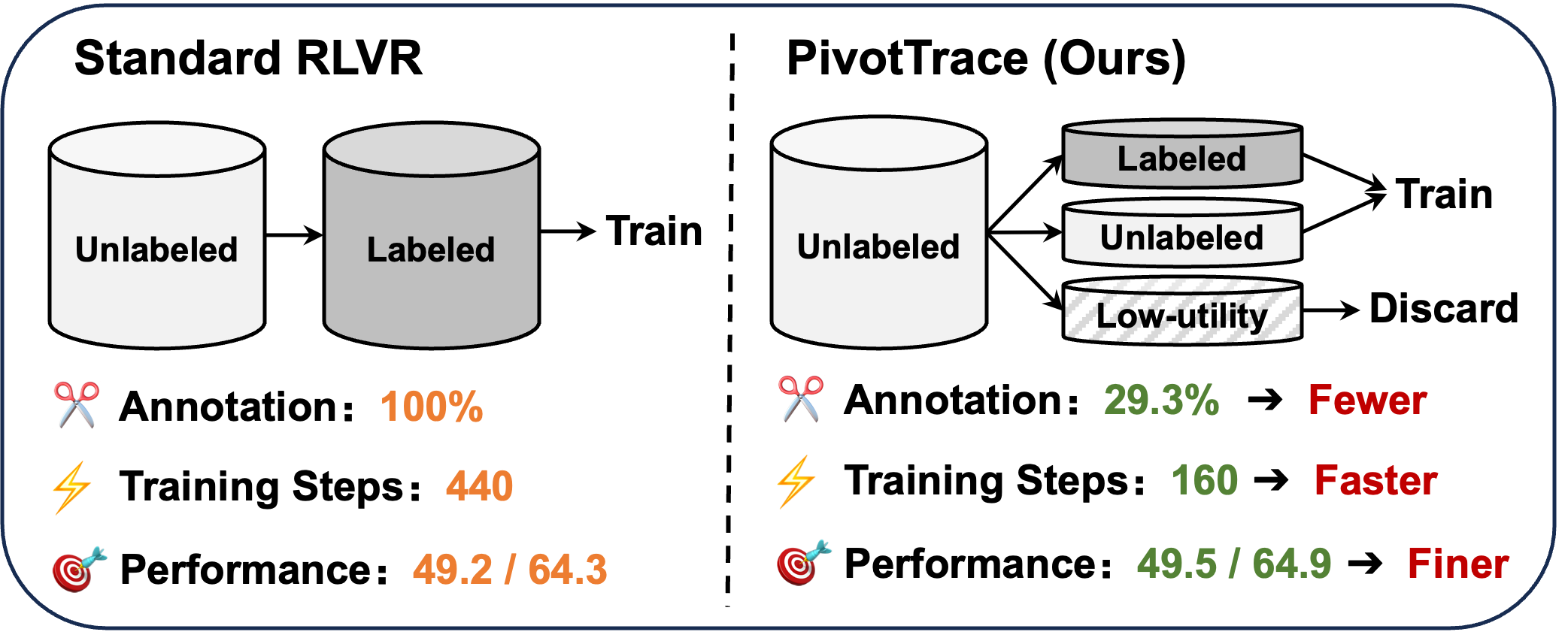}
    \vskip -0.05in
    \caption{\small Ours vs. standard RLVR: our method uses \textbf{fewer} annotations, trains \textbf{faster}, and achieves \textbf{finer} performance.}
    \label{fig:teaser}
\end{wrapfigure}
Large reasoning models (LRMs)~\citep{jaech2024openai,yang2025qwen3,guo2025deepseek} have recently demonstrated remarkable capabilities in complex problem-solving tasks. 
With the emergence of long chain-of-thought (long-CoT) reasoning~\citep{chen2025towards,yeo2025demystifying}, LRMs can decompose complex problems, perform self-reflection and correction, thereby significantly enhancing the accuracy and interpretability of their responses~\citep{renze2024self,kamoi2024can}. 
Notably, the success of these LRMs can be largely attributed to the \textit{Reinforcement Learning with Verifiable Rewards} (RLVR) paradigm~\citep{lambert2024tulu}, in which models are trained to prioritize reasoning trajectories that lead to correct answers.
By grounding rewards in verifiable outcomes, RLVR effectively elicits trustworthy reasoning~\citep{wen2025reinforcement,zhang2025survey}.

Despite its success, RLVR faces two major efficiency bottlenecks: high \textit{training} and \textit{annotation} costs.
First, RLVR training involves repeated trajectory sampling, reward computation, and policy updates over large-scale data, making it computationally expensive~\citep{zhu2025data,tang2025towards}.
Second, RLVR critically hinges on obtaining ground-truth answers for all training instances, which is notoriously time-consuming and labor-intensive. Even worse, it can be prohibitively expensive in practical scenarios requiring substantial expertise, such as medicine and finance~\citep{yu2025rlpr}.

In response, recent work focuses on improving the efficiency of RLVR.
To reduce training cost, methods like \citet{li2025limr} and \citet{yu2025dapo} filter out trivial or overly difficult samples, retaining only informative ones.
Yet, they require the entire dataset to be annotated beforehand, which defeats the purpose of reducing annotation cost.
In practice, the selection of training samples should be made before any ground-truth annotation is available, a setting we term ``\textit{pick in the dark}''.
To mitigate annotation costs, recent efforts~\citep{zuo2025ttrl,zhao2025learning,li2025confidence} explore unsupervised RLVR that eliminates external supervision entirely, deriving rewards from the model’s internal confidence signals.
However, relying solely on self-rewards can cause model collapse within a few training steps~\citep{zhang2025co}.
TraPO~\citep{yang2025trapo} alleviates this by randomly annotating a subset of samples and mixing them with unlabeled data for semi-supervised training.
Despite its promise, random selection wastes valuable annotation resources. 
Our experiments show that over $67\%$ of annotated samples already yield consistent and correct responses for reliable unsupervised training, rendering their annotation unnecessary.
We aim to jointly improve training and annotation efficiency in RLVR, which leads us to a key question:
\textit{How can we pick, in the dark, which samples are valuable to train on and which of those truly warrant annotation?}

To answer this, we focus on two crucial factors: \textit{learning utility} and \textit{internal reliability}. 
Low-utility samples can be justifiably excluded from training. 
Among the remaining high-utility samples, only those with unreliable self-supervision warrant human annotation; the rest can be used directly for unsupervised learning.
This data triage strategy, combined with the semi-supervised RLVR paradigm, paves the way for dual efficiency. 
To effectively quantify these two factors, we theoretically analyze the RLVR policy optimization process and show that both are determined by the model’s expected correctness on a given question: lower expected correctness implies higher learning utility but less reliable self-rewards. 
Unfortunately, it cannot be directly computed in our ``pick in the dark'' setting without access to ground-truth answers, necessitating a label-free proxy for expected correctness.
Essentially, we require a well-calibrated uncertainty estimator to bridge this gap, along with principled thresholds to facilitate precise data triage.

In this paper, we propose \textbf{PivotTrace} (shown in Figure~\ref{fig:framework}), a three-way data triage framework that traces metacognitive pivots during reasoning to quantify model uncertainty and guide adaptive data routing for efficient RLVR. 
Grounded in cognitive science~\citep{flavell1979metacognition}, we find that LRMs perform metacognitive monitoring and regulation like humans to detect and rectify reasoning failures. 
This process is driven by \textit{metacognitive pivots}, i.e., critical transition tokens where the model retracts prior inferences and initiates alternative reasoning paths. 
Our empirical analysis reveals that an excessive occurrence of these pivots reflects reasoning instability and correlates with erroneous outcomes. 
More importantly, these pivots consistently attract intense long-range attention, manifesting as sharp attention peaks.
Building on this, PivotTrace bridges individual pivot detection with system-wide data curation through two core mechanisms. 
First, it utilizes peak detection on attention dynamics to derive pivot counts as a robust uncertainty proxy. 
Second, it introduces an automated threshold calibration module, which dynamically determines optimal partitioning boundaries via minimal few-shot probing. 
This enables PivotTrace to adaptively route data into distinct training pipelines, thereby enhancing annotation and training efficiency in tandem.
Empirically, PivotTrace outperforms the strongest baseline by $\textbf{+1.6\%}$ in-domain (ID) and $\textbf{+2.4\%}$ out-of-domain (OOD) in average accuracy. 
Remarkably, it even \textbf{surpasses fully supervised training on the entire dataset} with only $\textbf{29.3\%}$ labeled samples and $\textbf{2.75}\times$ faster training, showing its effectiveness for efficient RLVR.

\section{Related Work}
\textbf{Active Learning} (AL) aims to select a small subset of data for annotation to achieve competitive performance over supervised learning on fully labeled data~\citep{cohn1994improving,roy2001toward,hacohen2022active,xiao2023freeal}. 
AL can be broadly categorized into two paradigms based on the training strategy: 
(i) \textit{Supervised Active Learning} (SAL), which trains solely on the selected labeled data in a purely supervised manner~\citep{sener2017active,yoo2019learning,xie2023towards}; and 
(ii) \textit{Semi-supervised Active Learning} (SSAL), which trains on the selected labeled data and the remaining unlabeled data in a semi-supervised manner~\citep{leng2013combining,wang2022unsupervised,rangnekar2023semantic}. 
Both paradigms can be applied to the RLVR task. 
However, \citet{yang2025trapo} show that, under limited annotation budgets, exploiting unlabeled data greatly boosts RLVR performance over purely supervised training. 
Thus, we adopt SSAL to maximize the utility of all available data.

\textbf{Training-efficient RLVR} accelerates learning by filtering out less informative training samples, focusing optimization on questions that yield high learning utility~\cite{li2025limr,yu2025dapo,wen2025light}. 
A primary criterion for sample selection in RLVR is question difficulty, with broad agreement that relatively difficult questions yield strongest learning signals~\cite{wang2025sota,zeng2025cures,bae2025online,li2025truth}. 
However, existing methods typically estimate difficulty via empirical answer accuracy, which faces two fundamental limitations: 
(i) it requires ground-truth answers, rendering it inapplicable in our more realistic ``pick in the dark'' setting; and 
(ii) it relies on repeated rollouts per question, incurring prohibitive computational overhead. 
Given this, we propose to estimate model-dependent question difficulty from a single rollout without any ground-truth supervision, by accurately quantifying the model’s uncertainty in its own reasoning.

\textbf{Annotation-efficient RLVR} seeks to mitigate reliance on ground-truth answers. Recent unsupervised methods~\citep{wei2025unsupervised,zhang2025right,zhao2025absolute} leverage internal model signals, such as majority voting~\citep{zuo2025ttrl}, entropy~\citep{agarwal2025unreasonable}, self-certainty~\citep{zhao2025learning}, and hidden states~\citep{zhang2025consistent}, to supervise learning. 
However, these methods often reinforce incorrect reasoning and suffer from model collapse~\citep{zhang2025no,zhang2025co}. 
To address this, TraPO~\citep{yang2025trapo} adopts semi-supervised RLVR with random annotation, which wastes valuable annotation resources and yields limited efficiency. 
We argue that actively selecting the most uncertain samples for annotation is essential to maximize the utility of scarce human labels.

\section{Preliminary}
\label{sec:preliminary}
\paragraph{RLVR Paradigm.}
RLVR is a reinforcement learning paradigm in which a rule-based verifier assigns a binary reward based on response correctness.
Formally, given a dataset $\mathcal{D}$ of question-answer pairs $(q,a)$, the policy $\pi_\theta$ generates a response $y \sim \pi_\theta(\cdot \mid q)$ for each question $q$. Let $\hat{a}$ denote the answer extracted from $y$. The reward is defined as $R(y,a)=\mathbb{I}[\hat{a}=a]$, 
where $\mathbb{I}[\cdot]$ is the indicator function.
Within this paradigm, various algorithms~\citep{schulman2017proximal,shao2024deepseekmath,hu2025reinforce++} have been proposed.
We adopt the widely used \textit{Group Relative Policy Optimization} (GRPO)~\citep{shao2024deepseekmath} as our base algorithm.
GRPO eliminates the value model and computes advantages from the rewards of multiple responses to the question. 
Formally, for each question $q$, we sample $G$ responses $\{y^{(i)}\}_{i=1}^G$ from the old policy $\pi_{\theta_{\text{old}}}$ and compute their rewards $R(y^{(i)}, a)$. The group-normalized advantage $\hat{A}_i$ is given by:

\vskip -0.1in
\begin{equation}
    \hat{A}_i = \frac{R(y^{(i)},a) - \text{mean}(\{R(y^{(i)},a)\}_{i=1}^G)}{\text{std}(\{R(y^{(i)}, a)\}_{i=1}^G)}.
\label{eq:advantage}
\end{equation}
Then, the GRPO objective is defined as:
\begin{equation}
\scalebox{0.94}{$
\begin{aligned}
    \mathcal{J}_{\text{GRPO}}
    (\theta; \mathcal{D})
    = \mathbb{E}[q \!\sim\! \mathcal{D}, \{y^{(i)}\}_{i=1}^G \!\sim\! \pi_{\theta_\text{old}}(\cdot \mid q)] \,\, \frac{1}{G} \! \sum_{i=1}^G \! \frac{1}{|y^{(i)}|} \! \sum_{t=1}^{|y^{(i)}|} \! \text{CLIP}\bigl(\gamma_{i,t}(\theta), \hat{A}_i, \epsilon\bigr) \! - \! \beta \! \cdot \! \mathbb{D}_{\text{KL}}[\pi_\theta \,\|\, \pi_{\text{ref}}]
\end{aligned}
$}
\label{eq:GRPO_loss}
\end{equation}
where ${\gamma_{i,t}(\theta) \!=\! {\pi_\theta(y^{(i)}_t | q, y^{(i)}_{<t})}/{\pi_{\theta_{\text{old}}}(y^{(i)}_{t} | q, y^{(i)}_{<t})}}$ is the importance weight, $\text{CLIP}(\gamma, A, \epsilon) = \min[\gamma \!\cdot\! A, \text{clip}(\gamma; 1\!-\!\epsilon, 1\!+\!\epsilon) \!\cdot\! A]$ is the clipped surrogate objective, and $\mathbb{D}_{\text{KL}}$ 
denotes the KL divergence.

\paragraph{Semi-supervised RLVR.} 
To balance annotation cost and training effectiveness, TraPO~\citep{yang2025trapo} proposes a semi-supervised RLVR paradigm that combines a small annotated set $\mathcal{D}_a = \{(q, a)\}$ with an unlabeled set $\mathcal{D}_u = \{q\}$.
It employs a hybrid reward function as follows: 
\begin{equation}
\label{eq:semi_reward}
    R_{\text{semi}}(y^{(i)}) =
    \begin{cases}
        R(y^{(i)}, a), & \text{if } (q, a) \in \mathcal{D}_a, \\
        R_u(y^{(i)}),    & \text{if } q \in \mathcal{D}_u.
    \end{cases}
\end{equation}
Here, $R_u(y^{(i)}) \!=\! \mathbb{I}[\hat{a}^{(i)} \!=\! a^{\!*}]$, with $a^{\!*} \!=\! \mathrm{MAJ}(\hat{a}^{(1)} \!,\! \dots \!,\! \hat{a}^{(G)})$ denoting the majority answer over $G$ responses.
The resulting rewards are used to compute the advantage estimates in Eq.~\eqref{eq:advantage}, which in turn formulate the GRPO loss in Eq. \eqref{eq:GRPO_loss}.

\begin{figure*}[t]
    \centering
    \includegraphics[width=\textwidth]{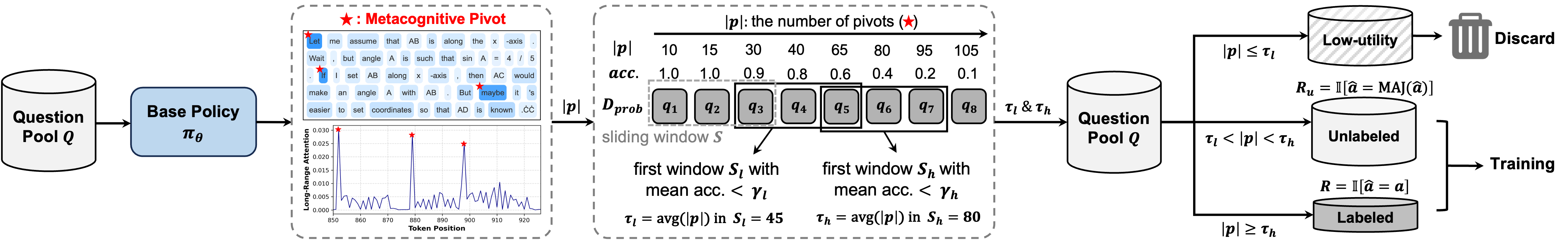}
    \vskip -0.01in
    \caption{\small Overview of the PivotTrace framework. It quantifies reasoning uncertainty by detecting attention peaks (pivot counts) from model-generated CoTs. By mapping pivot counts to accuracy via sliding windows on a small probing set, the framework automatically calibrates thresholds for three-way data triage, smartly selecting samples for labeling or filtering to enhance semi-supervised RLVR training and maximize data efficiency.}
    \label{fig:framework}
\end{figure*}

\section{How to Pick in the Dark for Dual Efficiency}
\label{sec:how_pick}
\subsection{Dual Efficiency: Objectives and Trade-offs}
We begin by formally defining dual efficiency objectives:
(i) \textit{Annotation efficiency} refers to maximizing the expected marginal utility per annotation, i.e., annotating only those samples for which self-supervision is unreliable. 
(ii) \textit{Training efficiency} refers to minimizing the number of training samples while maintaining model performance, i.e., discarding samples with low utility for learning.
Our primary goal is to jointly improve both efficiencies via active data selection before training.

\textbf{Remark.}
\textit{
Annotation and training efficiency are inherently in tension: maximizing annotation efficiency focuses labeling on the model’s knowledge frontier, yet training solely on these samples yields unstable optimization and suboptimal performance. 
Instead, directly adding unlabeled data maintains performance but degrades training efficiency due to low-utility samples.
This trade-off is empirically visualized in Figure~\ref{fig:trade_off}. See Appendix~\ref{app:train_label} for more detailed discussions.
}

To resolve this trade-off, we propose a \textit{three-way data triage} strategy within the semi-supervised RLVR paradigm, partitioning the question pool $\mathcal{Q}$ into three distinct subsets:

\begin{itemize}[left=0pt]
    \item \textbf{Annotation set} $\mathcal{D}_a$: high-utility samples with unreliable self-rewards, which are prioritized to be annotated for supervised training;
    \item \textbf{Unlabeled set} $\mathcal{D}_u$: high-utility samples with reliable self-rewards, used for unsupervised training;
    \item \textbf{Discard set} $\mathcal{D}_d$: low-utility samples, not used in training.
\end{itemize}

\subsection{Principles of Data Triage}
This triage is guided by two critical factors: \textit{learning utility} and \textit{internal reliability}, both of which are inherently tied to the model’s expected correctness $\mu_{\theta}(q)=\mathbb{E}_{y \sim \pi_{\theta}(\cdot \mid q)}[R(y, a)]$ on the sample $q$.

\textbf{Learning utility.} \,
From an optimization perspective, samples already mastered by the model (high $\mu_{\theta}(q)$) offer diminishing returns for further training. 
Conversely, samples where the model lacks proficiency (low $\mu_{\theta}(q)$) offer the greatest optimization potential.
We theoretically establish this relationship in Appendix~\ref{app:proof_prop} within the framework of trust-region policy optimization, confirming that learning utility is indeed \textbf{strictly decreasing} in $\mu_{\theta}(q)$.

\textbf{Internal reliability.} \,
On the other hand, from the standpoint of self-supervision, a lower $\mu_{\theta}(q)$ naturally implies that the model's self-generated rewards are less trustworthy. For such samples, relying solely on internal feedback risks reinforcing erroneous patterns, which can eventually result in model collapse. This renders external supervision essential to anchor the learning process.

This relationship motivates us to use $\mu_{\theta}(q)$ for our three-way data triage:
low values correspond to high-utility but unreliable samples (for annotation), 
moderate values to high-utility and reliable ones (for unsupervised use), 
and high values to low-utility samples (to discard).

Ideally, with access to ground-truth answers, $\mu_{\theta}(q)$ can be unbiasedly estimated by the proportion of correct responses in multiple rollouts from $\pi_{\theta}$.
Yet, in our ``pick in the dark'' setting, the correctness of model responses is unverifiable due to the absence of ground-truth answers, necessitating a faithful, label-free proxy for $\mu_{\theta}(q)$. 
In other words, we aim to develop a proxy that can accurately quantify the model's intrinsic reasoning uncertainty as a surrogate for correctness.

\begin{figure*}[t]
    \centering
    \begin{subfigure}[t]{0.352\textwidth}
        \centering
        \includegraphics[width=\textwidth]{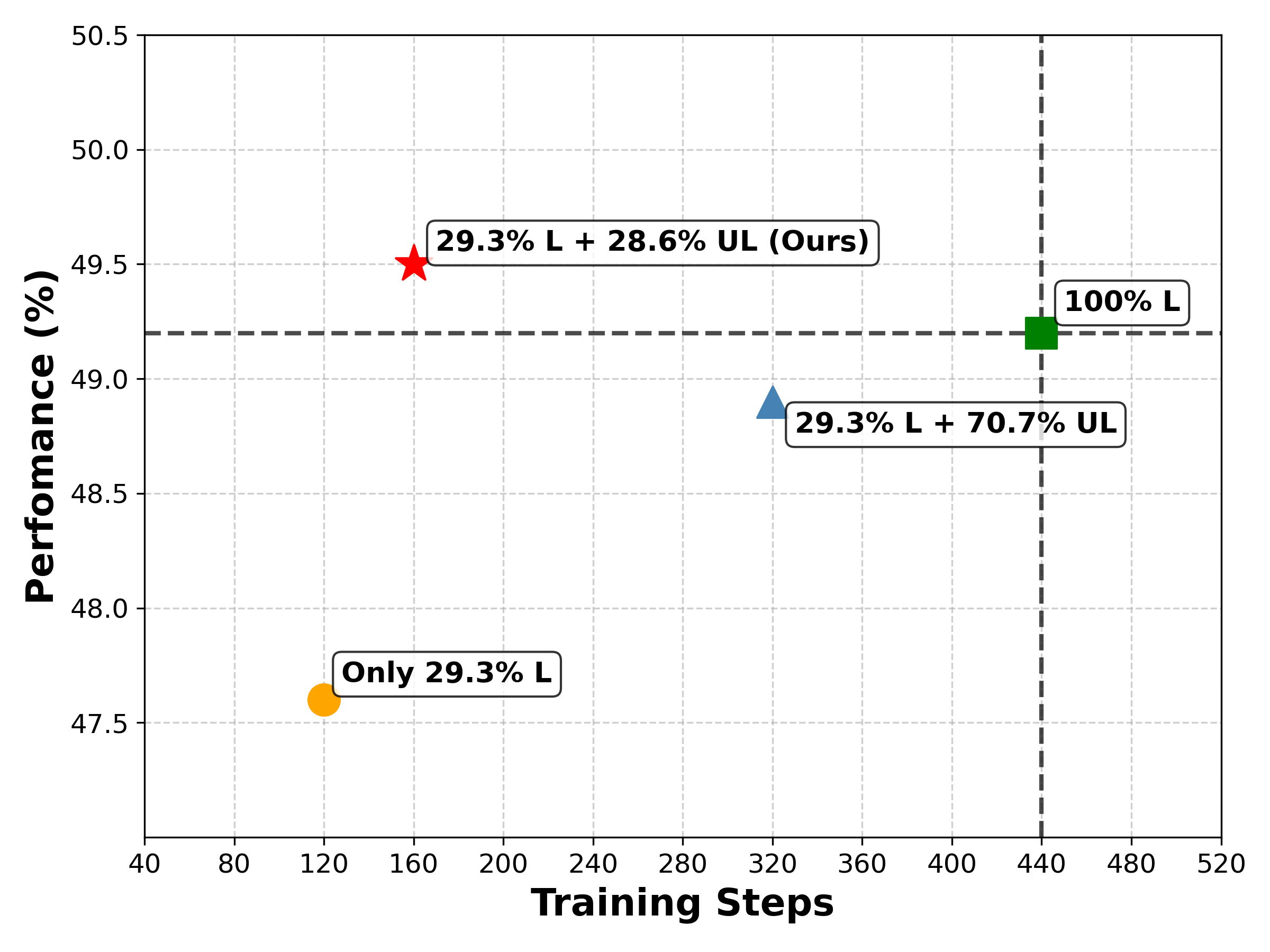}
        \caption{\small Dual efficiency trade-off}
        \label{fig:trade_off}
    \end{subfigure}
    \hfill
    \begin{subfigure}[t]{0.632\textwidth}
        \centering
        \includegraphics[width=\textwidth]{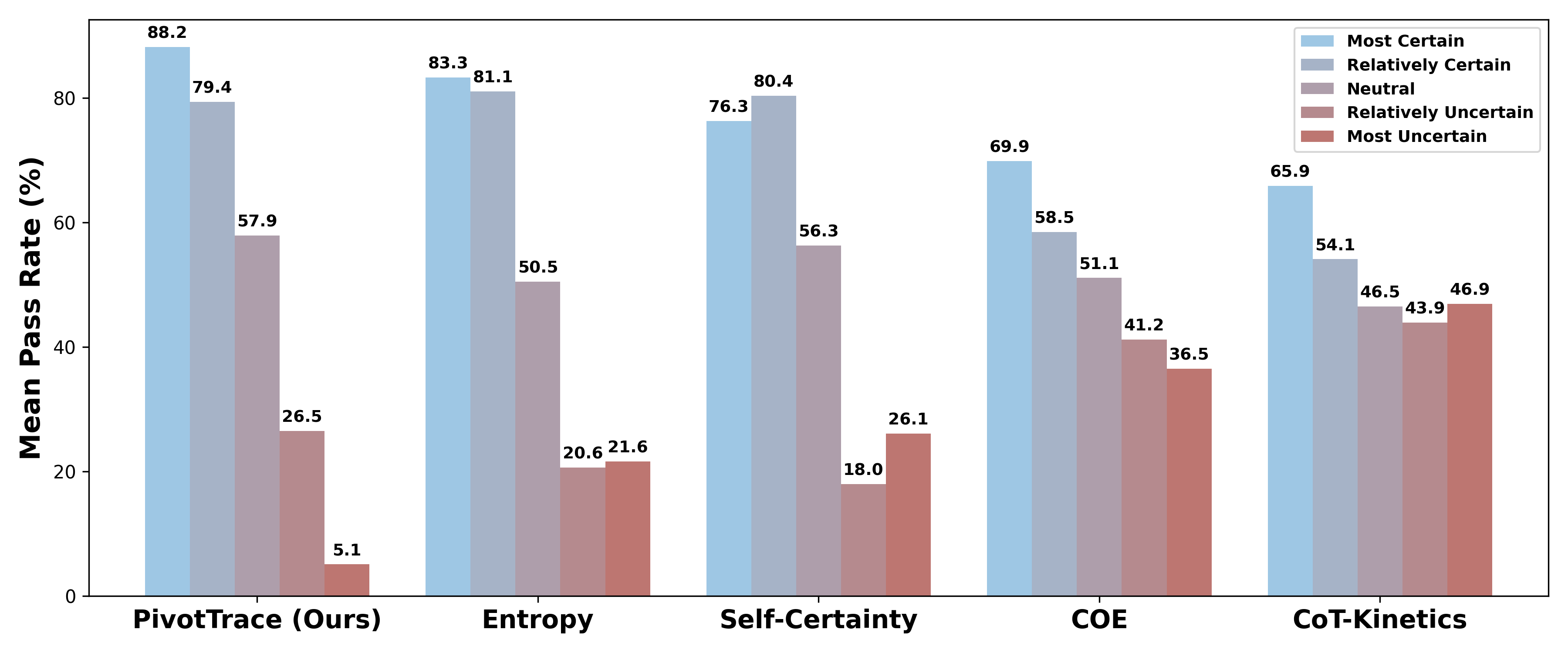}
        \caption{\small Comparison of uncertainty metrics}
        \label{fig:compare_baselines}
    \end{subfigure}
    
    \caption{\small (a) Using only labeled data or naively adding all unlabeled samples faces a trade-off between training speed and performance, whereas PivotTrace achieves dual efficiency via active selection.
    (b) Mean pass rate across uncertainty quantiles for different metrics; a larger gap indicates a superior proxy for expected correctness.}
    \label{fig:main}
\end{figure*}

\section{PivotTrace: Tracing Reasoning Uncertainty for Active Data Selection}

\subsection{Limitations of Standard Uncertainty Estimators}
\label{sec:standard_limit}
We first evaluate standard uncertainty estimation methods, as detailed in the baselines of Section~\ref{sec:setup}. 
For each method, an uncertainty score is computed for every sample, and samples are ranked into five quantiles based on that score, ranging from \textit{Most Certain} to \textit{Most Uncertain}.
We then compute the mean pass rate within each quantile to verify its expected negative correlation with uncertainty, a hallmark of well-calibrated uncertainty signals.
As illustrated in Figure~\ref{fig:compare_baselines}, representation-based methods (CoE and CoT-Kinetics) exhibit limited certainty-uncertainty separation, with pass rates of about $60\%$ and $40\%$, respectively.
In contrast, probability-based methods (entropy and self-certainty) exhibit better separation, yet still fail to maintain monotonicity at finer granularity: the \textit{Most Uncertain} quantile achieves a higher pass rate than the \textit{Relatively Uncertain} one.
These results suggest that existing uncertainty proxies unreliably reflect expected correctness.

\subsection{Metacognitive Pivots in Uncertain Reasoning}
\label{sec:nature_reason}
To address this, we delve into the behavioral signatures of model reasoning exhibited in incorrect responses.
Empirically, we find that the model displays unstable reasoning patterns, i.e., frequently retracting earlier claims and redirecting its logic, which resembles a metacognitive process of self-monitoring and regulation.
Crucially, the frequency of these events correlates strongly with lower accuracy; as shown in Figure~\ref{fig:compare_baselines}, we partition samples by this frequency and reveal a distinct accuracy gap across uncertainty quantiles.
Motivated by this insight, we seek an intrinsic signal to automatically capture these events.
We hypothesize that attention patterns, which reflect how the model dynamically weights past reasoning steps, can serve as such a signal. 
In particular, when the model revises its logic, it creates a pivot point that subsequent tokens frequently revisit, inducing sustained, long-range attention.
Our visualization in Figure~\ref{fig:attn_vis} confirms this hypothesis, revealing that tokens marking these pivot points indeed attract strong, sustained attention.
Thus, we term these tokens \emph{metacognitive pivots} and use their count as a scalable, automatic proxy for expected correctness.

\subsection{Attention-Driven Pivot Detection}
\label{sec:pivot_detect}
Based on the above observations, we propose to identify metacognitive pivots through peak detection on long-range attention dynamics.
Given a question $q$ and its model-generated CoT response $y$ of length $T$, we extract attention maps $\{\mathbf{A}^{(h)} \!\in\! \mathbb{R}^{T \times T} \}_{h=1}^H$ from all $H$ attention heads, where $\mathbf{A}^{(h)}_{t,s}$ denotes the attention weight of head $h$ that token $t$ assigns to token $s$.
For each token $y_t$, we define its long-range attention $\alpha^{(h)}_t$ under head $h$ as the average attention from a future window $\mathcal{W}_t$:
\begin{equation}
\label{eq:long_attn}
\begin{aligned}
    \alpha^{(h)}_t = \frac{1}{|\mathcal{W}_t|} \sum_{s \in \mathcal{W}_t} \mathbf{A}^{(h)}_{s,t}, \,\,
    \mathcal{W}_t = \{ s \mid t \!+\! d_{\min} \!\leq\! s \!\leq\! \min(t \!+\! d_{\max}, T) \}.
\end{aligned}
\end{equation}
Here, $d_{\min}$ and $d_{\max}$ denote the minimum and maximum distances from token $y_t$ to future tokens included in the window, respectively.
We exclude nearby tokens ($s \!<\! t \!+\! d_{\min}$) to avoid conflating long-range signals with local syntactic or lexical patterns, and cap the window at $t \!+\! d_{\max}$ to ensure consistent attention normalization across positions, as early tokens would otherwise suffer from diluted attention over an excessively long future horizon.

Notably, we find that attention heads exhibit heterogeneity: some consistently yield near-zero $\alpha^{(h)}_t$, i.e., focusing on local context rather than global reasoning (Figure~\ref{fig:attn_head}). 
Including such heads dilutes the long-range attention signal, which can obscure genuine metacognitive pivots.
Given this, we measure the long-range capability of each head $h$ by averaging $\alpha^{(h)}_t$ over a small set of CoT trajectories:
\begin{equation}
\label{eq:head_attn}
    \bar{\alpha}^{(h)} = \frac{1}{M} \sum_{i=1}^M \left( \frac{1}{T_i}\sum_{t=1}^{T_i} \alpha^{(h)}_t \right),
\end{equation}
where $M$ is the number of CoT trajectories and $T_i$ is the length of the $i$-th trajectory. Let $\mathcal{T}_k$ denote the top-$k$ heads with the highest $\bar{\alpha}^{(h)}$. The final long-range attention for $y_t$ is then given by:
\begin{equation}
\label{eq:final_attn}
    \alpha_t = \frac{1}{k} \sum_{h \in \mathcal{T}_k} \alpha^{(h)}_t.
\end{equation}
The resulting sequence $\{\alpha_t\}_{t=1}^T$ captures the long-range attention dynamics of $y$. 
We detect metacognitive pivots as prominent local maxima in this sequence. 
Specifically, let $\bm{p} \!=\! \{p_i\}$ denote the peak positions, where each $p_i$ satisfies:
\begin{equation}
\label{eq:peak_condition}
    \alpha_{p_i} \geq \zeta, \,\, 
    \mathcal{P}(p_i) \geq \psi, \,\, 
    |p_i - p_j| \geq \Delta \, (\forall i \neq j).
\end{equation}
Here, $\mathcal{P}(p_i)$ denotes the prominence at $p_i$, \textit{i.e.}, the vertical distance from $\alpha_{p_i}$ to the lowest point between two higher neighbors. The thresholds $\zeta$, $\psi$, and $\Delta$ control minimum peak height, prominence, and inter-peak distance, respectively.
Thus, $\bm{p}$ gives the locations of pivot tokens. 

\subsection{Adaptive Three-Way Data Triage}
\label{sec:active_select}
For each question $q$, we treat the pivot count $|\bm{p}|$ in its model-generated CoT response $y$ as an uncertainty proxy: higher $|\bm{p}|$ correlates with lower expected correctness $\mu_{\theta}(q)$. 
As suggested in Section~\ref{sec:how_pick}, the question pool $\mathcal{Q}$ is sorted by increasing $|\bm{p}|$ (decreasing $\mu_{\theta}(q)$) and partitioned into: 
(1) annotation set $\mathcal{D}_a = \{ q \mid |\bm{p}| \!\geq\! \tau_h \}$;
(2) unlabeled set $\mathcal{D}_u = \{ q \mid \tau_l \!<\! |\bm{p}| \!<\! \tau_h \}$;
(3) discard set $\mathcal{D}_d = \{ q \mid |\bm{p}| \!\leq\! \tau_l \}$. 
We define the annotation rate as $\rho_a \!=\! |\mathcal{D}_a|/|\mathcal{Q}|$ and the training data retention rate as $\rho_t \!=\! |\mathcal{D}_a \!\cup\! \mathcal{D}_u|/|\mathcal{Q}|$, where lower values signify improved annotation and training efficiency.

Crucially, effective data triage hinges on the precise setting of $\tau_l$ and $\tau_h$. Since $|\bm{p}|$ varies significantly across models, manual tuning is unscalable. To this end, we design an automated calibration procedure.
First, we uniformly sample $N$ questions from the sorted pool $\mathcal{Q}$ to form a probing set $\mathcal{P} \!=\! \{q_j\}_{j=1}^N$. 
For each $q_j \in \mathcal{P}$, we annotate its ground-truth answer $a_j$, generate $G$ model responses, and compute the empirical accuracy $\hat{\mu}_j$. 
A sliding window of size $K$ is then applied over $\mathcal{P}$ to compute the average empirical accuracy within each window.
Let the $i$-th window be $\mathcal{S}_i \!=\! \{ q_j \!\in\! \mathcal{P} \!\mid\! j \!\in\! [i, i\!+\!K\!-\!1]\}$ for $i = 1, \dots, N-K+1$, its average accuracy is defined as $\bar{\mu}_i = \frac{1}{K} \sum_{q_j \in \mathcal{S}_i} \hat{\mu}_j$.
The thresholds $\tau_l$ and $\tau_h$ are then set to the average pivot count in the first windows where $\bar{\mu}_i$ drops below fixed accuracy levels $\gamma_l$ and $\gamma_h$, respectively:
\begin{align}
    \tau_l &= \frac{1}{K} \!\sum_{q_j \in \mathcal{S}_{i_l}}\!\! |\bm{p}_j|,  \text{where } i_l \!=\! \min \{ i \mid \bar{\mu}_i < \gamma_l \}, \\
    \tau_h &= \frac{1}{K} \!\sum_{q_j \in \mathcal{S}_{i_h}}\!\! |\bm{p}_j|,  \text{where } i_h \!=\! \min \{ i \mid \bar{\mu}_i < \gamma_h \},
\end{align}
Using dynamically computed thresholds $\tau_l$ and $\tau_h$, we split the data into $\mathcal{D}_a$, $\mathcal{D}_u$ and $\mathcal{D}_d$. 
Finally, we discard $\mathcal{D}_d$, annotate $\mathcal{D}_a$, and perform semi-supervised RLVR on $\mathcal{D}_a \cup \mathcal{D}_u$, as described in Section~\ref{sec:preliminary}.

\textbf{Remark.}
\textit{
By combining attention-driven pivot detection with automated threshold calibration, PivotTrace provides a principled, label-free framework to pre-identify both low-utility samples for discarding and high-priority samples for annotation. 
This realization of ``smart picks in the dark'' eliminates the need for expensive ground-truth verification during data selection, maximizing dual efficiency by ensuring that every training step and every annotation credit is spent where it provides the highest marginal gain for model improvement.
}

\begin{table*}[!t]
\centering
\caption{\small In-domain (ID) and out-of-domain (OOD) performance using Qwen3-4B-Base. Results are reported under two settings: (i) each method annotates $29.3\%$ samples from the full dataset and trains on all data; (ii) each method annotates $29.3\%$ samples and trains on the selected $57.9\%$ high-utility samples. \textbf{Bold} denotes the best results and $\dagger$ denotes methods requiring multiple stochastic inferences.}
\label{tab:main_results}
\setlength{\tabcolsep}{3pt}  
\renewcommand{\arraystretch}{1.1} 
\resizebox{\textwidth}{!}{%
\begin{tabular}{lcccccc|cccc}
\toprule
\multirow{2}{*}{\textbf{Methods}} & \multicolumn{6}{c}{\textbf{In-Domain Performance}} & \multicolumn{4}{c}{\textbf{Out-of-Domain Performance}} \\
\cmidrule(lr){2-7} \cmidrule(lr){8-11}
 & \textbf{AIME 24/25} & \textbf{AMC} & \textbf{MATH-500} & \textbf{Minerva} & \textbf{Olympiad} & \textbf{Avg.} & \textbf{ARC-c} & \textbf{GPQA}$^{*}$ & \textbf{MMLU-Pro} & \textbf{Avg.} \\
\midrule
\multicolumn{11}{c}{Semi-Supervised Training on the Full Dataset ($\rho_t = 100\%$) with $\rho_a \approx 29.3\%$ Annotated} \\
\midrule
Random & 21.6/23.2 & 57.6 & 85.3 & 42.1 & 47.7 & 46.3 & 87.6 & 32.1 & 61.5 & 60.4 \\
Consistency$^\dagger$ & 24.7/23.9 & 57.4 & 85.4 & 43.0 & 47.6 & 47.0 & 86.5 & 31.1 & 61.5 & 59.7 \\
CoE & 24.9/21.8 & 59.0 & 86.6 & 43.5 & 48.1 & 47.3 & 92.6 & 31.7 & 61.4 & 61.9 \\
CoT-Kinetics & 23.1/23.3 & 55.9 & 86.5 & 42.8 & 47.4 & 46.5 & 86.9 & 35.5 & 62.5 & 61.6 \\
Entropy & 24.9/23.8 & 59.5 & 86.1 & \textbf{43.6} & 47.9 & 47.6 & 91.0 & 34.6 & 62.1 & 62.6 \\
Self-Certainty & 24.3/22.0 & 59.1 & 86.5 & 43.5 & 48.8 & 47.4 & 92.6 & 31.8 & 62.2 & 62.2 \\
\rowcolor{Gray}\textbf{PivotTrace (ours)} & \textbf{25.7}/\textbf{24.1} & \textbf{63.1} & \textbf{87.5} & 43.2 & \textbf{49.9} & \textbf{48.9} & \textbf{92.8} & \textbf{36.6} & \textbf{62.7} & \textbf{64.0} \\
\midrule
\multicolumn{11}{c}{Semi-Supervised Training on the Selected Subset (\textit{$\rho_t\approx57.9\%$}) with \textit{$\rho_a\approx29.3\%$} Annotated} \\
\midrule
Random & 23.1/23.4 & 56.6 & 85.2 & 42.7 & 47.5 & 46.4 & 88.2 & 33.0 & 62.4 & 61.2 \\
Consistency$^\dagger$ & 25.9/22.1 & 59.9 & 86.2 & 43.5 & 50.0 & 47.9 & 86.1 & 33.2 & 62.7 & 60.7 \\
CoE & 26.5/23.9 & 57.0 & 87.0 & 43.3 & 49.7 & 47.9 & 92.8 & 32.8 & 61.6 & 62.4 \\
CoT-Kinetics & 21.6/24.8 & 57.3 & 86.0 & 43.7 & 47.6 & 46.8 & 88.7 & 33.7 & 61.8 & 61.4 \\
Entropy & 26.1/22.1 & 59.2 & 86.1 & 43.7 & 48.4 & 47.6 & 83.3 & 37.8 & 62.5 & 61.2 \\
Self-Certainty & 24.2/23.6 & 58.8 & 86.3 & 43.8 & 48.5 & 47.5 & 90.1 & 35.7 & 61.7 & 62.5 \\
\rowcolor{Gray}\textbf{PivotTrace (ours)} & \textbf{27.2}/\textbf{25.2} & \textbf{62.4} & \textbf{87.3} & \textbf{44.6} & \textbf{50.1} & \textbf{49.5} & \textbf{93.0} & \textbf{38.3} & \textbf{63.5} & \textbf{64.9} \\
\midrule
\textcolor{gray}{{Fully Supervised}} & \textcolor{gray}{26.6/23.8} & \textcolor{gray}{60.7} & \textcolor{gray}{87.4} & \textcolor{gray}{43.8} & \textcolor{gray}{52.9} & \textcolor{gray}{49.2} & \textcolor{gray}{93.1} & \textcolor{gray}{36.5} & \textcolor{gray}{63.4} & \textcolor{gray}{64.3} \\
\bottomrule
\end{tabular}%
}
\end{table*}

\section{Experiments}
In this section, we present the main results and a detailed analysis showing that our method effectively quantifies the model’s reasoning uncertainty, thereby achieving higher annotation and training efficiency. More experimental details and results are provided in Appendix~\ref{app:exp_setup} and \ref{app:exp_results}, respectively.
\subsection{Setup}
\label{sec:setup}
\paragraph{Implementation Details.}
For data selection, we set the future window bounds in Eq.~\eqref{eq:long_attn} to $d_{\text{min}} \!=\! 20$ and $d_{\text{max}} \!=\! 100$. 
We use $M \!=\! 8$ trajectories to compute Eq.~\eqref{eq:head_attn} and select the top $k \!=\! 20\%$ of attention heads for Eq.~\eqref{eq:final_attn}. 
The thresholds in Eq.~\eqref{eq:peak_condition} are specified as follows: $\zeta$ to the $95$th percentile of the attention signal, $\psi$ to $5\%$ of its dynamic range, and $\Delta \!=\! 10$. 
The size of the probing set is set to $N \!=\! 100$ and that of the sliding window to $K \!=\! 20$.
The accuracy thresholds are fixed at $\gamma_l= 0.7$ and $\gamma_h= 0.3$.
For model training, we build upon the \textit{verl} framework~\citep{sheng2025hybridflow}, use a training batch size of $128$, a micro-batch size of $32$, and a learning rate of $1e^{-6}$.
Qwen3-4B-Base serves as the default model, with more evaluations using different models in Appendix~\ref{app:more_model}.
All models are trained on DAPO-Math-14k~\citep{yu2025dapo} using $8 \times$ A100 GPUs.
See Appendix~\ref{app:exp_setup} for more details.

\paragraph{Evaluation.}
Following prior work~\citep{yan2025learning,yang2025trapo}, we evaluate on benchmarks spanning mathematical and general reasoning. 
For mathematical reasoning, we test on six competition-level datasets: AIME 2024, AIME 2025, MATH-500~\citep{hendrycks2021measuring}, Minerva~\citep{lewkowycz2022solving}, AMC~\citep{li2024numinamath}, and OlympiadBench~\citep{he2024olympiadbench}. 
To evaluate out-of-distribution generalization, we further include three general reasoning benchmarks: ARC-c~\citep{clark2018think}, GPQA-diamond~\citep{rein2024gpqa} (denoted GPQA$^*$), and MMLU-Pro~\citep{wang2024mmlu}. 
Due to varying test-set sizes, we report $\text{avg}@32$ for AIME 2024/2025 and AMC, $\text{pass}@1$ for MMLU-Pro, and $\text{avg}@4$ for all other datasets.  
All evaluations use temperature $0.6$ and top-$p$ $1.0$.

\paragraph{Baselines.}
We consider six label-free selection strategies for fair comparison: 
(1) \textbf{Random}.
(2) \textbf{Consistency}~\citep{zuo2025ttrl}: lower agreement across $G$ sampled answers implies higher uncertainty.
(3) \textbf{Entropy}~\citep{huang2023look}: higher average entropy over output tokens implies higher uncertainty.
(4) \textbf{Self-Certainty}~\citep{kang2025scalable}: lower average KL divergence between output token distributions and the uniform distribution implies higher uncertainty.
(5) \textbf{CoE}~\citep{wang2024latent}: smaller magnitude variation and larger angular variation of hidden states across layers imply higher uncertainty.
(6) \textbf{CoT-Kinetics}~\citep{bi2025cot}: lower semantic momentum and curvature energy of hidden states across layers imply higher uncertainty. 
Samples are ranked at random for (1) and by descending estimated uncertainty for (2)--(6).
Using the $|\mathcal{D}_a|$ and $|\mathcal{D}_d|$ from Section~\ref{sec:active_select}, we annotate the top-$|\mathcal{D}_a|$ samples and discard the bottom-$|\mathcal{D}_d|$ samples, as ranked by each baseline.

\subsection{Main Results}
As shown in Table~\ref{tab:main_results}, we conduct two sets of experiments. 
First, we retain the full dataset without discarding any samples and annotate $29.3\%$ high-uncertainty samples selected by each method for semi-supervised RLVR.
PivotTrace outperforms all baselines, achieving average accuracy gains of $1.3\%$ (in-domain, ID) and $1.4\%$ (out-of-domain, OOD) over the strongest baseline. 
This indicates that our method more effectively identifies samples with unreliable self-supervision that benefit most from annotation, thereby directing human labeling to the model’s knowledge frontier.

Second, under each strategy, we annotate $29.3\%$ of the full dataset, exclude $42.1\%$ of low-utility samples, and train on the remaining $57.9\%$. 
In this experimental setting, most methods show gains over their corresponding full-data baselines, yet PivotTrace remains SOTA, outperforming the best baseline by $1.6\%$ (ID) and $2.4\%$ (OOD) in average accuracy. 
Notably, PivotTrace even surpasses fully supervised training on the full dataset. 
This validates our theoretical claim that low-uncertainty samples exhibit limited learning utility and can even degrade performance, as they typically yield zero policy gradients, shrinking the magnitude and increasing the noise sensitivity of batch updates.
Collectively, these results confirm that PivotTrace achieves superior RLVR with improved training and annotation efficiency (see Appendix~\ref{app:more_model} for additional model scales and architectures).

\begin{figure*}[t]
    \centering
    \begin{subfigure}[t]{0.32\textwidth}
        \centering
        \includegraphics[width=\textwidth]{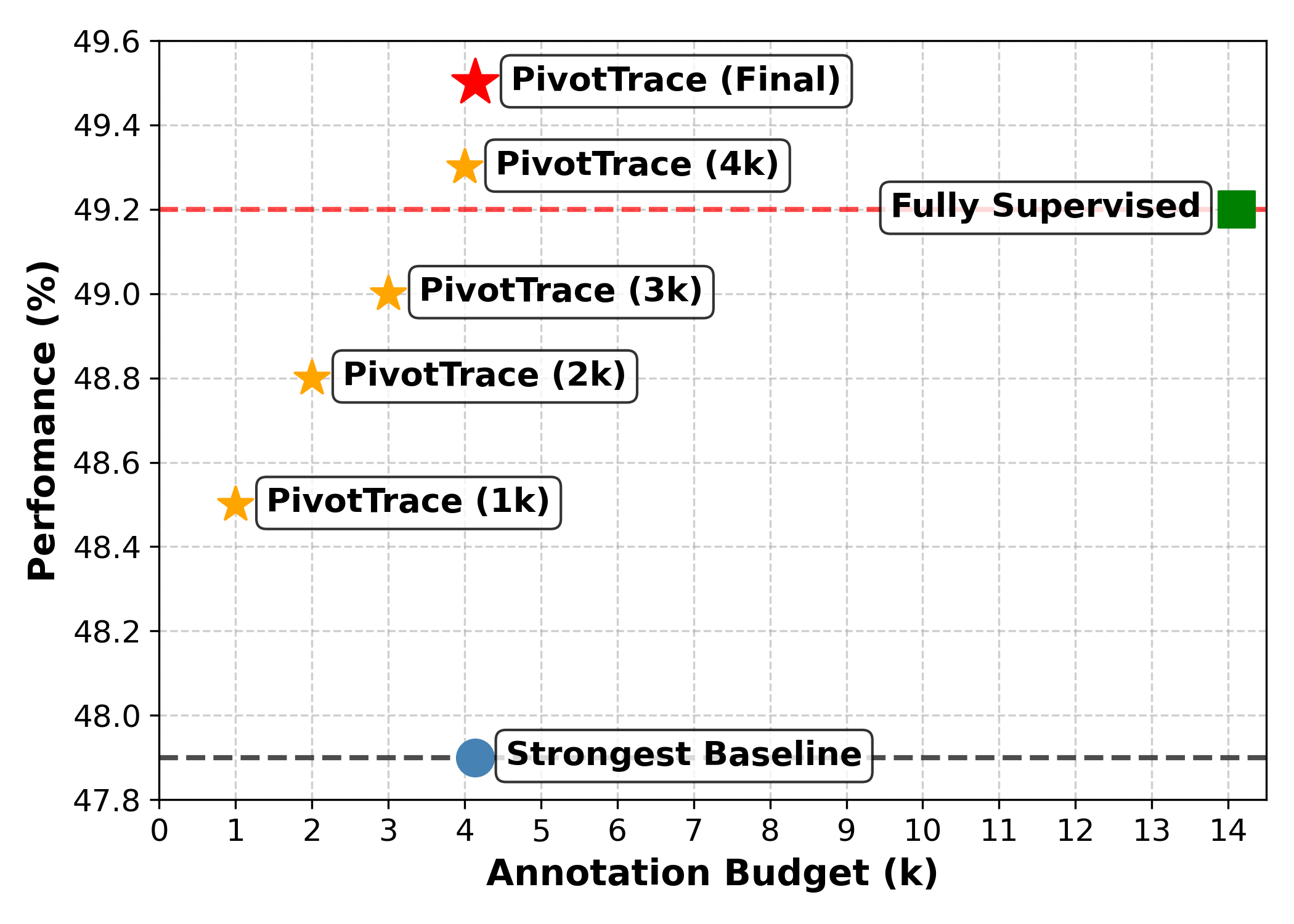}
        \caption{\small Performance vs. annotation size}
        \label{fig:diff_annotate}
    \end{subfigure}
    \hfill
    \begin{subfigure}[t]{0.38\textwidth}
        \centering
        \includegraphics[width=\textwidth]{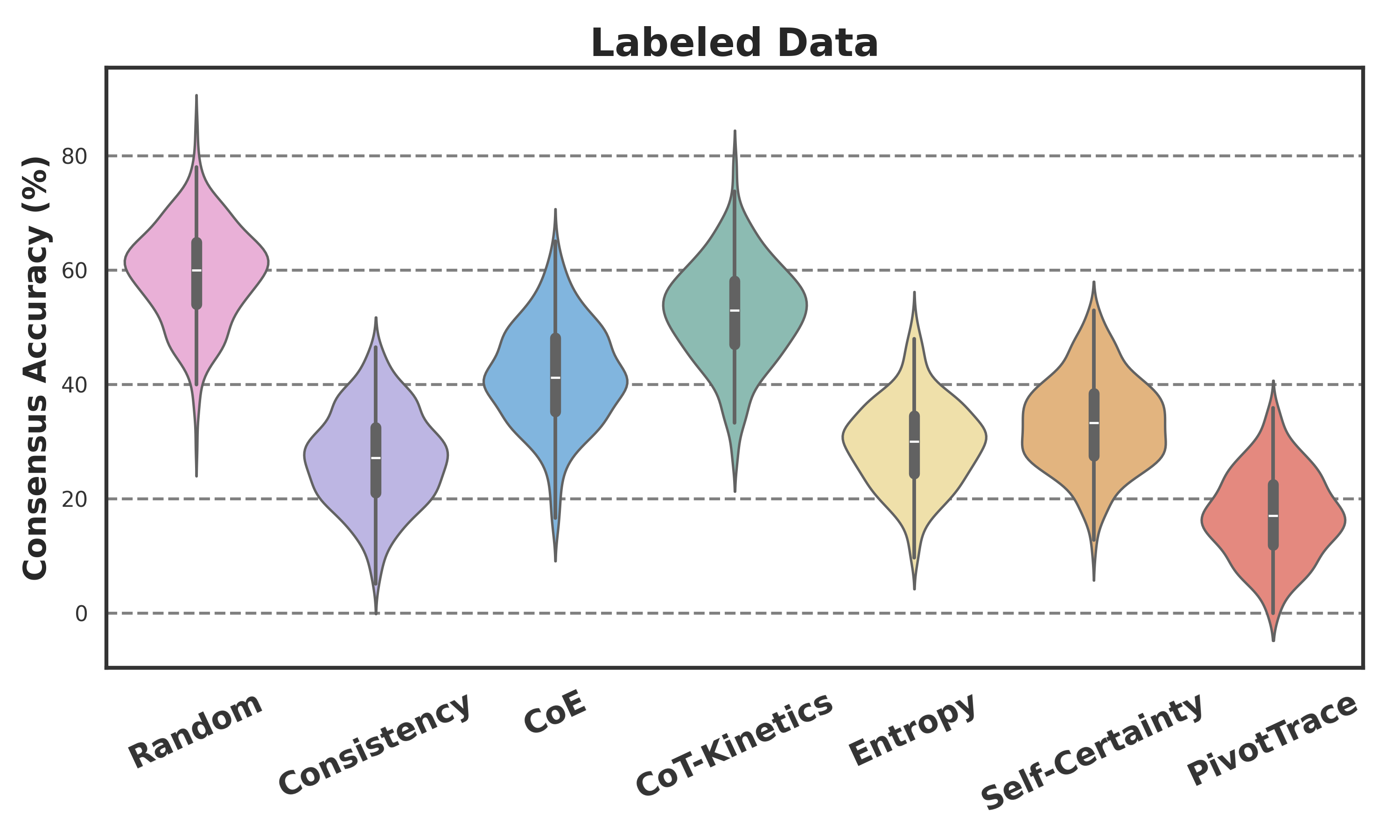}
        \caption{\small Consensus accuracy of labeled data}
        \label{fig:cons_acc_l}
    \end{subfigure}
    \hfill
    \begin{subfigure}[t]{0.27\textwidth}
        \centering
        \includegraphics[width=\textwidth]{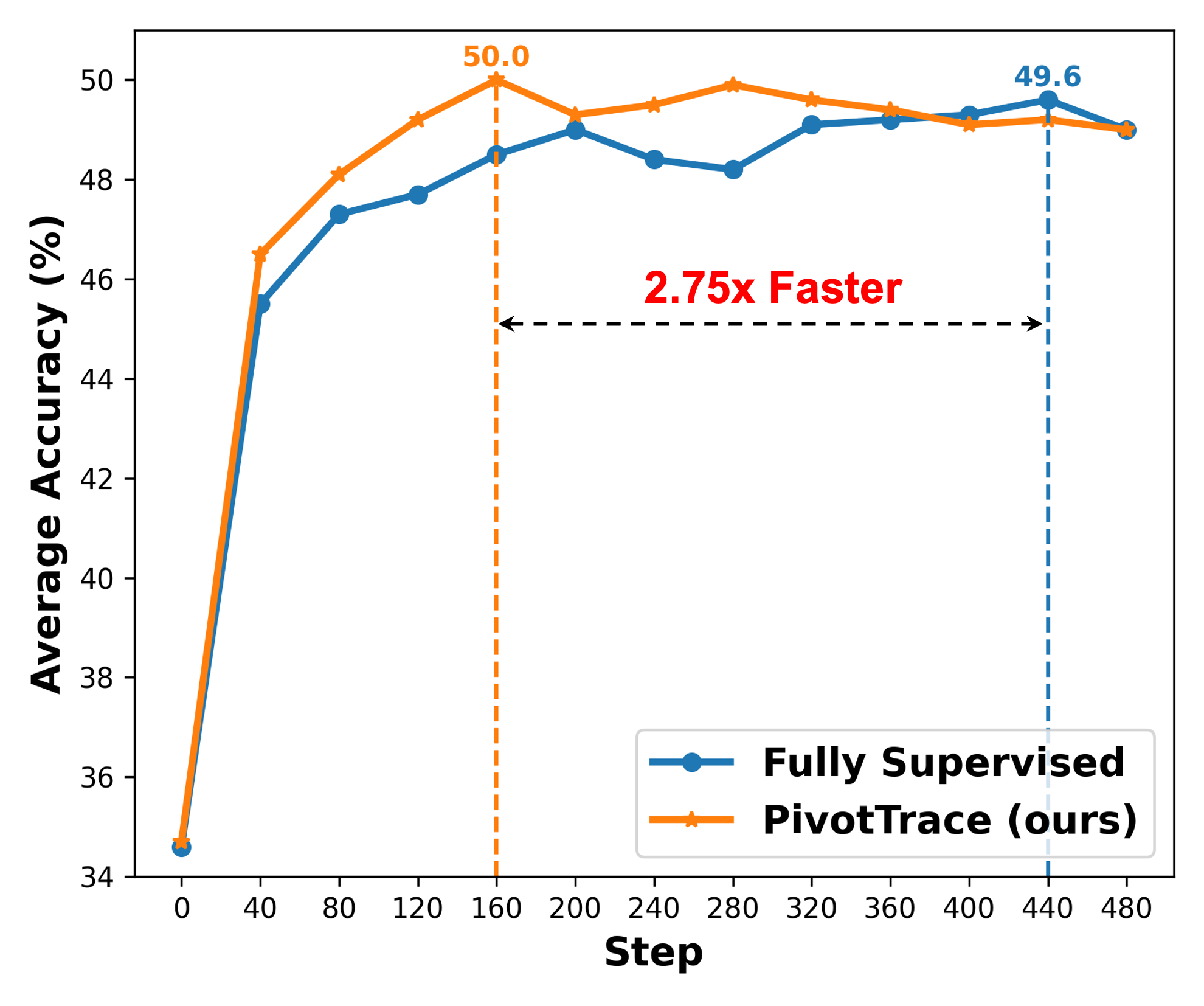}
        \caption{\small Validation performance}
        \label{fig:train_eff}
    \end{subfigure}
    
    \caption{\small (a) Model performance scales positively with annotation budget.
    (b) PivotTrace prioritizes samples with highest uncertainty for annotation. 
    (c) PivotTrace achieves 2.75$\times$ speedup over fully supervised baseline.}
    \vskip -0.1in
\end{figure*}

\subsection{Further Analysis}
\paragraph{PivotTrace under fixed annotation budget.}
While PivotTrace automatically determines the annotation set $\mathcal{D}_a$ based on a threshold $\tau_h$ that is computed on-the-fly, practical scenarios often impose a fixed annotation budget $B < |\mathcal{D}_a|$. 
To assess PivotTrace under such constraints, we uniformly sample $B \in \{1\text{k}, 2\text{k}, 3\text{k}, 4\text{k}\}$ examples from $\mathcal{D}_a$ for annotation. 
The sampled labeled subset and the unlabeled set $\mathcal{D}_u$ are used for semi-supervised RLVR.
As shown in Figure~\ref{fig:diff_annotate}, model performance consistently improves as $B$ increases. 
Notably, PivotTrace with only $1$k annotations outperforms the strongest baseline by a large margin, which uses over $4$k annotations.
Besides, at $B = 4$k (nearly the final dynamically determined $|\mathcal{D}_a| = 4{,}136$), PivotTrace surpasses the fully supervised baseline that uses over $14$k annotated samples. 
This demonstrates that PivotTrace achieves remarkable effectiveness even under limited annotation budgets.

\paragraph{Higher annotation efficiency with PivotTrace.} 
To further validate the annotation efficiency of PivotTrace, we first visualize the consensus accuracy distributions on samples selected for labeling. 
As shown in Figure~\ref{fig:cons_acc_l}, baseline methods select samples where consensus accuracy clusters within the $0.3$--$0.6$ range, while PivotTrace targets instances with substantially lower accuracy (mostly $<20\%$). 
This indicates that our method bypasses well-mastered samples and selectively identifies samples on which the model is prone to incorrect consensus, maximizing the marginal utility of each annotation.

Besides, we conduct an ablation study to quantify the efficiency gap between different strategies.
Our method finally annotates the $4{,}136$ samples with the highest pivot count (Top-4k). 
For comparison, we annotate $8{,}272$ samples using random selection (Random-8k) and selection by lowest pivot count (Bottom-8k).
Table~\ref{tab:ablation_annotation} shows that with twice as many labels, Random-8k lags Top-4k by $1.2\%$ (ID) and $1.3\%$ (OOD), and Bottom-8k performs even worse, falling behind by $2.4\%$ (ID) and $2.7\%$ (OOD).
This highlights that poorly allocated annotations can severely degrade model performance under semi-supervised RLVR, and PivotTrace achieves superior annotation efficiency with far fewer labels.

\begin{table*}[!t]
\centering
\caption{\small Ablation study on annotation efficiency: top-$k$ vs. bottom-$k$ vs. random selection.}
\label{tab:ablation_annotation}
\setlength{\tabcolsep}{5pt}  
\renewcommand{\arraystretch}{1.1} 
\resizebox{\textwidth}{!}{%
\begin{tabular}{lcccccc|cccc}
\toprule
\multirow{2}{*}{\textbf{Methods}} & \multicolumn{6}{c}{\textbf{In-Domain Performance}} & \multicolumn{4}{c}{\textbf{Out-of-Domain Performance}} \\
\cmidrule(lr){2-7} \cmidrule(lr){8-11}
 & \textbf{AIME 24/25} & \textbf{AMC} & \textbf{MATH-500} & \textbf{Minerva} & \textbf{Olympiad} & \textbf{Avg.} & \textbf{ARC-c} & \textbf{GPQA}$^{*}$ & \textbf{MMLU-Pro} & \textbf{Avg.} \\
\midrule
\rowcolor{Gray}\textbf{Top-4k} & \textbf{25.7}/\textbf{24.1} & \textbf{63.1} & \textbf{87.5} & \textbf{43.2} & \textbf{49.9} & \textbf{48.9} & \textbf{92.8} & \textbf{36.6} & \textbf{62.7} & \textbf{64.0} \\
Bottom-8k & 23.8/22.3 & 57.2 & 85.0 & 42.5 & 48.2 & 46.5 & 92.0 & 31.6 & 60.2 & 61.3 \\
Random-8k & 24.3/23.4 & 59.8 & 86.3 & 43.0 & 49.6 & 47.7 & 92.5 & 33.5 & 62.1 & 62.7 \\
\bottomrule
\end{tabular}%
}
\end{table*}

\begin{figure*}[t]
    \vskip -0.1in
    \centering
    \begin{subfigure}[t]{0.367\textwidth}
        \centering
        \includegraphics[width=\textwidth]{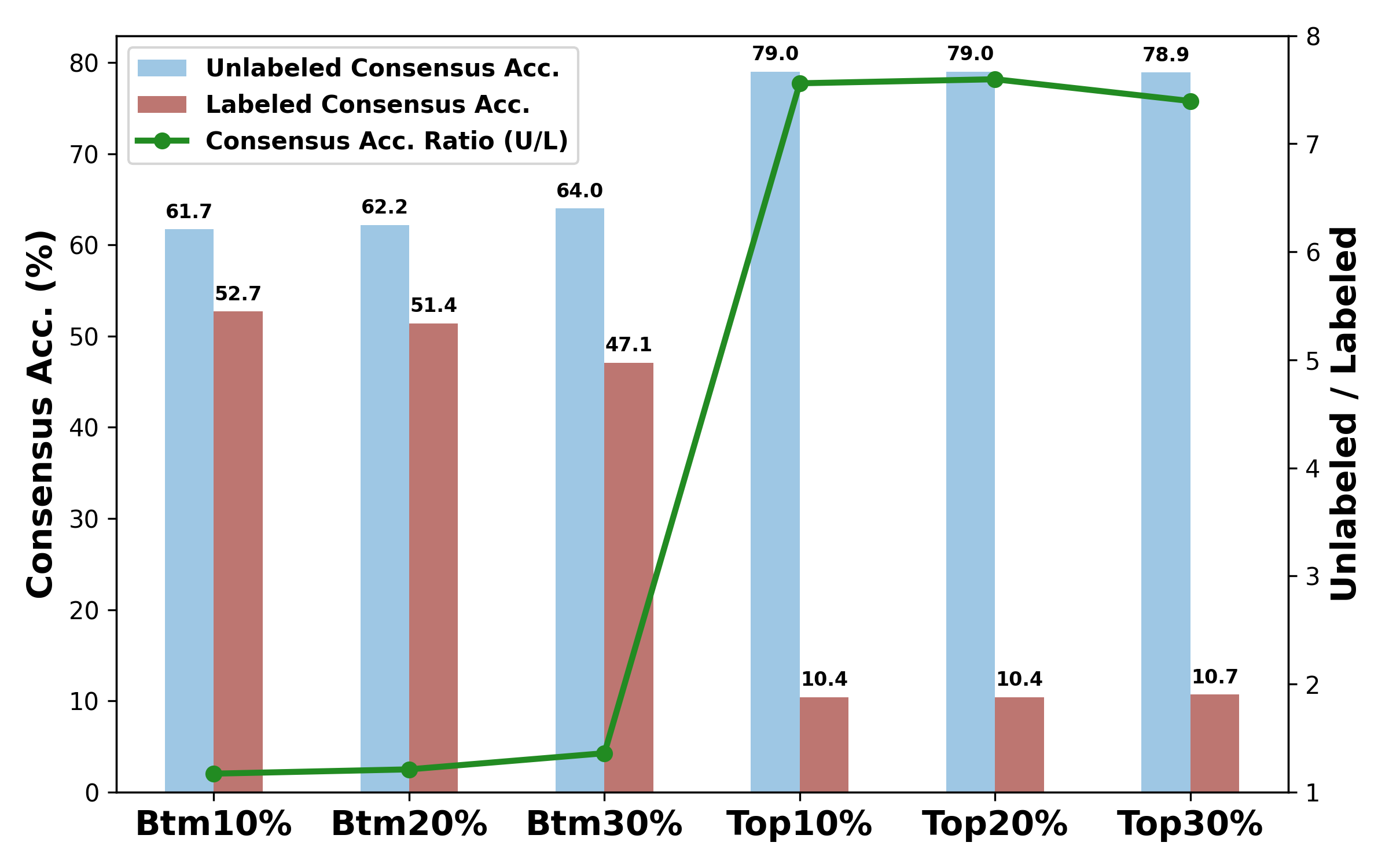}
        \caption{\small Long- vs. short-range head ablation}
        \label{fig:ablation_head}
    \end{subfigure}
    \hfill
    \begin{subfigure}[t]{0.3\textwidth}
        \centering
        \includegraphics[width=\textwidth]{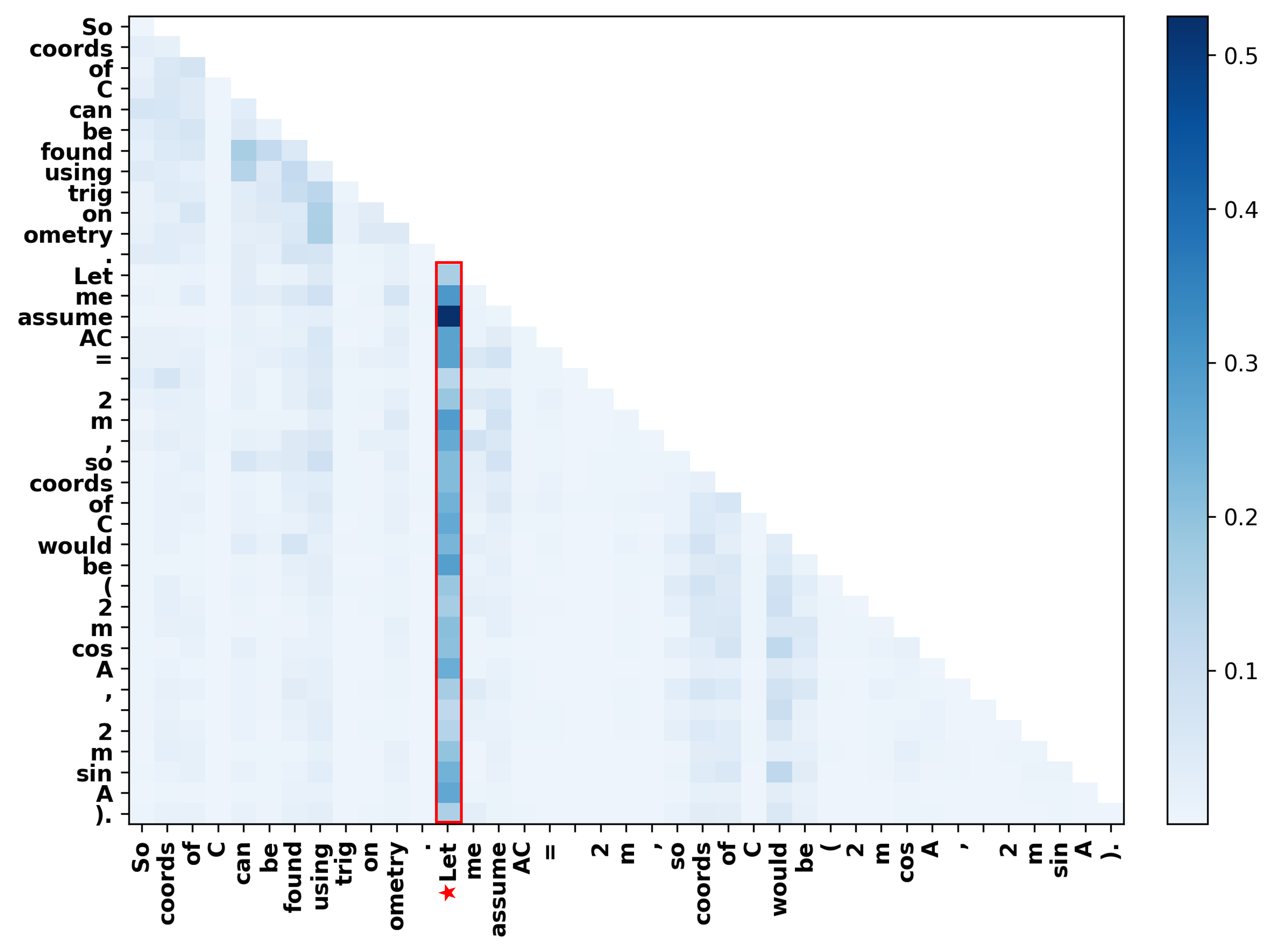}
        \caption{\small Long-range head attention}
        \label{fig:attn_map}
    \end{subfigure}
    \begin{subfigure}[t]{0.3\textwidth}
        \centering
        \includegraphics[width=\textwidth]{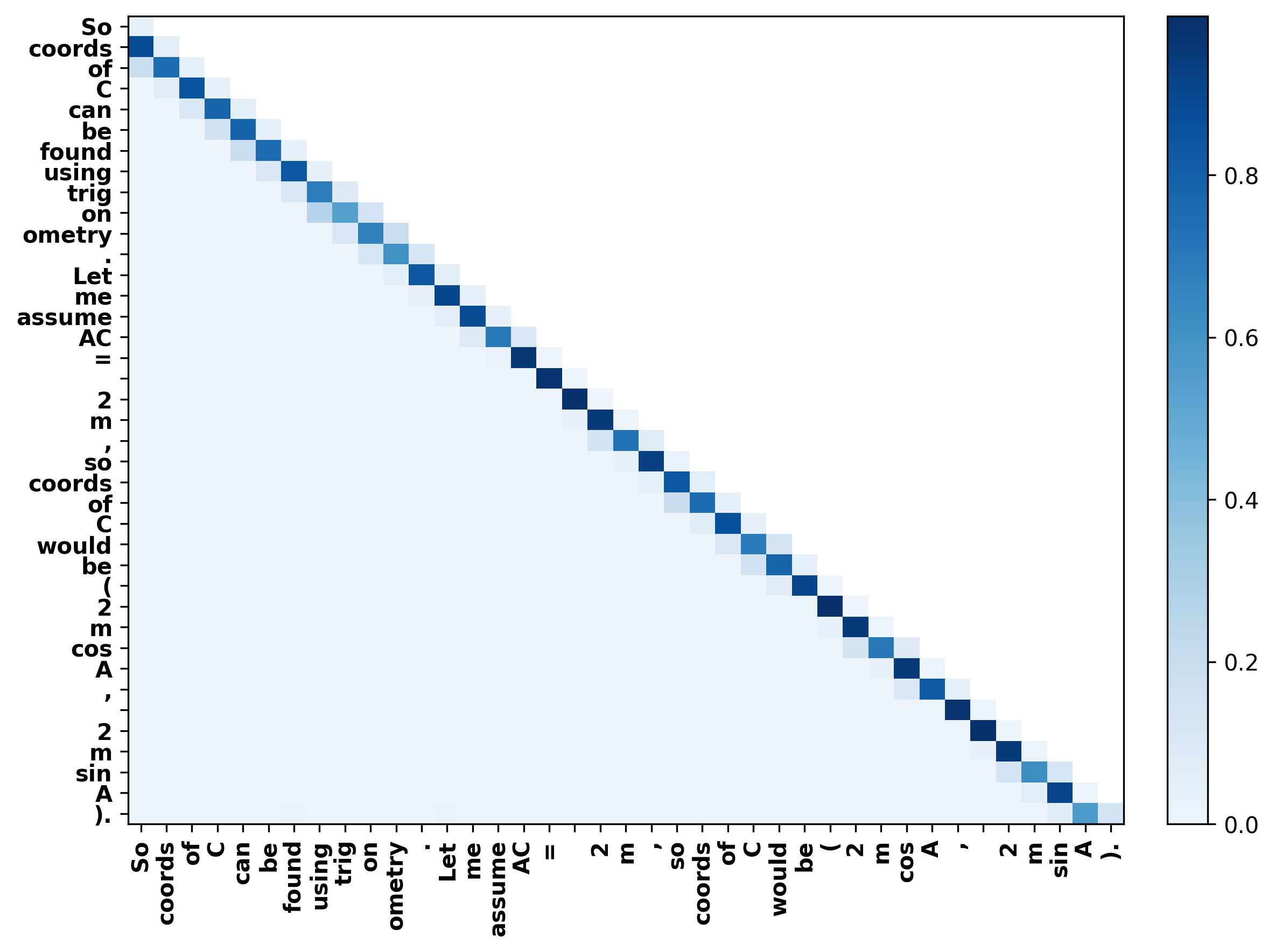}
        \caption{\small Short-range head attention}
        \label{fig:attn_map_l}
    \end{subfigure}
    \hfill
    
    \caption{\small (a) Long-range heads (top~$10$--$30\%$) effectively identify pivot-rich, high-uncertainty questions to annotate, while short-range heads (bottom~$10$--$30\%$) lack selectivity.
    (b) Attention map of long-range heads focus on global reasoning.
    (c) Attention map of short-range heads exhibit strong local concentration.}
    \vskip -0.1in
    \label{fig:attn_head}
\end{figure*}

\paragraph{Higher training efficiency with PivotTrace.} 
To evaluate the training efficiency enabled by PivotTrace, we compare semi-supervised training on the samples it selects against fully supervised training on the full dataset.
As illustrated in Figure~\ref{fig:train_eff}, which plots performance over the first $480$ training steps, our method not only achieves higher performance but also converges much faster. 
Specifically, it reaches a performance level above $49.2\%$ in just $160$ steps, compared to $440$ steps required by the fully supervised baseline, yielding a $2.75\times$ speedup in convergence.
This underscores that PivotTrace effectively identifies high-utility samples, accelerating learning and improving sample efficiency.

\paragraph{Functional analysis of attention heads.}
As stated in Section~\ref{sec:pivot_detect}, attention heads fall into two categories: short-range heads, which focus on local token neighborhoods (Figure~\ref{fig:attn_map_l}), and long-range heads, which capture global reasoning patterns (Figure~\ref{fig:attn_map}).
We propose excluding short-range heads to mitigate local bias.
To justify this choice, we select the top and bottom $10$--$30\%$ of heads using the metric in Eq.~\eqref{eq:head_attn}.
Using each head subset, we detect pivot tokens, select samples for annotation, and compute the consensus accuracy on unlabeled and labeled samples; 
their ratio (unlabeled/labeled) reflects how well the selected samples align with the model’s uncertainty.
As shown in Figure~\ref{fig:ablation_head}, short-range heads show minimal selectivity, with unlabeled-to-labeled consensus accuracy ratios near $1.2$. 
Conversely, long-range heads achieve ratios exceeding $7$, demonstrating superior pivot detection and uncertainty estimation. 
Consequently, we retain only the top $20\%$ of heads for Eq.~\eqref{eq:final_attn}.

\section{Conclusion}
In this work, we propose PivotTrace, an active data selection framework designed to jointly alleviate the high training and annotation costs in RLVR. 
We focus on a more realistic and challenging setting termed ``pick in the dark'': before annotation, one must identify which samples are low-utility, learnable using self-generated rewards, or require external supervision for stable training. 
The key insight of PivotTrace is to leverage the dynamics of long-range attention during reasoning to detect metacognitive pivots, which mark moments of internal reasoning redirection.
These pivots serve as fine-grained indicators of reasoning uncertainty, thereby informing active data selection. 
Excitingly, we show that by carefully curating data, PivotTrace even exceeds fully supervised baseline, with far less annotation and faster training. 
We hope that our work can draw attention back to data curation for RLVR and pave the way toward efficient, scalable post-training in the era of data-centric AI. 

\bibliography{neurips_2026}
\bibliographystyle{neurips_2026}


\appendix
\newpage
\section{Theoretical Proof}
\label{sec:theory_proof}
\subsection{Derivation of Instantaneous Learning Utility}
\begin{proposition}
\label{prop:instantaneous_opt_potential}
Let $\mu_{\theta_{\mathrm{old}}}(q)$ denote the expected binary reward of the policy $\pi_{\theta_{\mathrm{old}}}$ on question $q$. Under a trust-region policy update with KL constraint $\delta$, the maximal achievable improvement in the surrogate objective satisfies
\begin{equation}
|\Delta \mathcal{J}(q)| \leq \sqrt{ 2\delta \, \mu_{\theta_{\mathrm{old}}}(q) \big(1 - \mu_{\theta_{\mathrm{old}}}(q)\big) }.
\end{equation}
\end{proposition}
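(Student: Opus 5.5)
The plan is to write the surrogate improvement as an expectation under the old policy, weighted by the likelihood ratio, and then control it by a variance bound on the reward.

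\textbf{Step 1: recentre the surrogate.} Write $\mu=\mu_{\theta_{\mathrm{old}}}(q)$ and $r(y)=\pi_\theta(y\mid q)/\pi_{\theta_{\mathrm{old}}}(y\mid q)$. The trust-region surrogate gain is
\[
\Delta\mathcal{J}(q)=\mathbb{E}_{y\sim\pi_{\theta_{\mathrm{old}}}}\!\left[(r(y)-1)\,R(y,a)\right].
\]
Equivalently, using the unnormalized advantage $R-\mu$, it is $\mathbb{E}_{y\sim\pi_{\theta}}[R]-\mu$. Because $\mathbb{E}_{\pi_{\theta_{\mathrm{old}}}}[r-1]=0$, I subtract the baseline $\mu$ and obtain
\[
\Delta\mathcal{J}(q)=\mathbb{E}_{\pi_{\theta_{\mathrm{old}}}}\!\left[(r-1)(R-\mu)\right].
\]

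\textbf{Step 2: Cauchy--Schwarz.} This gives
\[
|\Delta\mathcal{J}(q)|\le \sqrt{\mathbb{E}_{\pi_{\theta_{\mathrm{old}}}}[(R-\mu)^2]}\cdot\sqrt{\mathbb{E}_{\pi_{\theta_{\mathrm{old}}}}[(r-1)^2]} .
\]
The reward is binary, so $\mathbb{E}[(R-\mu)^2]=\mu(1-\mu)$. This is exactly the factor appearing in the bound.

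\textbf{Step 3: control $\mathbb{E}[(r-1)^2]$ by $2\delta$.} This is the main obstacle. The quantity $\mathbb{E}[(r-1)^2]=\chi^2(\pi_\theta\|\pi_{\theta_{\mathrm{old}}})$ is not bounded by KL in general; the true inequality runs the other way, $\mathrm{KL}\le\chi^2$.

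I would therefore adopt the local (second-order) trust-region approximation standard in TRPO analysis. For small updates, $\mathrm{KL}(\pi_{\theta_{\mathrm{old}}}\|\pi_\theta)=\tfrac12\chi^2+O(\|\theta-\theta_{\mathrm{old}}\|^3)$. Both divergences reduce to the same Fisher quadratic form $\tfrac12 v^\top F v$ with $v=\theta-\theta_{\mathrm{old}}$. The constraint $\mathrm{KL}\le\delta$ then becomes $\chi^2\le 2\delta$, and Steps 1--2 give $|\Delta\mathcal{J}|\le\sqrt{2\delta\,\mu(1-\mu)}$.

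A cleaner equivalent route works directly at first order. The linearized gain is $g^\top v$ with $g=\nabla_\theta\mathbb{E}[R]=\mathbb{E}[(R-\mu)\nabla\log\pi]$. Maximizing $g^\top v$ subject to $\tfrac12 v^\top F v\le\delta$ gives the optimum $\sqrt{2\delta\, g^\top F^{-1}g}$. The remaining task is then to show $g^\top F^{-1}g\le \mathrm{Var}(R)$. This holds because $g^\top F^{-1}g$ is the squared norm of the $L^2(\pi_{\theta_{\mathrm{old}}})$-projection of $R-\mu$ onto the span of the score functions, and a projection cannot exceed the norm of the original vector.

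I would present this projection argument as the core of the proof. I would state explicitly that "trust-region update" is understood in this local/natural-gradient sense, since that is what makes the constant $2\delta$ exact.

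The monotonicity claim in the main text should then be handled separately as a remark rather than folded into this proof. The bound $\sqrt{2\delta\mu(1-\mu)}$ is symmetric about $\mu=1/2$, so it is not strictly decreasing in $\mu$ on all of $[0,1]$. Strict decrease holds only for $\mu\ge 1/2$, or it would need a separate argument that combines this bound with reward reliability.
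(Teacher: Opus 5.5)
Your core argument (the one you say you would present) is the paper's proof. The paper also replaces the KL trust region by its Fisher quadratic form, obtains the natural-gradient optimum $\sqrt{2\delta\, g^\top F^{-1} g}$, and then bounds $g^\top F^{-1} g \le \mathbb{V}[R] = \mu(1-\mu)$ by citing the Cram\'er--Rao inequality; your $L^2(\pi_{\theta_{\mathrm{old}}})$-projection argument is exactly the standard proof of that inequality.

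Your preliminary $\chi^2$/Cauchy--Schwarz route is also valid, and it uses one approximation fewer: Cauchy--Schwarz bounds the exact rather than the linearized gain, and only the constraint is localized via $\mathrm{KL}\approx\tfrac12\chi^2$. Your monotonicity caveat is correct as well. The paper gets strict decrease not from this instantaneous bound but from the integrated utility $\mathcal{G}(q)$, under the assumed dynamics $d\mu/dt = k(1-\mu)$.
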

\begin{proof}
Following the framework of traditional trust region methods~\citep{schulman2015trust,schulman2017proximal}, we maximize a surrogate objective subject to a constraint that limits the magnitude of the policy update. 
Specifically, for a given question $q$, we formulate the optimization problem as:
\begin{equation}
\min_{\theta} \mathcal{J}(\theta; q) = \min_{\theta} -\mathbb{E}_{y \sim \pi_{\theta}(\cdot \mid q)} \left[ A_{\theta_{\mathrm{old}}}(q, y) \right],
\quad \text{subject to} \,\,
\mathbb{D}_{\mathrm{KL}} \big( \pi_{\theta_{\mathrm{old}}}(\cdot \mid q) \|\ \pi_{\theta}(\cdot \mid q) \big) \leq \delta
\end{equation}
where $A_{\theta_{\mathrm{old}}}(q,y)$ denotes the advantage of generating response $y$ to question $q$ under the old policy $\pi_{\theta_{\mathrm{old}}}$ and $\delta > 0$ controls the size of the trust region.
To enable local analysis, we reparameterize the policy parameters as $\theta = \theta_{\mathrm{old}} + d$, where $d$ represents a small policy update. Substituting into the constrained problem and applying the method of Lagrange multipliers yields the following Lagrangian formulation:
\begin{equation}
d^* = \operatorname*{argmin}_{d} \mathcal{J}(\theta_{\mathrm{old}} + d; q) + \lambda \left( \mathbb{D}_{\mathrm{KL}} \big( \pi_{\theta_{\mathrm{old}}}(\cdot \mid q) \|\ \pi_{\theta_{\mathrm{old}} + d}(\cdot \mid q) \big) - \delta \right).
\end{equation}
where $\lambda \geq 0$  is the Lagrange multiplier. Using a second-order Taylor expansion of the Lagrangian around $d = 0$ (i.e., $\theta = \theta_{\mathrm{old}}$), we obtain:
\begin{align}
\label{eq:lagrangian}
d^* 
&= \arg\min_{d} \; \mathcal{J}(\theta_{\mathrm{old}} + d; q) + \lambda \left( \mathbb{D}_{\mathrm{KL}} \big( \pi_{\theta_{\mathrm{old}}}(\cdot \mid q) \,\|\, \pi_{\theta_{\mathrm{old}} + d}(\cdot \mid q) \big) - \delta \right) \nonumber \\
&\approx \arg\min_{d} \; \underbrace{\mathcal{J}(\theta_{\mathrm{old}}; q) - \lambda \delta}_{\text{constant w.r.t. } d}
+ \nabla_\theta \mathcal{J}(\theta; q)^\top d \Big|_{\theta = \theta_{\mathrm{old}}}
+ \frac{\lambda}{2} \, d^\top \nabla_\theta^2 \mathbb{D}_{\mathrm{KL}} \big( \pi_{\theta_{\mathrm{old}}}(\cdot \mid q) \,\|\, \pi_{\theta}(\cdot \mid q) \big) \Big|_{\theta = \theta_{\mathrm{old}}} d.
\end{align}
To proceed, we compute the first- and second-order derivatives of the KL divergence at $\theta = \theta_{\mathrm{old}}$. 
The first derivative vanishes because the KL divergence achieves its minimum at $\theta = \theta_{\mathrm{old}}$, or equivalently, because the expectation of the score function under its own distribution is zero:
\begin{equation}
\nabla_\theta \mathbb{D}_{\mathrm{KL}} \big( \pi_{\theta_{\mathrm{old}}}(\cdot \mid q) \,\|\, \pi_{\theta}(\cdot \mid q) \big) \Big|_{\theta = \theta_{\mathrm{old}}} = 0.
\end{equation}
The Hessian, however, is nontrivial and coincides with the Fisher information matrix. Specifically,
\begin{equation}
\begin{aligned}
\nabla_\theta^2 \mathbb{D}_{\mathrm{KL}} \big( \pi_{\theta_{\mathrm{old}}}(\cdot \mid q) \,\|\, \pi_{\theta}(\cdot \mid q) \big) \Big|_{\theta = \theta_{\mathrm{old}}}
&= - \nabla_\theta^2 \mathbb{E}_{y \sim \pi_{\theta_{\mathrm{old}}}(\cdot \mid q)} \big[ \log \pi_{\theta}(y \mid q) \big] \Big|_{\theta = \theta_{\mathrm{old}}} \\
&= - \mathbb{E}_{y \sim \pi_{\theta_{\mathrm{old}}}(\cdot \mid q)} \big[ \nabla_\theta^2 \log \pi_{\theta}(y \mid q) \big] \Big|_{\theta = \theta_{\mathrm{old}}} \\
&= \mathbb{E}_{y \sim \pi_{\theta_{\mathrm{old}}}(\cdot \mid q)} \left[ \nabla_\theta \log \pi_{\theta}(y \mid q) \, \nabla_\theta \log \pi_{\theta}(y \mid q)^\top \right] \Big|_{\theta = \theta_{\mathrm{old}}} \\
&= F(q; \theta_{\mathrm{old}})
\end{aligned}
\end{equation}
where $F(q; \theta) = \mathbb{E}_{y \sim \pi_{\theta}(\cdot \mid q)} \big[ \nabla_\theta \log \pi_\theta(y \mid q) \, \nabla_\theta \log \pi_\theta(y \mid q)^\top \big]$ denotes the Fisher information matrix. 
Here, the third equality follows from the identity 
$\mathbb{E}_{\pi_\theta}[\nabla_\theta^2 \log \pi_\theta] = -\mathbb{E}_{\pi_\theta}[(\nabla_\theta \log \pi_\theta)(\nabla_\theta \log \pi_\theta)^\top]$, which holds due to the normalization of $\pi_\theta$.
Using these local approximations, the constrained optimization problem in Eq.~\eqref{eq:lagrangian} reduces to the quadratic surrogate:
\begin{equation}
\label{eq:approx_lagrangian}
d^* = \arg\min_{d} \; g^\top d + \frac{\lambda}{2} d^\top F(q; \theta_{\mathrm{old}}) d
\end{equation}
where $g = \nabla_\theta \mathcal{J}(\theta; q) \big|_{\theta = \theta_{\mathrm{old}}}$ is the policy gradient evaluated at $\theta_{\mathrm{old}}$.
Differentiating the surrogate objective in Eq.~\eqref{eq:approx_lagrangian} with respect to $d$ and setting the gradient to zero yields the optimality condition,
\begin{equation}
\label{eq:optimal_d}
g + \lambda F(q; \theta_{\mathrm{old}}) d = 0
\quad \Rightarrow \quad
d = -\frac{1}{\lambda} F(q; \theta_{\mathrm{old}})^{-1} g.
\end{equation}
To determine the Lagrange multiplier $\lambda$, we enforce the trust-region constraint by approximating the KL divergence with its second-order Taylor expansion around $\theta_{\mathrm{old}}$:
\begin{equation}
\label{eq:KL_constraint}
\mathbb{D}_{\mathrm{KL}} \big( \pi_{\theta_{\mathrm{old}}}(\cdot \mid q) \,\|\, \pi_{\theta_{\mathrm{old}} + d}(\cdot \mid q) \big)
\approx \frac{1}{2} d^\top F(q; \theta_{\mathrm{old}}) d = \delta,
\end{equation}
Substituting $d$ in Eq.~\eqref{eq:optimal_d} into the KL constraint in Eq.~\eqref{eq:KL_constraint} yields
\begin{equation}
\frac{1}{2\lambda^2} \, g^\top F(q; \theta_{\mathrm{old}})^{-1} g = \delta,
\end{equation}
which gives the closed-form solution
\begin{equation}
\lambda = \sqrt{ \frac{ g^\top F(q; \theta_{\mathrm{old}})^{-1} g }{2\delta} }.
\end{equation}
Consequently, the approximate improvement in the surrogate objective is
\begin{equation}
\Delta \mathcal{J}(q) 
:= \mathcal{J}(\theta_{\mathrm{old}} + d^*; q) - \mathcal{J}(\theta_{\mathrm{old}}; q)
\approx g^\top d^*
= - \sqrt{ 2\delta \, g^\top F(q; \theta_{\mathrm{old}})^{-1} g }.
\end{equation}
The policy gradient $g$ is fully determined by the task reward. 
Let $R(y, a) \in \{0,1\}$ denote the binary reward for generating response $y$ to question $q$, as defined in Section~\ref{sec:preliminary}, where $a$ is the ground-truth answer for $q$. 
The expected reward (\textit{i.e.}, the probability of sampling a correct answer) is
\begin{equation}
\mu_\theta(q) := \mathbb{E}_{y \sim \pi_\theta(\cdot \mid q)}[R(y, a)].
\end{equation}
Under an unbiased advantage estimator with a state-dependent baseline, the gradient simplifies to
\begin{equation}
g = \nabla_\theta \mu_\theta(q) \big|_{\theta = \theta_{\mathrm{old}}}.
\end{equation}
Since $R(y, a)$ is an unbiased estimator of $\mu_\theta(q)$ and follows a Bernoulli distribution under $\pi_{\theta_{\mathrm{old}}}(\cdot \mid q)$, its variance is
\begin{equation}
\mathbb{V}_{\theta_{\mathrm{old}}}[R(y, a)] = \mu_{\theta_{\mathrm{old}}}(q) \big(1 - \mu_{\theta_{\mathrm{old}}}(q)\big).
\end{equation}
Moreover, under standard regularity conditions, the Cram\'er--Rao inequality implies that the variance of any unbiased estimator of $\mu_\theta(q)$ upper-bounds the squared natural gradient norm:
\begin{equation}
g^\top F(q; \theta_{\mathrm{old}})^{-1} g \leq \mathbb{V}_{\theta_{\mathrm{old}}}[R(y, a)] = \mu_{\theta_{\mathrm{old}}}(q) \big(1 - \mu_{\theta_{\mathrm{old}}}(q)\big).
\end{equation}
It follows that the magnitude of the achievable improvement for question $q$ is bounded as
\begin{equation}
|\Delta \mathcal{J}(q)| = \sqrt{ 2\delta \, g^\top F(q; \theta_{\mathrm{old}})^{-1} g }
\leq \sqrt{ 2\delta \, \mu_{\theta_{\mathrm{old}}}(q) \big(1 - \mu_{\theta_{\mathrm{old}}}(q)\big) }.
\end{equation}
\end{proof}

\subsection{Derivation of Total Learning Utility}
\label{app:proof_prop}
In trust-region policy optimization algorithms such as TRPO~\citep{schulman2015trust}, updates are constrained to ensure non-decreasing performance, which is quantified by \(\mu_{\theta}(q)\) in RLVR. Empirically, learning often exhibits rapid early progress followed by diminishing returns as performance saturates near its optimum. To capture this behavior in a tractable form, we make the following modeling assumption:
\begin{assumption}
\label{asm:exp_conv}
The accuracy trajectory \(\mu(t) := \mu_{\theta_t}(q)\) evolves continuously in training time \(t\) according to
\[
\frac{d\mu}{dt} = k(1 - \mu), \quad k > 0,
\]
implying monotonic convergence to perfect performance (\(\mu \to 1\)) with a rate proportional to the remaining error \(1 - \mu\).
\end{assumption}
This dynamics aligns with the monotonic improvement guarantee of TRPO and reflects the empirically observed “exponential approach” to saturation in many learning systems~\citep{shamir2019exponential,zhou2025bridging}.

\begin{proposition}
Let $\mu_{\theta_0}(q)=\mathbb{E}_{y \sim \pi_{\theta_0}(\cdot \mid q)}[R(y, a)]$ denote the expected binary correctness reward of the initial policy $\pi_{\theta_0}$ on question $q$. 
Under trust-region updates with a fixed KL budget $\delta > 0$ and convergence dynamics $\frac{d\mu}{dt} = k(1 - \mu)$ for some constant $k > 0$, the total learning utility accumulated from $\mu_{\theta_0}(q)$ to convergence is given by
\begin{equation}
\mathcal{G}(q) \!=\! \frac{\sqrt{2\delta}}{k} \! \left( \! \frac{\pi}{2} \! - \! \arcsin \! \big(\sqrt{\mu_{\theta_0}(q)}\big) \! + \! \sqrt{\mu_{\theta_0}(q)\big(1 \! - \! \mu_{\theta_0}(q)\big)} \right)
\end{equation}
which is strictly decreasing in $\mu_{\theta_0}(q)$.
\end{proposition}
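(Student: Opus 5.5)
The plan is to define the total learning utility as the instantaneous utility from Proposition~\ref{prop:instantaneous_opt_potential}, integrated along the training trajectory of Assumption~\ref{asm:exp_conv}. Concretely, I take the per-step utility at time $t$ to be the trust-region bound evaluated at the current accuracy, $\sqrt{2\delta\,\mu(t)(1-\mu(t))}$. This treats the Cram\'er--Rao bound as the achievable improvement, i.e.\ it is the maximal utility. I then set
\begin{equation*}
\mathcal{G}(q) := \int_0^\infty \sqrt{2\delta\,\mu(t)\big(1-\mu(t)\big)}\,dt .
\end{equation*}
Making this modeling identification explicit is the first step. It is also the conceptual weak point, since the proposition's ``is given by'' really means ``under this definition of accumulated utility''.

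The second step is a change of variables from time to accuracy. By Assumption~\ref{asm:exp_conv}, $\mu(t)$ is strictly increasing from $\mu_{\theta_0}(q)$ to $1$, with $dt = d\mu/(k(1-\mu))$. This gives
\begin{equation*}
\mathcal{G}(q) = \frac{\sqrt{2\delta}}{k}\int_{\mu_{\theta_0}(q)}^{1} \sqrt{\frac{\mu}{1-\mu}}\,d\mu .
\end{equation*}
The integrand has an integrable singularity of order $(1-\mu)^{-1/2}$ at $\mu=1$, so this improper integral converges. That is the one analytic point that needs checking.

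The third step evaluates the integral with the substitution $\mu=\sin^2\varphi$, $d\mu = 2\sin\varphi\cos\varphi\,d\varphi$. The integrand becomes $2\sin^2\varphi = 1-\cos 2\varphi$, whose antiderivative is $\varphi - \sin\varphi\cos\varphi$. The limits are $\varphi_0=\arcsin\sqrt{\mu_{\theta_0}(q)}$ and $\pi/2$. Evaluating, and using $\sin\varphi_0\cos\varphi_0=\sqrt{\mu_{\theta_0}(q)(1-\mu_{\theta_0}(q))}$, gives exactly the closed form stated in the proposition.

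Finally, monotonicity follows from the integral representation and the fundamental theorem of calculus:
\begin{equation*}
\frac{d\mathcal{G}}{d\mu_{\theta_0}} = -\frac{\sqrt{2\delta}}{k}\sqrt{\frac{\mu_{\theta_0}}{1-\mu_{\theta_0}}},
\end{equation*}
which is strictly negative on $(0,1)$. Continuity at $\mu_{\theta_0}=0$ extends strict decrease to $[0,1)$. Alternatively, for $\mu_1<\mu_2$ the difference $\mathcal{G}(\mu_1)-\mathcal{G}(\mu_2)$ is the integral of a positive function over $(\mu_1,\mu_2)$, hence positive.

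The main obstacle is not the calculation but justifying the definition of $\mathcal{G}$. It requires combining a per-update bound, stated for a single trust-region step, with a continuous-time flow, which amounts to treating each infinitesimal time interval as one KL-budgeted step at unit rate. I would state this explicitly as part of the modeling assumption rather than try to derive it.
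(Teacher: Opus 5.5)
Your proposal is correct and follows the paper's proof essentially step for step: the same definition of $\mathcal{G}(q)$ as the time integral of the instantaneous bound $\sqrt{2\delta\,\mu(1-\mu)}$, the same change of variables $dt = d\mu/(k(1-\mu))$, and the same trigonometric evaluation (your single substitution $\mu=\sin^2\varphi$ just combines the paper's $u=\sqrt{\mu}$ followed by $u=\sin\theta$). For monotonicity you obtain the same derivative $-\frac{\sqrt{2\delta}}{k}\sqrt{\mu_0/(1-\mu_0)}$, and your integral-difference argument also covers the endpoint $\mu_0=1$, so it gives strict decrease on all of $[0,1]$, as the paper states.
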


\begin{proof}
By Proposition~\ref{prop:instantaneous_opt_potential}, the maximal instantaneous learning utility at performance level $\mu$ is $\sqrt{2\delta\, \mu(1 - \mu)}$. 
To compute the total expected learning utility over the training trajectory, we integrate this instantaneous gain over time, using the convergence dynamics specified in Assumption~\ref{asm:exp_conv}. 
Changing the integration variable from time $t$ to performance $\mu$, and noting that
\begin{equation}
dt = \frac{d\mu}{d\mu/dt} = \frac{d\mu}{k(1 - \mu)},
\end{equation}
we obtain the total expected learning utility:
\begin{equation}
\mathcal{G}(q) = \int_{0}^{\infty} \sqrt{2\delta\, \mu(t)(1 - \mu(t))} \, dt
= \int_{\mu_{\theta_0}(q)}^{1} \frac{\sqrt{2\delta\, \mu(1 - \mu)}}{k(1 - \mu)} \, d\mu
= \frac{\sqrt{2\delta}}{k} \int_{\mu_{\theta_0}(q)}^{1} \frac{\sqrt{\mu}}{\sqrt{1 - \mu}} \, d\mu.
\end{equation}
To evaluate the integral, we first substitute $u = \sqrt{\mu}$, so that $\mu = u^2$ and $d\mu = 2u\,du$. The limits become $u \in [\sqrt{\mu_{\theta_0}(q)}, 1]$, yielding
\begin{align}
\int_{\mu_{\theta_0}(q)}^{1} \frac{\sqrt{\mu}}{\sqrt{1 - \mu}} \, d\mu
&= \int_{\sqrt{\mu_{\theta_0}(q)}}^{1} \frac{u}{\sqrt{1 - u^2}} \cdot 2u \, du
= 2 \int_{\sqrt{\mu_{\theta_0}(q)}}^{1} \frac{u^2}{\sqrt{1 - u^2}} \, du.
\end{align}
Next, let $u = \sin\theta$, so that $du = \cos\theta\,d\theta$ and $\sqrt{1 - u^2} = \cos\theta$. When $u = \sqrt{\mu_{\theta_0}(q)}$, we have $\theta = \arcsin(\sqrt{\mu_{\theta_0}(q)})$; when $u = 1$, $\theta = \pi/2$. The integral simplifies to
\begin{align}
2 \int_{\arcsin(\sqrt{\mu_{\theta_0}(q)})}^{\pi/2} \frac{\sin^2\theta}{\cos\theta} \cdot \cos\theta \, d\theta
&= 2 \int_{\arcsin(\sqrt{\mu_{\theta_0}(q)})}^{\pi/2} \sin^2\theta \, d\theta \\
&= \int_{\arcsin(\sqrt{\mu_{\theta_0}(q)})}^{\pi/2} (1 - \cos 2\theta) \, d\theta \\
&= \Big[ \theta - \tfrac{1}{2}\sin 2\theta \Big]_{\arcsin(\sqrt{\mu_{\theta_0}(q)})}^{\pi/2}.
\end{align}
Evaluating the boundary terms:
\[
\left. \theta - \tfrac{1}{2}\sin 2\theta \right|_{\theta = \pi/2} = \frac{\pi}{2}, \quad
\left. \theta - \tfrac{1}{2}\sin 2\theta \right|_{\theta = \arcsin(\sqrt{\mu_{\theta_0}(q)})} = \arcsin(\sqrt{\mu_{\theta_0}(q)}) - \sqrt{\mu_{\theta_0}(q)(1 - \mu_{\theta_0}(q))},
\]
where we used the identity $\sin(2\arcsin x) = 2x\sqrt{1 - x^2}$. Therefore,
\begin{equation}
\int_{\mu_{\theta_0}(q)}^{1} \frac{\sqrt{\mu}}{\sqrt{1 - \mu}} \, d\mu
= \frac{\pi}{2} - \arcsin\!\big(\sqrt{\mu_{\theta_0}(q)}\big) + \sqrt{\mu_{\theta_0}(q)\big(1 - \mu_{\theta_0}(q)\big)}.
\end{equation}
Substituting back, we obtain the closed-form expression:
\begin{equation}
\mathcal{G}(q) = \frac{\sqrt{2\delta}}{k} \left( \frac{\pi}{2} - \arcsin\!\big(\sqrt{\mu_{\theta_0}(q)}\big) + \sqrt{\mu_{\theta_0}(q)\big(1 - \mu_{\theta_0}(q)\big)} \right).
\end{equation}
To verify monotonicity, let $\mu_0 := \mu_{\theta_0}(q) \in (0,1)$ and differentiate:
\begin{align}
\frac{d\mathcal{G}}{d\mu_0}
&= \frac{\sqrt{2\delta}}{k} \left( 
- \frac{1}{\sqrt{1 - \mu_0}} \cdot \frac{1}{2\sqrt{\mu_0}} 
+ \frac{1 - 2\mu_0}{2\sqrt{\mu_0(1 - \mu_0)}}
\right) \nonumber \\
&= \frac{\sqrt{2\delta}}{k} \cdot \frac{-1 + (1 - 2\mu_0)}{2\sqrt{\mu_0(1 - \mu_0)}} \nonumber \\
&= -\frac{\sqrt{2\delta}}{k} \cdot \frac{\mu_0}{\sqrt{\mu_0(1 - \mu_0)}} \nonumber \\
&= -\frac{\sqrt{2\delta}}{k} \cdot \frac{\sqrt{\mu_0}}{\sqrt{1 - \mu_0}} < 0.
\end{align}
Thus, $\mathcal{G}(q)$ is strictly decreasing on $(0,1)$. Moreover, $\mathcal{G}(q)$ is continuous on the closed interval $[0,1]$, as can be verified by direct evaluation of the limits:
\[
\lim_{\mu_0 \to 0^+} \mathcal{G}(q) = \frac{\sqrt{2\delta}}{k} \cdot \frac{\pi}{2}, \quad
\lim_{\mu_0 \to 1^-} \mathcal{G}(q) = 0,
\]
and the expression remains finite at both endpoints. Since a continuous function that is strictly decreasing on $(0,1)$ must also be strictly decreasing on $[0,1]$ (in the sense that $\mu_0^{(1)} < \mu_0^{(2)} \implies \mathcal{G}(\mu_0^{(1)}) > \mathcal{G}(\mu_0^{(2)})$ for all $\mu_0^{(1)}, \mu_0^{(2)} \in [0,1]$), we conclude that $\mathcal{G}(q)$ is strictly decreasing on the entire interval $[0,1]$.
\end{proof}

\section{Additional Experimental Setups}
\label{app:exp_setup}
In addition to the selection and training setups described in Section~\ref{sec:setup}, we provide the following supplementary details.

\paragraph{More Selection Details}
For base models (e.g., Qwen3-4B-Base), which initially exhibit poor instruction-following and formatting capabilities, we introduce a warmup training phase to enable effective use of model-generated responses for data selection. Specifically, we construct a warmup dataset comprising 512 question–answer pairs and perform RLVR training for approximately 20 steps to endow the model with basic formatting competence. In contrast, instruct-tuned models (e.g., Deepseek-R1-Distill-Llama-8B and Deepseek-R1-Distill-Qwen-1.5B) do not require this warmup stage due to their stronger out-of-the-box alignment.
Starting from either the warmed-up base model or the instruct model, we generate one response per question in the training set (eight responses for the \textit{Consistency} baseline). 
We then perform a forward pass through the model using each (question, response) pair to collect the necessary information required by each baseline for computing its corresponding uncertainty score:
\begin{itemize}[leftmargin=*]
\item For \textit{Entropy} and \textit{Self-Certainty}, we record the predicted token-level probability distributions.
\item For \textit{CoE} and \textit{CoT-Kinetics}, we record the hidden states from all layers at each token position.
\item For \textit{Consistency}, we directly compute the proportion of the majority answer among the eight generated responses.
\item For our proposed method \textit{PivotTrace}, we retain the attention maps from all attention heads.
\end{itemize}
Once the uncertainty scores are obtained, we proceed with data annotation and filtering as described in Section~\ref{sec:setup}.

\paragraph{More Training Details}
The maximum prompt length is set to 1,024 tokens, and the maximum response length is capped at 4,096 tokens. 
We employ \texttt{Math-Verify}\footnote{\url{https://github.com/huggingface/Math-Verify}} as the sole reward function, without incorporating any auxiliary rewards for format or response length. 
Inference is conducted using the \texttt{vLLM}\footnote{\url{https://github.com/vllm-project/vllm}} framework, with 8 rollouts per prompt, temperature fixed at 1.0, and top-$p$ sampling also set to 1.0. 
All prompts during rollouts are prepended with the system prompt: ``\texttt{Let's think step by step and output the final answer within \textbackslash\textbackslash boxed\{\}.}''. 
All semi-supervised baselines are trained for a maximum of 400 steps, whereas fully supervised training on the complete dataset is extended to 600 steps to guarantee sufficient convergence. 
Model checkpoints are saved every 40 training steps for validation.

\section{Additional Experimental Results}
\label{app:exp_results}

\begin{table*}[!t]
\centering
\caption{\small In-domain (ID) and out-of-domain (OOD) performance using Deepseek-R1-Distill-Llama-8B. \textbf{Bold} denotes the best results and $\dagger$ denotes methods requiring multiple stochastic inferences.}
\label{tab:results_llama}
\setlength{\tabcolsep}{3pt}  
\renewcommand{\arraystretch}{1.2} 
\resizebox{\textwidth}{!}{%
\begin{tabular}{lcccccc|cccc}
\toprule
\multirow{2}{*}{\textbf{Methods}} & \multicolumn{6}{c}{\textbf{In-Domain Performance}} & \multicolumn{4}{c}{\textbf{Out-of-Domain Performance}} \\
\cmidrule(lr){2-7} \cmidrule(lr){8-11}
 & \textbf{AIME 24/25} & \textbf{AMC} & \textbf{MATH-500} & \textbf{Minerva} & \textbf{Olympiad} & \textbf{Avg.} & \textbf{ARC-c} & \textbf{GPQA}$^{*}$ & \textbf{MMLU-Pro} & \textbf{Avg.} \\
\midrule
Random & 27.0/14.5 & 34.0 & 22.4 & 14.1 & 21.9 & 22.3 & 25.5 & 22.2 & 43.2 & 30.3 \\
Consistency$^\dagger$ & 28.3/14.6 & 35.9 & 27.4 & 11.9 & 23.4 & 23.6 & 23.3 & 20.3 & 40.7 & 28.1 \\
CoE & 25.0/14.8 & 35.4 & 25.1 & 13.0 & 25.8 & 23.2 & 8.7 & 18.4 & 46.8 & 24.6 \\
CoT-Kinetics & 31.9/19.4 & 41.7 & 28.6 & 11.9 & 26.7 & 26.7 & 15.6 & 18.9 & 42.1 & 25.5 \\
Entropy & 34.8/20.7 & 47.3 & 33.9 & 17.2 & 32.7 & 31.1 & 16.5 & 23.1 & \textbf{47.5} & 29.0 \\
Self-Certainty & 30.9/11.9 & 35.7 & 26.1 & 14.4 & 23.4 & 23.7 & 15.6 & 20.2 & 42.9 & 26.2 \\
\rowcolor{Gray}\textbf{PivotTrace (ours)} & \textbf{35.9}/\textbf{21.9} & \textbf{55.5} & \textbf{40.2} & \textbf{17.7} & \textbf{34.0} & \textbf{34.2} & \textbf{26.9} & \textbf{23.4} & 46.3 & \textbf{32.2} \\
\bottomrule
\end{tabular}%
}
\end{table*}

\subsection{Extend PivotTrace to More Models}
\label{app:more_model}
We further investigate the applicability of PivotTrace across diverse model architectures and scales by selecting data and training with Deepseek-R1-Distill-Qwen-1.5B and Deepseek-R1-Distill-Llama-8B, which represent distinct architectural families (Qwen~\citep{yang2024qwen2} vs. Llama~\citep{grattafiori2024llama}) and parameter scales (1.5B vs. 8B).

The data selection and training protocols for these two models closely follow those described in Section~\ref{sec:setup} and Appendix~\ref{app:exp_setup}, with two minor modifications. 
First, we observe that both models generate substantially longer responses compared to Qwen3-4B-Base; accordingly, we increase the maximum response length during both training and inference from 4,096 to 6,144 tokens.
Second, due to computational resource constraints, we randomly sample a subset of 4,000 examples from the DAPO-Math-14k dataset for all experiments involving these models.

The experimental results for Deepseek-R1-Distill-Llama-8B are presented in Table~\ref{tab:results_llama}. 
PivotTrace outperforms the strongest baseline by $3.1\%$ in average ID accuracy and $1.9\%$ in average OOD accuracy. 
For Deepseek-R1-Distill-Qwen-1.5B, results are shown in Table~\ref{tab:results_qwen2}. 
Since this model has already been distilled with strong reasoning capabilities, the performance gap between different methods is relatively narrow.
Nonetheless, PivotTrace still achieves consistent gains, improving upon the best baseline by $0.3\%$ on ID and $1.0\%$ on OOD benchmarks. 
Notably, the larger relative improvement on OOD data suggests that PivotTrace enhances generalization even in settings where absolute performance is saturated.
These results demonstrate that PivotTrace consistently improves reasoning performance across diverse model architectures and scales by enabling more effective data selection.

\begin{figure*}[t]
    \centering
    \begin{subfigure}[t]{0.29\textwidth}
        \centering
        \includegraphics[width=\textwidth]{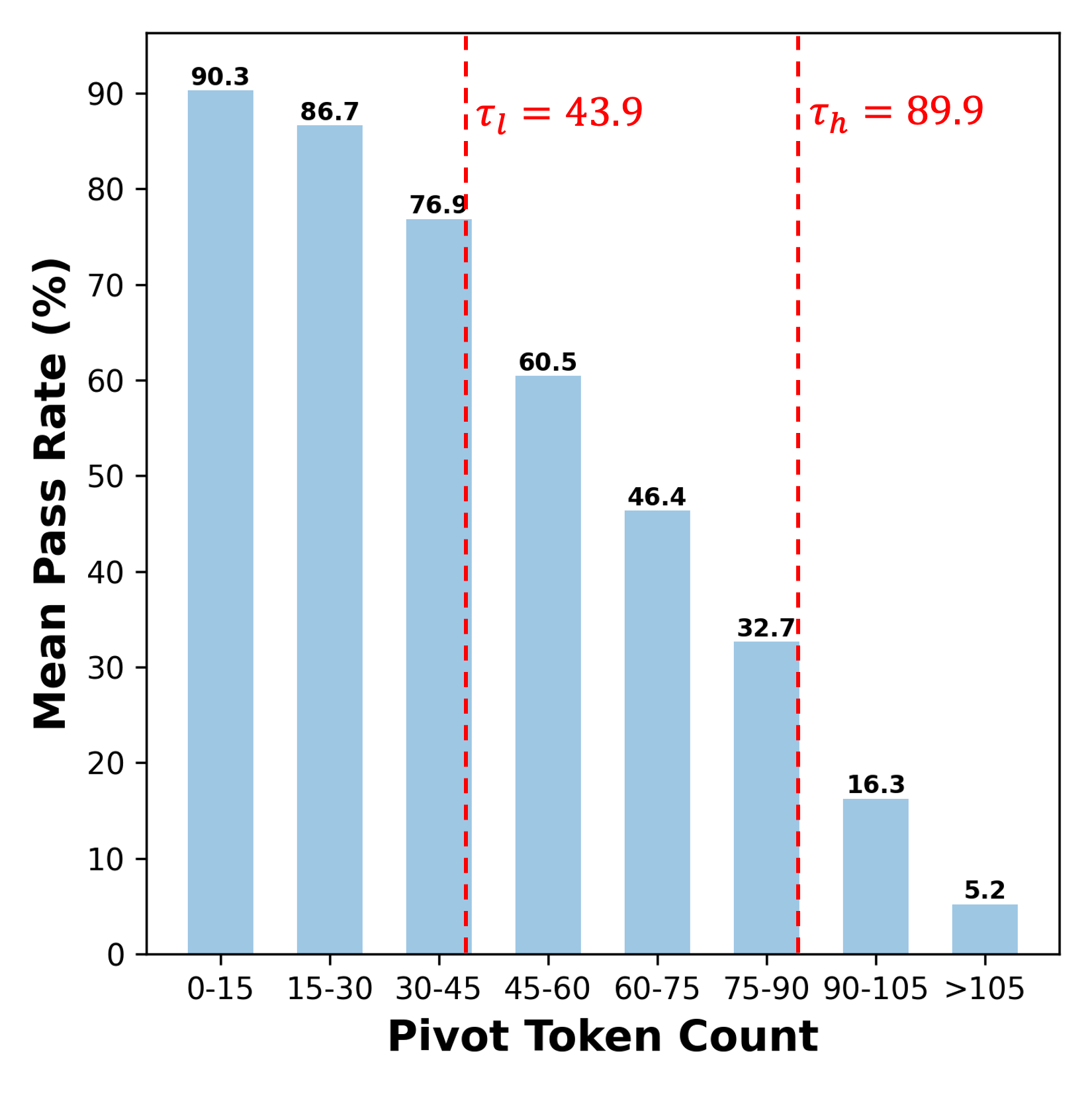}
        \caption{\small Performance vs. pivot count}
        \label{fig:pivot_acc}
    \end{subfigure}
    \hfill
    \begin{subfigure}[t]{0.348\textwidth}
        \centering
        \includegraphics[width=\textwidth]{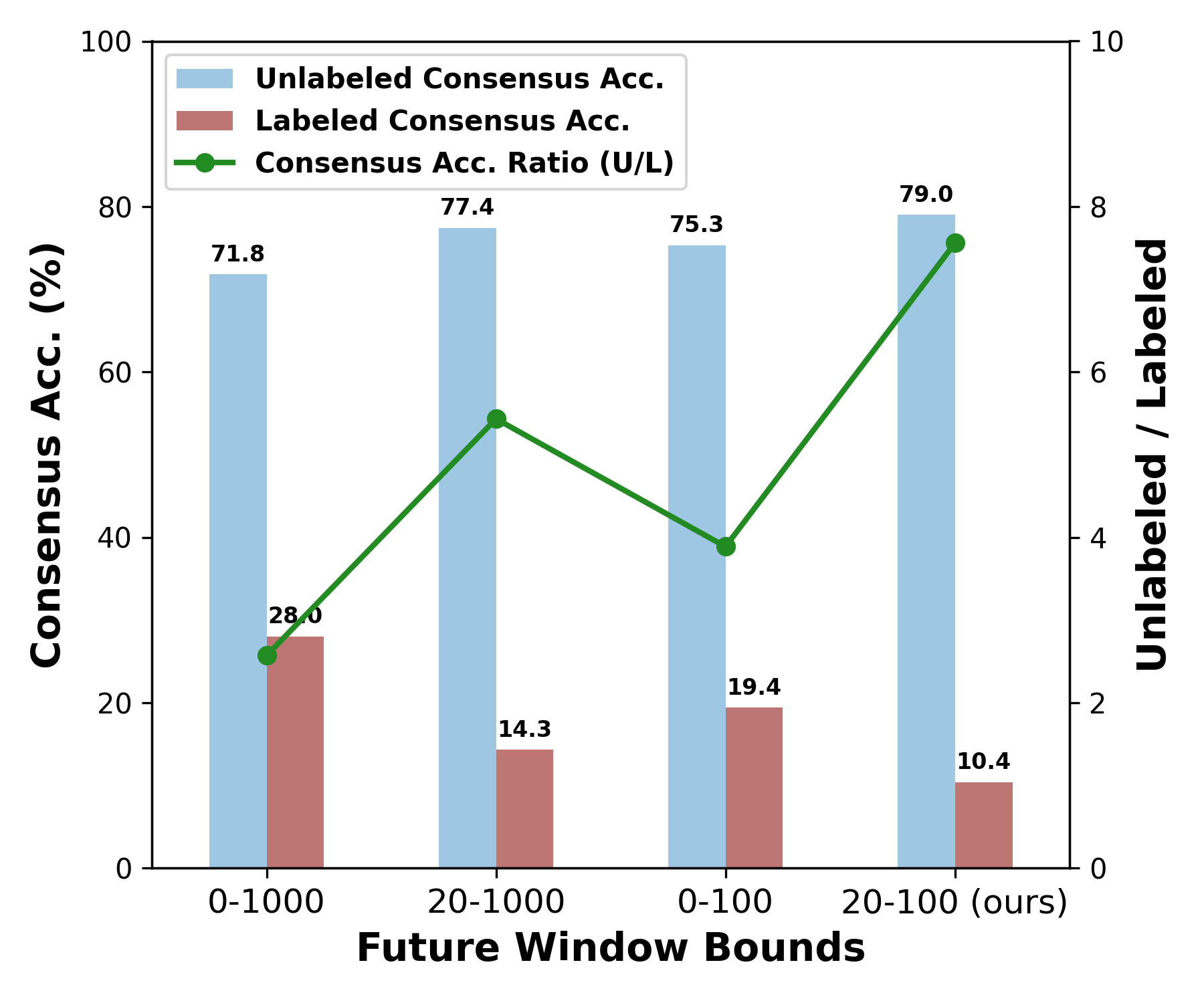}
        \caption{\small Ablation on future window bounds}
        \label{fig:future_window}
    \end{subfigure}
    \hfill
    \begin{subfigure}[t]{0.345\textwidth}
        \centering
        \includegraphics[width=\textwidth]{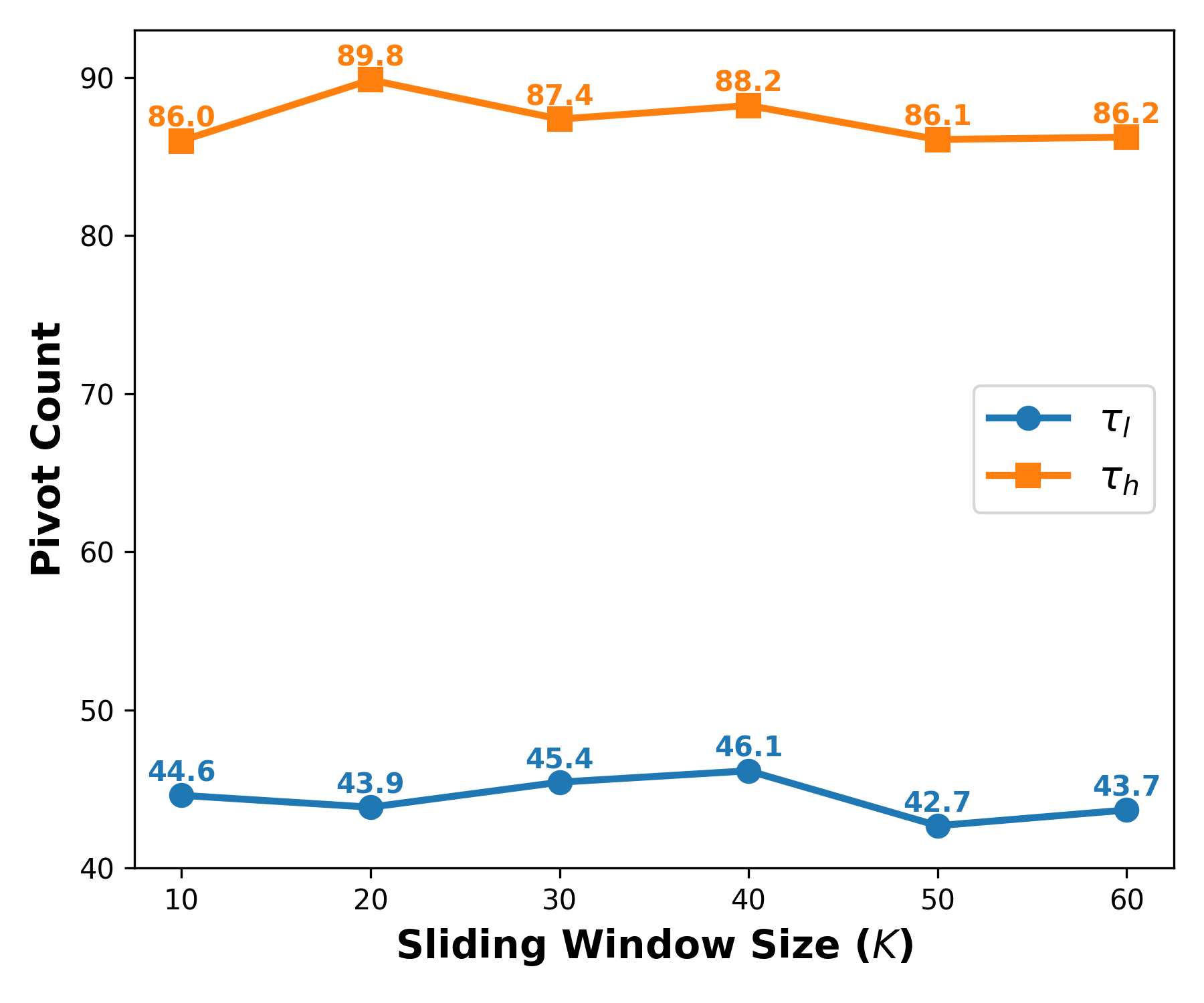}
        \caption{\small Ablation on sliding window size}
        \label{fig:slide_ablation}
    \end{subfigure}
    
    \caption{\small (a) Model performance consistently declines as the number of metacognitive pivots increases.
    (b) The effectiveness of pivot-based selection degrades when the future window is not properly constrained.
    (c) The thresholds $\tau_l$ and $\tau_h$ remains relatively stable across different sliding window sizes.}
\end{figure*}

\subsection{Pivot Count Inversely Signals Expected Correctness.}
To further probe the relationship between metacognitive pivots and expected correctness, we generate $8$ reasoning trajectories per question in DAPO-Math-14k and compute the pass rate alongside the average pivot count.
As shown in Figure~\ref{fig:pivot_acc}, higher pivot count consistently correlates with lower pass rates, demonstrating that pivot count quantifies reasoning uncertainty and reliably reflects expected correctness. 
Moreover, samples with pivot count $|\bm{p}|<\tau_l$ exhibit pass rates above $75\%$, suggesting they are well-mastered; 
while those with $|\bm{p}|>\!\tau_h$ fall below $25\%$, reflecting high error rates and substantial noise risk under self-supervision.
These results validate that the dynamically derived thresholds $\tau_l$ and $\tau_h$ effectively support our three-way data triage.

\begin{table*}[!t]
\centering
\caption{\small In-domain (ID) and out-of-domain (OOD) performance using Deepseek-R1-Distill-Qwen-1.5B. \textbf{Bold} denotes the best results and $\dagger$ denotes methods requiring multiple stochastic inferences.}
\label{tab:results_qwen2}
\setlength{\tabcolsep}{3pt}  
\renewcommand{\arraystretch}{1.2} 
\resizebox{\textwidth}{!}{%
\begin{tabular}{lcccccc|cccc}
\toprule
\multirow{2}{*}{\textbf{Methods}} & \multicolumn{6}{c}{\textbf{In-Domain Performance}} & \multicolumn{4}{c}{\textbf{Out-of-Domain Performance}} \\
\cmidrule(lr){2-7} \cmidrule(lr){8-11}
 & \textbf{AIME 24/25} & \textbf{AMC} & \textbf{MATH-500} & \textbf{Minerva} & \textbf{Olympiad} & \textbf{Avg.} & \textbf{ARC-c} & \textbf{GPQA}$^{*}$ & \textbf{MMLU-Pro} & \textbf{Avg.} \\
\midrule
Random & 26.1/22.1 & 61.3 & 83.0 & 32.0 & 46.7 & 45.2 & 36.1 & 27.7 & 29.2 & 31.0 \\
Consistency$^\dagger$ & 29.0/22.3 & 63.4 & \textbf{84.9} & \textbf{33.8} & 45.2 & 46.4 & 40.8 & 27.8 & \textbf{31.8} & 33.5 \\
CoE & 28.2/23.9 & 65.0 & 84.8 & 32.3 & 46.8 & 46.8 & 36.7 & 28.8 & 29.6 & 31.7 \\
CoT-Kinetics & 26.7/22.7 & 64.8 & 82.6 & 29.7 & 45.7 & 45.3 & 32.3 & 27.7 & 28.9 & 29.6 \\
Entropy & 28.3/22.2 & 64.8 & 84.1 & 33.3 & 46.3 & 46.5 & 41.7 & \textbf{29.4} & 31.2 & 34.1 \\
Self-Certainty & 28.9/23.0 & 62.6 & 82.8 & 29.3 & 45.8 & 45.4 & 32.7 & 26.1 & 28.0 & 28.9 \\
\rowcolor{Gray}\textbf{PivotTrace (ours)} & \textbf{29.5}/\textbf{24.5} & \textbf{65.1} & 83.8 & 32.4 & \textbf{47.0} & \textbf{47.1} & \textbf{44.7} & 29.0 & 31.6 & \textbf{35.1} \\
\bottomrule
\end{tabular}%
}
\end{table*}

\subsection{Hyperparameter Sensitivity Analysis}
We conduct a comprehensive sensitivity analysis to evaluate how key design choices and hyperparameters in PivotTrace affect pivot token detection, data selection quality, and model performance.

\paragraph{Future Window Bounds ($d_{\text{min}}$, $d_{\text{max}}$).}
We ablate four configurations of the future window: ($0$, $1000$), ($20$, $1000$), ($0$, $100$), and ($20$, $100$), which respectively correspond to no bounds, a lower bound only, an upper bound only, and both bounds applied. 
As shown in Figure~\ref{fig:future_window}, the ratio of unlabeled to labeled consensus accuracy is highly sensitive to these choices. 
The unbounded setting ($0$, $1000$) achieves the lowest ratio of $2.57$. 
Imposing only a lower bound ($20$, $1000$) or only an upper bound ($0$, $100$) yields moderate improvements with ratios of $5.43$ and $3.89$, respectively. 
In contrast, the jointly bounded window ($20$, $100$) attains a substantially higher ratio of $7.56$. 

This demonstrates that effective pivot identification requires simultaneous control over both ends of the future window: without a minimum offset, the attention mechanism captures predominantly short-range patterns rather than long-range signals; without a maximum horizon, attention is diluted over irrelevant future tokens. 
Only the balanced interval ($20$, $100$) enables robust identification of pivot tokens for active data selection.

\begin{figure*}[t]
    \centering
    \begin{subfigure}[t]{0.32\textwidth}
        \centering
        \includegraphics[width=\textwidth]{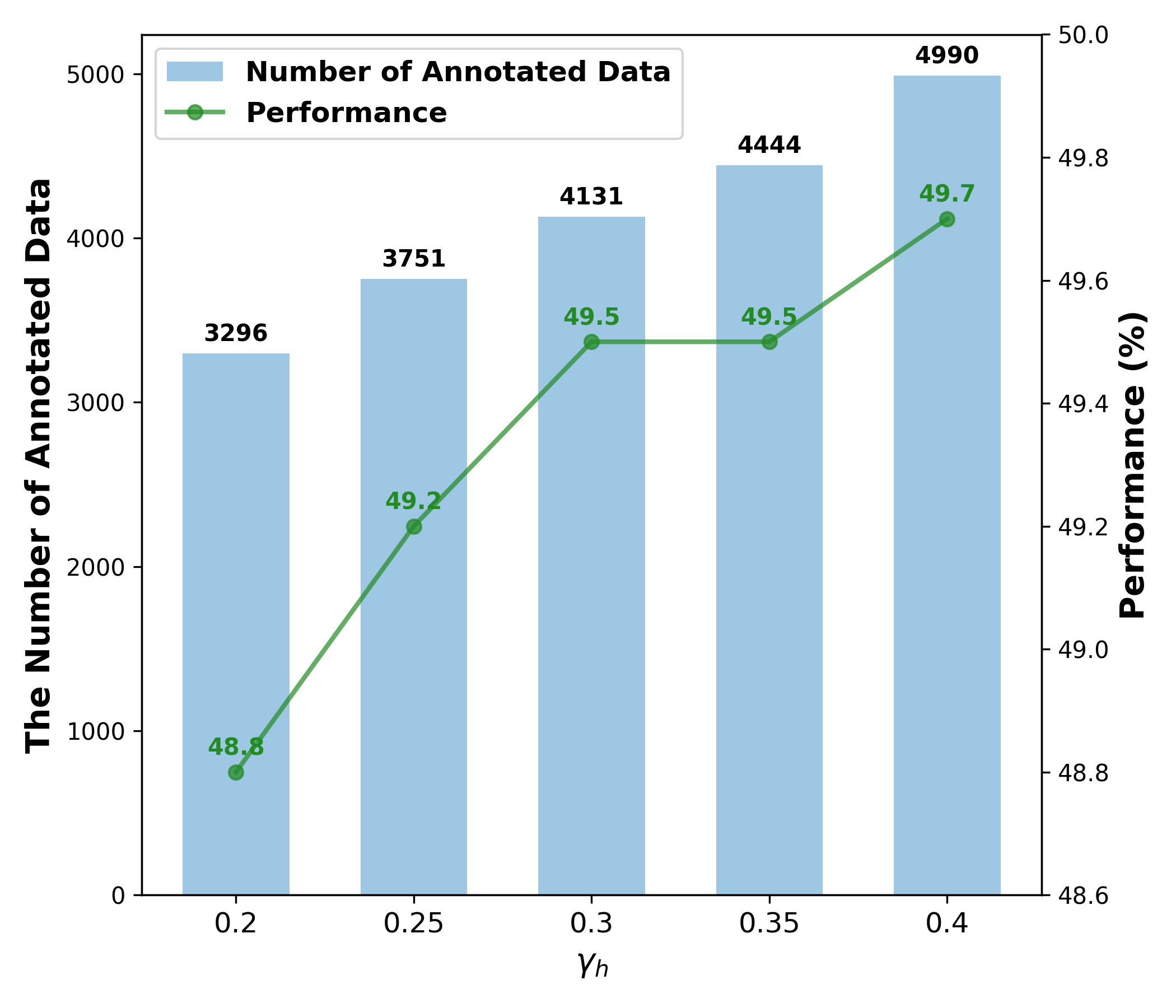}
        \caption{\small Ablation on threshold $\gamma_h$}
        \label{fig:gamma_h_ablation}
    \end{subfigure}
    \hfill
    \begin{subfigure}[t]{0.32\textwidth}
        \centering
        \includegraphics[width=\textwidth]{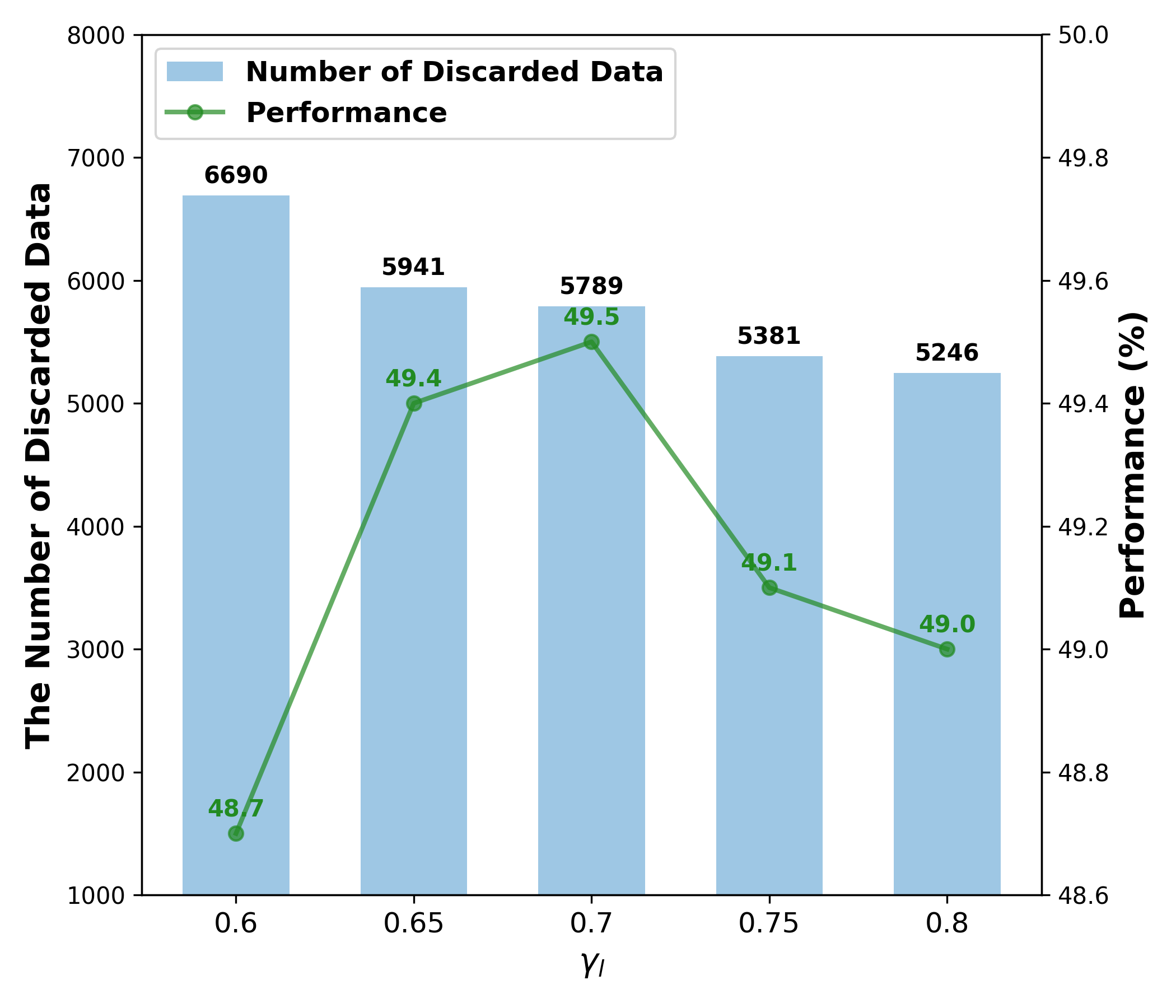}
        \caption{\small Ablation on threshold $\gamma_l$}
        \label{fig:gamma_l_ablation}
    \end{subfigure}
    \hfill
    \begin{subfigure}[t]{0.32\textwidth}
        \centering
        \includegraphics[width=\textwidth]{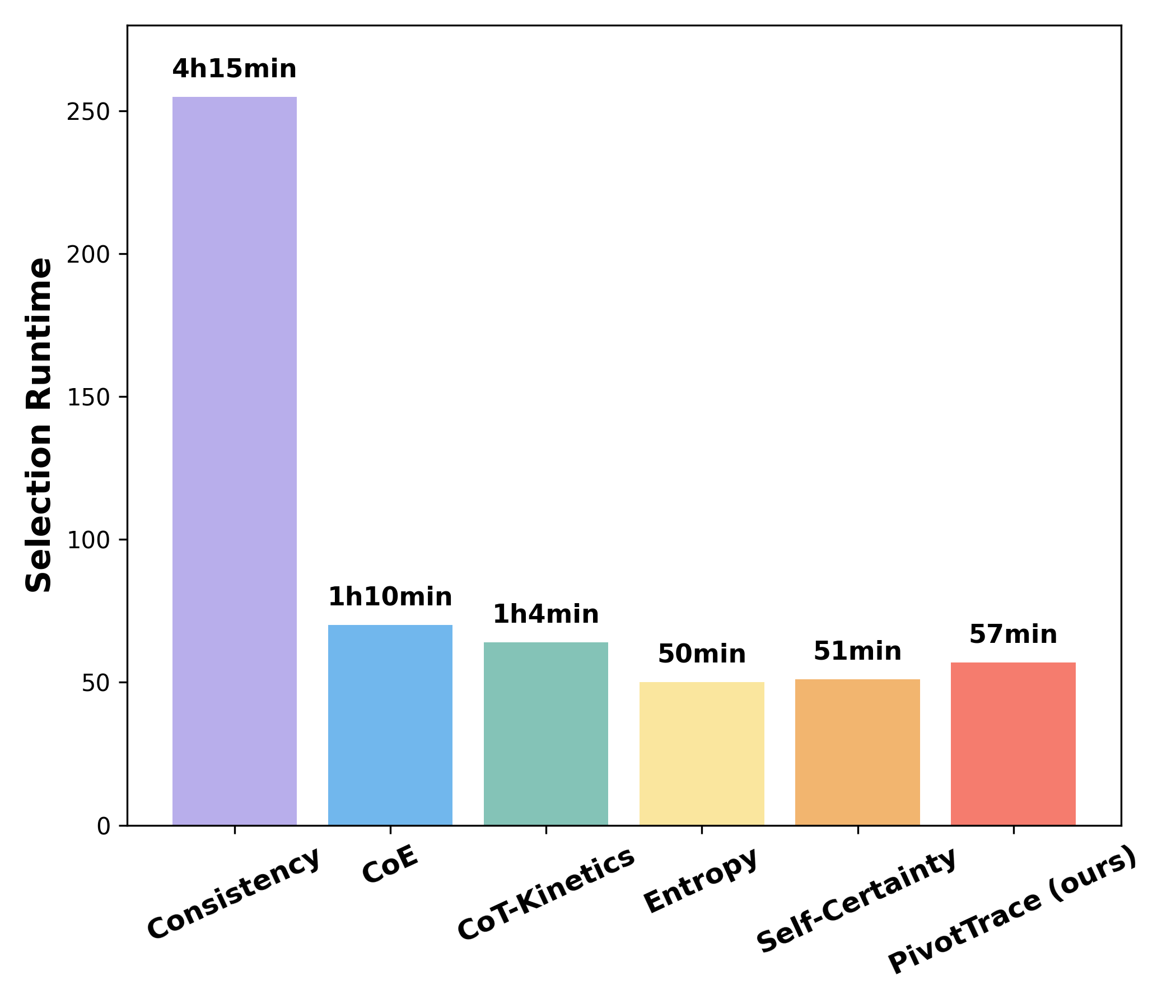}
        \caption{\small Comparison of selection runtime}
        \label{fig:selection_runtime}
    \end{subfigure}
    
    \caption{\small (a) As $\gamma_h$ increases, both the number of annotated samples and model performance rise. 
    (b) As $\gamma_l$ increases, fewer samples are discarded (i.e., more are retained for training), yet performance peaks at $\gamma_l = 0.7$ and slightly declines thereafter. 
    (c) PivotTrace completes selection in just $57$ minutes, which is comparable to lightweight heuristics like entropy.}
\end{figure*}

\paragraph{Peak Detection Parameters ($\zeta$, $\psi$, $\Delta$).}
To evaluate the sensitivity of our peak detection module to its key hyperparameters, we conduct an ablation study by varying one parameter at a time while fixing the other two to their default values ($\zeta$: $95$th percentile, $\psi$: $5\%$ of dynamic range, $\Delta = 10$). 
For each configuration, we compute the Jaccard similarity between the resulting set of selected labeled samples and that obtained under the default setting, as a measure of stability in data selection.

The results in Table~\ref{tab:peak_ablation} show that our method is largely robust to moderate changes in all three parameters. 
Varying $\zeta$ from the $80$th to the $90$th percentile yields Jaccard similarities above $0.936$, indicating consistent pivot selection. 
Similarly, $\psi = 10\%$ achieves the highest similarity to the default configuration ($0.966$), but similarity drops notably at $\psi = 20\%$ ($0.812$), indicating that excessively high amplitude thresholds may discard pivot tokens.
In contrast, $\Delta$ exhibits remarkable insensitivity: even doubling $\Delta$ from $10$ to $20$ only reduces similarity slightly (from $0.951$ to $0.941$).
These results demonstrate that our peak detection mechanism is stable under reasonable hyperparameter perturbations.

\begin{table}[!t]
\centering
\small
\caption{\small \small Ablation on peak detection hyperparameters. Jaccard similarity measures the overlap between labeled sample sets selected under varied vs. default configurations ($\zeta$: 95th percentile, $\psi$: 5\% dynamic range, $\Delta = 10$). In each case, two hyperparameters remain fixed while the third is varied.}
\label{tab:peak_ablation}
\begin{tabular}{@{}ccc@{}}
\toprule
\multicolumn{3}{c}{\textbf{Jaccard Similarity w.r.t. Default}} \\
\midrule
\textbf{Varying $\zeta$} & \textbf{Varying $\psi$} & \textbf{Varying $\Delta$} \\
(fixed: $\psi=5\%$, $\Delta=10$) & (fixed: $\zeta=95^{\text{th}}$, $\Delta=10$) & (fixed: $\zeta=95^{\text{th}}$, $\psi=5\%$) \\
\midrule
\begin{tabular}{@{}lc@{}}
$\zeta = 90^{\text{th}}$ & 0.947 \\
$\zeta = 85^{\text{th}}$ & 0.940 \\
$\zeta = 80^{\text{th}}$ & 0.936 \\
\end{tabular} &
\begin{tabular}{@{}lc@{}}
$\psi = 10\%$ & 0.966 \\
$\psi = 15\%$ & 0.934 \\
$\psi = 20\%$ & 0.812 \\
\end{tabular} &
\begin{tabular}{@{}lc@{}}
$\Delta = 5$  & 0.951 \\
$\Delta = 15$ & 0.951 \\
$\Delta = 20$ & 0.941 \\
\end{tabular} \\
\bottomrule
\end{tabular}
\end{table}

\begin{figure*}[ht]
    \centering
    \includegraphics[width=0.7\textwidth]{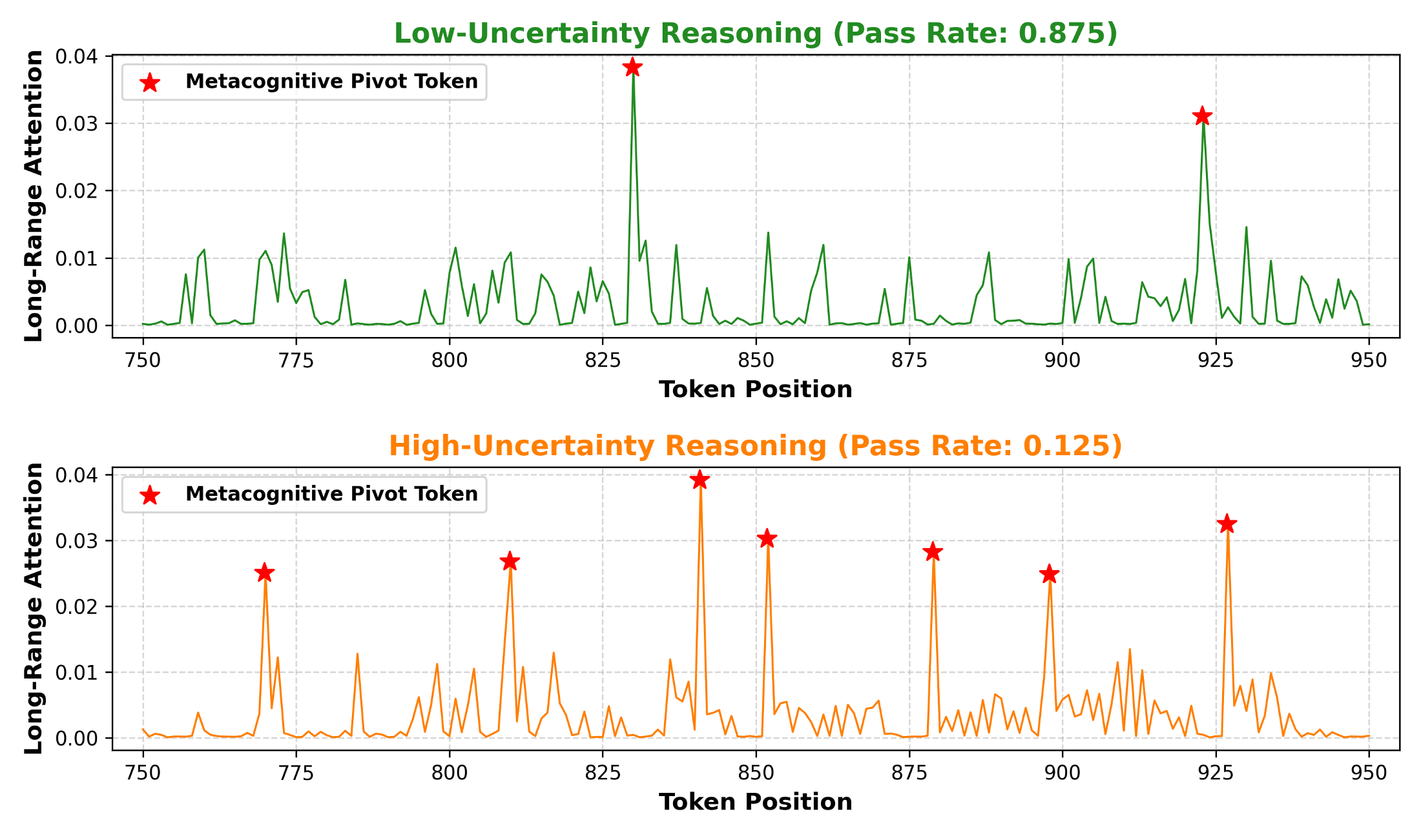}
    \caption{Long-range attention received per token in \textcolor[HTML]{228B22}{low-uncertainty} and \textcolor[HTML]{FF7F00}{high-uncertainty} reasoning segments (same length), with peaks (marked by \textcolor{red}{red stars}) indicating \emph{metacognitive pivots}.}
    \label{fig:length}
    \vskip -0.08in
\end{figure*}

\paragraph{Sliding Window Size ($K$).}
To assess the sensitivity of our method to the sliding window size, we evaluate configurations with $K \in \{10, 20, 30, 40, 50, 60\}$. As shown in Figure~\ref{fig:slide_ablation}, the resulting lower threshold $\tau_l$ and upper threshold $\tau_h$ remain stable across all settings: $\tau_l$ varies only between $42.7$ and $46.1$, while $\tau_h$ ranges from $86.0$ to $89.8$. The narrow variation indicates that our dynamic thresholding mechanism is robust to the choice of window size.

\paragraph{Accuracy Thresholds ($\gamma_l$, $\gamma_h$).}
The hyperparameter $\gamma_h$ determines which samples are selected for annotation. As shown in Figure~\ref{fig:gamma_h_ablation}, both the number of annotated samples and model performance increase monotonically with $\gamma_h$. 
Notably, when $\gamma_h < 0.3$, performance improves rapidly with additional annotations, suggesting that many informative, high-uncertainty samples remain unlabeled. 
In contrast, for $\gamma_h > 0.3$, the marginal gain in performance diminishes significantly despite a continued rise in annotation cost, indicating that the additional annotations provide limited utility.

Conversely, $\gamma_l$ controls the filtering of low-quality samples. 
Figure~\ref{fig:gamma_l_ablation} reveals that model performance peaks at $\gamma_l = 0.7$. 
When $\gamma_l < 0.7$, performance degrades because overly lenient filtering discards a significant number of high-utility samples that would otherwise benefit learning. Conversely, when $\gamma_l > 0.7$, performance slightly declines, likely due to the inclusion of low-utility samples which not only harm model accuracy by diluting the effective signal but also reduce training efficiency.

Based on these observations, we adopt $\gamma_h = 0.3$ and $\gamma_l = 0.7$ as our default settings, striking a balance between annotation efficiency, training efficiency, and overall performance.

\paragraph{Probing size $N$.}
To investigate the sensitivity of our method to the scale of the probing data, we conduct an ablation study on the probing set size $N$. As illustrated in Figure~\ref{fig:prob_size}, when $N$ varies from $50$ to $200$ (with an interval of $50$), the thresholds $\tau_l$ and $\tau_h$ generated by our automated calibration remain remarkably stable. Specifically, $\tau_l$ fluctuates within a narrow margin of [$43.2$, $46.2$], while $\tau_h$ stays consistently between $87.3$ and $90.2$. In both cases, the total variation does not exceed $3.0$, demonstrating that our dynamic calibration approach is robust and does not necessitate large probing sets to achieve reliable performance.

\subsection{Comparison of Runtime for Sample Selection}
\label{app:runtime}
The dominant cost in the sample selection process stems from generating model responses. 
Among all methods, \textit{Consistency} incurs the highest computational overhead, as it requires sampling multiple reasoning paths per question to estimate answer consistency, resulting in a selection runtime that is nearly \textbf{4}$\times$ longer than that of other baselines ($255$ vs. $\sim 50$--$70$ minutes). 
In contrast, the remaining approaches require only a single model response per question, from which uncertainty scores can be computed directly, resulting in substantially lower time costs.
Notably, \textit{PivotTrace} achieves a selection runtime of just $57$ minutes, which is faster than representation-based methods (\textit{CoE} and \textit{CoT-Kinetics}) that rely on expensive hidden-state computations, and closely approaches the efficiency of lightweight output-probability-based heuristics (\textit{Entropy} and \textit{Self-Certainty}). 
This demonstrates that \textit{PivotTrace} strikes an effective balance between selection sophistication and computational practicality, enabling scalable active labeling without sacrificing speed.

\begin{table*}[!t]
\centering
\caption{\small Performance comparison with length-based selection under the same annotation budget.}
\label{tab:compare_length}
\setlength{\tabcolsep}{5pt}  
\renewcommand{\arraystretch}{1.2} 
\resizebox{\textwidth}{!}{%
\begin{tabular}{lcccccc|cccc}
\toprule
\multirow{2}{*}{\textbf{Methods}} & \multicolumn{6}{c}{\textbf{In-Domain Performance}} & \multicolumn{4}{c}{\textbf{Out-of-Domain Performance}} \\
\cmidrule(lr){2-7} \cmidrule(lr){8-11}
 & \textbf{AIME 24/25} & \textbf{AMC} & \textbf{MATH-500} & \textbf{Minerva} & \textbf{Olympiad} & \textbf{Avg.} & \textbf{ARC-c} & \textbf{GPQA}$^{*}$ & \textbf{MMLU-Pro} & \textbf{Avg.} \\
\midrule
Length & 24.1/22.4 & 59.4 & 85.8 & 43.1 & 48.7 & 47.2 & 80.7 & 36.0 & 62.3 & 59.7 \\
\midrule
\rowcolor{Gray}\textbf{PivotTrace (ours)} & \textbf{27.2}/\textbf{25.2} & \textbf{62.4} & \textbf{87.3} & \textbf{44.6} & \textbf{50.1} & \textbf{49.5} & \textbf{93.0} & \textbf{38.3} & \textbf{63.5} & \textbf{64.9} \\
\bottomrule
\end{tabular}%
}
\end{table*}

\subsection{Analysis on Potential Length Bias}
\label{app:len_bias}
A potential concern regarding PivotTrace is that it might simply exhibit a bias toward selecting samples with longer response lengths. We clarify that a higher pivot count does not inherently equate to greater verbosity. While a long response may represent a successful, single-pass derivation, a shorter one might involve multiple intensive trial-and-error cycles. As demonstrated in Figure~\ref{fig:length}, when examining reasoning segments of the same length, high-uncertainty reasoning generates remarkably more pivot tokens than low-uncertainty reasoning. This confirms that pivot frequency is a reflection of the model's internal uncertainty and cognitive shifts rather than a mere byproduct of response length.

To further address this concern, we conducted a quantitative correlation analysis and a baseline comparison. The Pearson correlation between pivot count and response length is $r = 0.54$, which indicates only a moderate relationship and proves that the two are not strongly coupled. More importantly, as shown in Table~\ref{tab:compare_length}, when using response length itself as a selection metric, the performance is significantly inferior to PivotTrace, trailing by $2.3\%$ in ID Average and $5.2\%$ in OOD Average. These results provide strong evidence that response length alone is an ineffective selection metric and that the effectiveness of PivotTrace stems from its ability to capture meaningful reasoning transitions that length-based heuristics overlook. Consequently, we conclude that PivotTrace identifies high-quality reasoning based on its structural complexity rather than simple length.

\subsection{Generalization to Data-Scarce Specialized Domains}
\begin{wraptable}{r}{0.58\textwidth}
    \vspace{-12pt}
    \centering
    \small
    \caption{\small Performance comparison on data-scarce domains.}
    \label{tab:open_domain}
    \setlength{\tabcolsep}{3pt}
    \begin{tabular}{lcccc}
    \toprule
    \textbf{Methods} & \textbf{FinQA} & \textbf{HiTab} & \textbf{MultiHiertt} & \textbf{Avg.} \\
    \midrule
    Random & 19.4 & 58.2 & 23.2 & 33.6 \\
    Consistency & 19.5 & 61.3 & 23.8 & 34.9 \\
    CoE & 20.1 & 60.3 & 23.5 & 34.6 \\
    CoT-Kinetics & 18.7 & 61.9 & 23.2 & 34.6 \\
    Entropy & 19.2 & 61.5 & 24.1 & 34.9 \\
    Self-Certainty & 19.3 & 59.5 & 23.2 & 34.0 \\
    \midrule
    \rowcolor{Gray}\textbf{PivotTrace (ours)} & \textbf{20.4} & \textbf{64.8} & \textbf{25.6} & \textbf{36.9} \\
    \bottomrule
    \end{tabular}
\end{wraptable}

To further evaluate the robustness of PivotTrace and address its performance in scenarios where ground-truth labels are genuinely scarce, we extend our evaluation to complex tabular reasoning tasks within the finance and healthcare sectors. 
Unlike general math benchmarks, datasets such as FinQA~\citep{chen2021finqa}, HiTab~\citep{cheng2022hitab}, and MultiHiertt~\citep{zhao2022multihiertt} represent specialized domains where expert-annotated rationales are difficult and costly to obtain. 
As shown in Table~\ref{tab:open_domain}, our method consistently outperforms all competitive baselines, achieving an average improvement of $+2.0\%$ across these benchmarks. 
These results demonstrate that PivotTrace effectively reduces the reliance on dense ground-truth supervision by successfully leveraging its internal verification mechanism, proving its utility not only in ``gold-standard'' mathematical reasoning but also in open-domain, data-constrained professional environments.

\subsection{Case Study}
To better illustrate the relationship between pivot count and question difficulty, we conducted a qualitative case study (see Figure~\ref{fig:case_study_1} and Figure~\ref{fig:case_study_2}). 
Our analysis reveals a strong correlation: questions with the highest pivot count consistently involve advanced mathematical concepts, multi-step reasoning, and abstract problem-solving strategies, often requiring integration of knowledge across domains. 
These problems are particularly challenging for the model, yet mastering them yields the greatest gains in its reasoning capabilities, making them high-value targets for training. 
In contrast, questions with the lowest pivot count are typically straightforward, relying on basic operations, direct application of formulas, or simple logical inference with minimal cognitive overhead.
Such problems are already well within the model’s competence, and applying RLVR training to them offers little to no benefit.
This qualitative observation supports our hypothesis that pivot count serves as a meaningful proxy for the intrinsic difficulty of a problem from the model’s perspective.

\begin{figure*}[t]
    \centering
    \begin{subfigure}[t]{0.32\textwidth}
        \centering
        \includegraphics[width=\textwidth]{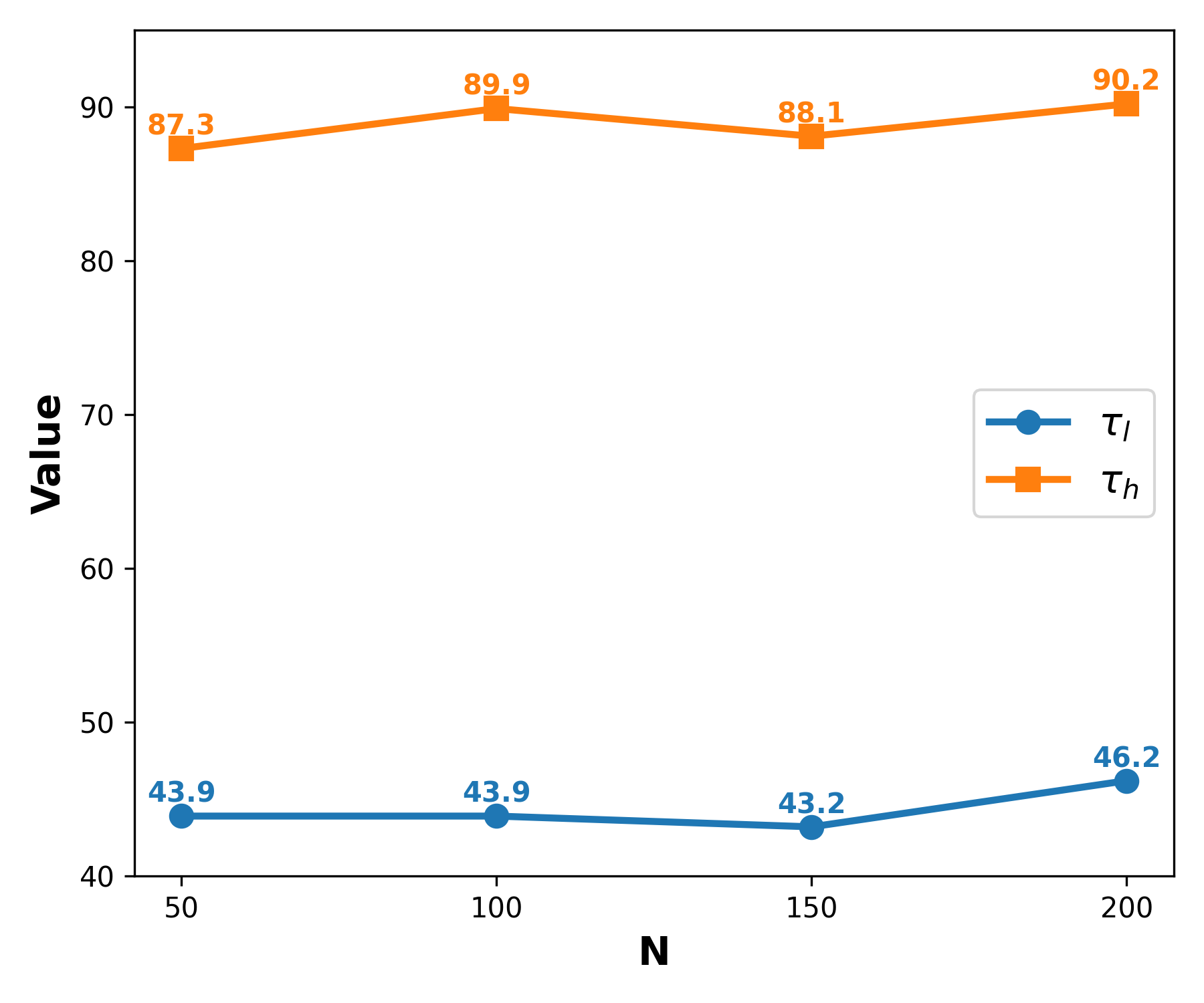}
        \caption{\small Ablation on probing size $N$}
        \label{fig:prob_size}
    \end{subfigure}
    \hfill
    \begin{subfigure}[t]{0.356\textwidth}
        \centering
        \includegraphics[width=\textwidth]{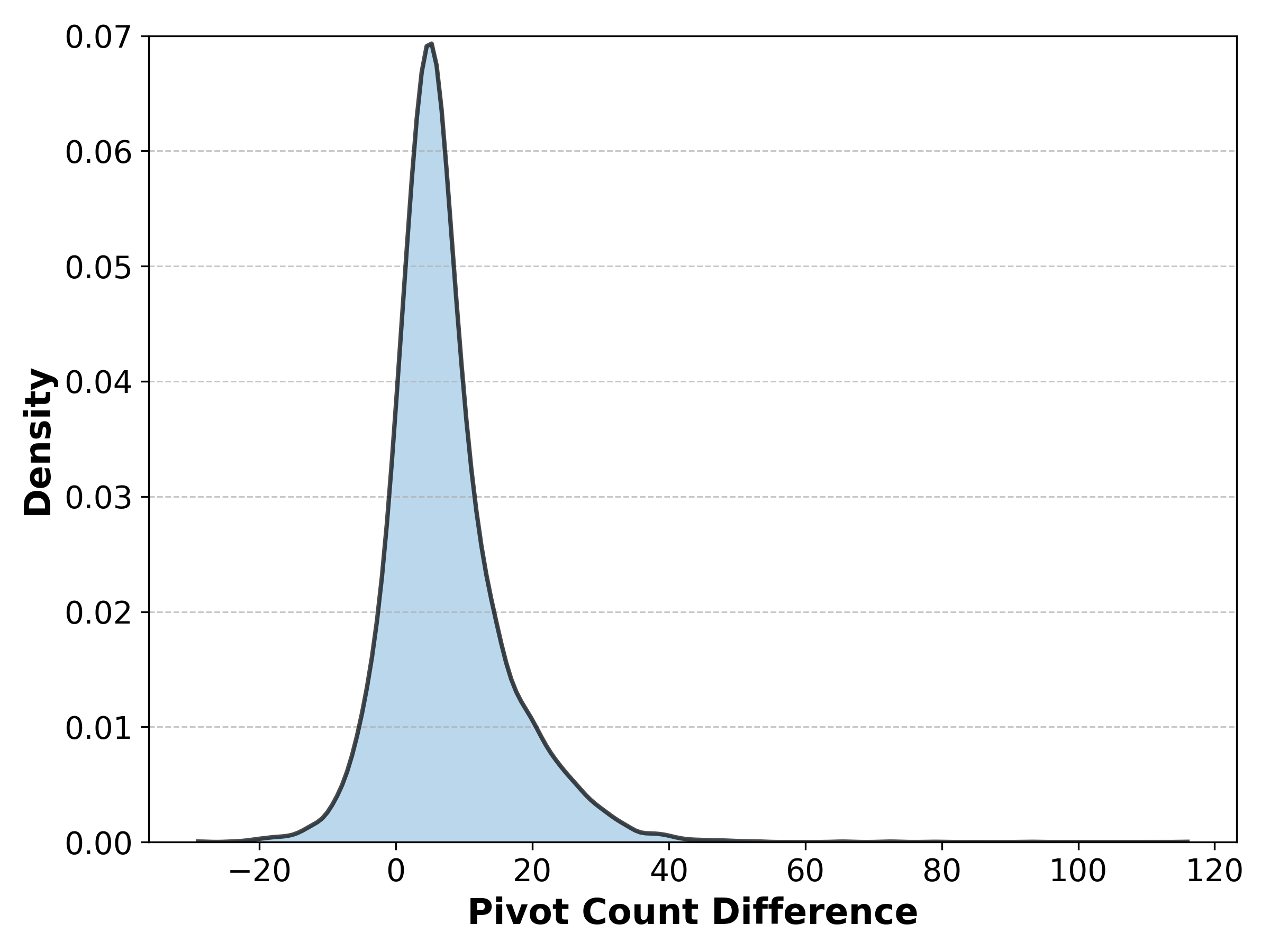}
        \caption{\small Distribution of pivot count difference}
        \label{fig:pivot_diff}
    \end{subfigure}
    \hfill
    \begin{subfigure}[t]{0.27\textwidth}
        \centering
        \includegraphics[width=\textwidth]{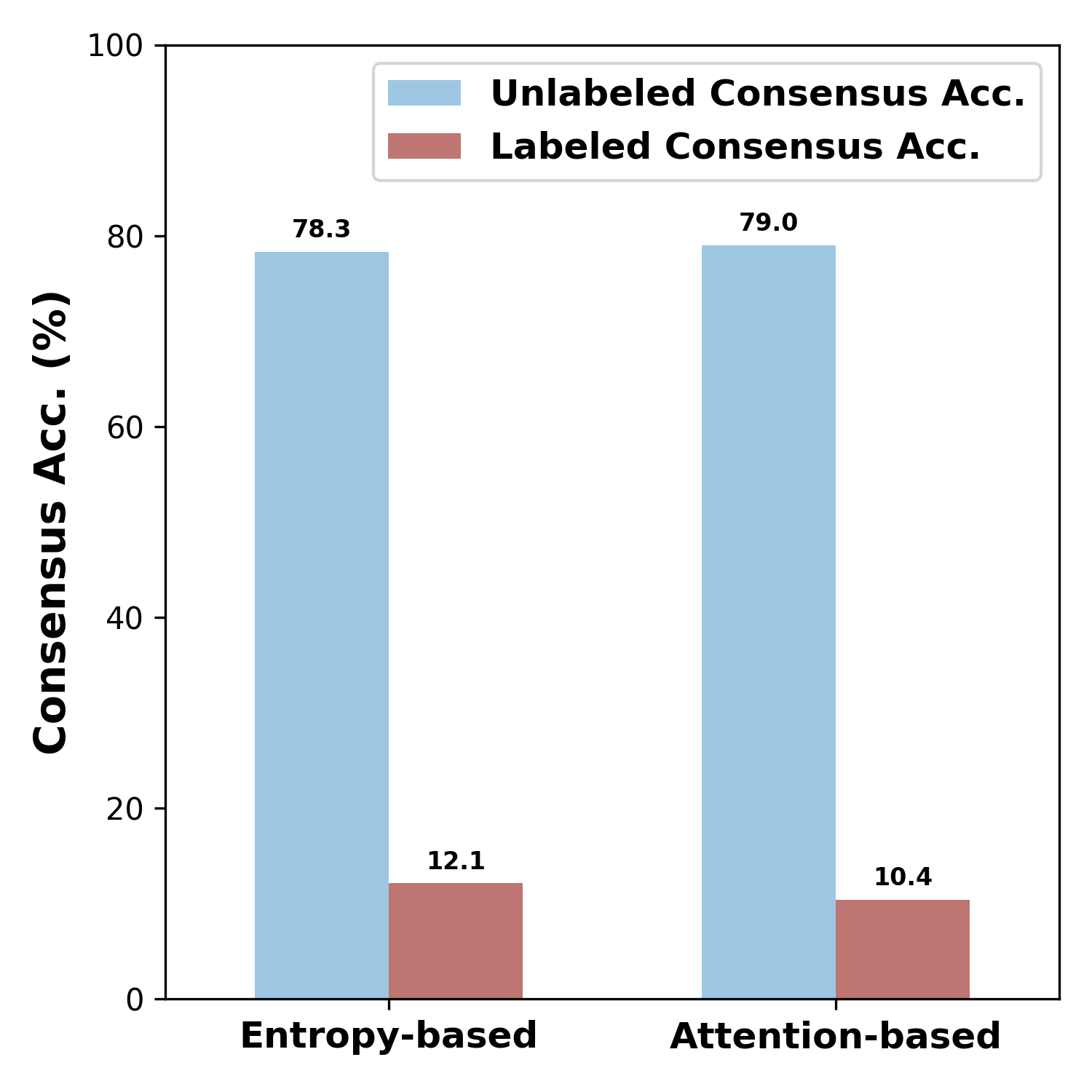}
        \caption{\small Key token identification}
        \label{fig:compare_entropy}
    \end{subfigure}
    
    \caption{\small (a) The dynamic thresholds $\tau_l$ and $\tau_h$ remain stable across varying probing size $N$.
    (b) Distribution of the difference in pivot token counts obtained by attention-based and entropy-based key token identification methods.
    (c) Comparison of data selection effectiveness when using attention-based versus entropy-based key token identification to estimate model uncertainty.}
\end{figure*}

\section{Additional Analysis and Discussion}
\subsection{Tension between Annotation and Training Efficiency}
\label{app:train_label}
There exists an inherent tension between annotation efficiency and training efficiency. 
Maximizing annotation efficiency entails focusing human labeling efforts on the model’s knowledge frontier—that is, samples for which the model exhibits high uncertainty or is most likely to err. 
Human annotation on such samples provides the highest marginal utility, as they lie beyond the model’s current capability and cannot be reliably supervised via self-generated signals. 
In contrast, samples that the model can already handle confidently contribute little additional value when labeled, making their annotation inefficient.

However, training exclusively on this highly informative yet limited set of labeled data leads to suboptimal performance. 
As shown in Figure~\ref{fig:trade_off}, when we train the RLVR model using only $29.3\%$ of the data that has been selectively annotated (i.e., those near the knowledge frontier), performance drops significantly to $47.6\%$. 
This degradation stems from two key factors: 
(1) the drastically reduced training sample size impairs the model’s generalization capacity; and 
(2) the intrinsic difficulty of these frontier samples results in sparse reward signals, since the model rarely generates correct rollouts, meaningful reinforcement feedback becomes scarce.

This observation necessitates incorporating unlabeled data to stabilize and improve training. 
Indeed, when we perform the semi-supervised RLVR with the labeled set and the remaining unlabeled data, performance recovers to $48.9\%$, nearly matching the fully supervised baseline ($49.2\%$). 
Yet this comes at a steep cost in training efficiency: convergence requires $320$ training steps. 
The slowdown arises because the unlabeled pool, while containing useful self-supervision, also includes a large proportion of low-utility samples, i.e., instances that are too easy, providing negligible learning signal.

To reconcile annotation efficiency with training efficiency, we propose a three-way data triage strategy that dynamically partitions the dataset into: 
(i) high-value labeled samples (on the knowledge frontier), 
(ii) high-utility unlabeled samples (amenable to reliable self-supervision), and 
(iii) low-utility unlabeled samples (discarded to avoid inefficiency). 
This approach achieves $49.5\%$ performance in just $160$ training steps, surpassing even the fully supervised baseline while more than halving the training time. 
Therefore, our triage mechanism enables the simultaneous maximization of both annotation and training efficiency, resolving the fundamental trade-off.

\subsection{Key Token Identification for Uncertainty-Aware Data Selection}
A growing body of work~\citep{wang2025beyond,li2025attention,cheng2025reasoning,cui2025entropy} has focused on identifying \textit{key tokens} in LLM reasoning trajectories, i.e., tokens deemed critical to the model’s decision-making process. 
These studies argue that such tokens should receive higher credit during RLVR training to enhance reasoning capabilities. 
Most existing approaches rely on \textit{entropy-based} analysis: they posit that tokens with high entropy are pivotal, as they reflect moments of high uncertainty where the model’s reasoning direction is being determined.

In contrast, we interpret the number of key tokens as an indicator of overall reasoning difficulty for data selection. 
More key tokens suggest greater internal uncertainty and less confident reasoning. 
To capture this effectively, we propose identifying \textit{metacognitive pivots} through attention patterns, which reveal how the model self-monitors its reasoning process.

For comparison, we also implement an entropy-based baseline that detects peaks in the entropy trajectory across the generated reasoning chain and uses the count of such high-entropy tokens to guide data selection. 
Interestingly, as shown in Figure~\ref{fig:pivot_diff}, the distribution of the difference in pivot token counts between the attention-based and entropy-based methods is tightly concentrated near zero—most differences fall within the range of $0$ to $20$, indicating a strong alignment between the two strategies in identifying critical reasoning steps. 

As illustrated in Figure~\ref{fig:compare_entropy}, the entropy-based strategy also performs remarkably well: it yields unlabeled data with a consensus accuracy of $78.3\%$ and labeled data with $12.1\%$, resulting in a ratio of $6.47$. Nevertheless, our attention-based approach achieves even stronger performance: $79.0\%$ vs. $10.4\%$, with a higher ratio of $7.60$, demonstrating that attention provides a more discriminative signal for quantifying model uncertainty in the context of active data selection. 
Consequently, we adopt the attention-based pivot token identification as the core mechanism in PivotTrace.

\begin{table*}[!t]
\centering
\caption{\small Performance comparison with CONST under the same annotation budget.}
\label{tab:compare_const}
\setlength{\tabcolsep}{5pt}  
\renewcommand{\arraystretch}{1.2} 
\resizebox{\textwidth}{!}{%
\begin{tabular}{lcccccc|cccc}
\toprule
\multirow{2}{*}{\textbf{Methods}} & \multicolumn{6}{c}{\textbf{In-Domain Performance}} & \multicolumn{4}{c}{\textbf{Out-of-Domain Performance}} \\
\cmidrule(lr){2-7} \cmidrule(lr){8-11}
 & \textbf{AIME 24/25} & \textbf{AMC} & \textbf{MATH-500} & \textbf{Minerva} & \textbf{Olympiad} & \textbf{Avg.} & \textbf{ARC-c} & \textbf{GPQA}$^{*}$ & \textbf{MMLU-Pro} & \textbf{Avg.} \\
\midrule
CONST & 24.4/23.5 & 58.5 & 85.6 & 43.9 & 48.2 & 47.4 & 88.0 & 29.8 & 62.4 & 60.1 \\
\midrule
\rowcolor{Gray}\textbf{PivotTrace (ours)} & \textbf{27.2}/\textbf{25.2} & \textbf{62.4} & \textbf{87.3} & \textbf{44.6} & \textbf{50.1} & \textbf{49.5} & \textbf{93.0} & \textbf{38.3} & \textbf{63.5} & \textbf{64.9} \\
\bottomrule
\end{tabular}%
}
\end{table*}

\subsection{Discussion on Related Attention-based Interpretability Works}
The exploration of attention patterns in LRMs has recently gained traction, particularly in the context of deciphering the internal mechanics of reasoning. 
Recent studies~\citep{bogdan2025thought,li2025attention,liu2025attention} have provided pioneering insights into how attention scores signify critical cognitive steps during reasoning. 
Specifically, \citet{liu2025attention} identifies ``massive attention values'' as indicators of pivotal reasoning behaviors (e.g., verification or planning) and utilizes these signals to guide tree-based branch exploration in reinforcement learning. 
Similarly, \citet{li2025attention} characterizes a rhythmic ``two-beat'' attention pattern where models alternate between local pre-planning and global anchoring, subsequently using these insights for fine-grained policy optimization. 
On a more macro scale, \citet{bogdan2025thought} elevates the analysis to the sentence level, identifying "receiver heads" that consistently focus on causal "anchors" such as plan generation and uncertainty management to validate the faithfulness of long-CoT reasoning.

While these works provide contributions to reasoning interpretability and on-policy optimization, their methods and derived metrics are fundamentally unsuited for the task of offline data selection. 
First, the primary objective of these works is to optimize the model’s behavior during the generation or RL process (e.g., deciding where to branch in a search tree or how to redistribute rewards). 
They assume the input data is already given and focus on maximizing the utility of the reasoning trace itself. 
In contrast, our work addresses the data efficiency bottleneck at a prior stage: identifying which unlabeled samples from a massive pool possess the highest learning value for subsequent training. 
Consequently, the signals used in interpretability—such as sequence-level causal links or simplistic attention differentials—lack the necessary robustness to serve as a reliable proxy for sample-level uncertainty across diverse, unseen datasets.

Furthermore, the metrics proposed in these studies are primarily optimized for specific architectural behaviors or granularities that do not naturally translate to stable data-ranking criteria. 
For instance, sentence-level causal analysis or fixed ``pre-plan'' rhythm detection relies on structural assumptions about the reasoning chain that may not hold across diverse scenarios. 
Unlike these interpretability-focused metrics, our proposed PivotTrace framework treats attention peaks as a statistically robust proxy for model uncertainty. 
To the best of our knowledge, we are the first to bridge the gap between intrinsic attention patterns and the data efficiency bottleneck, shifting the paradigm from ``identifying important reasoning steps'' to ``quantifying sample difficulty'' for strategic data selection. 
This conceptual and technical leap offers a principled methodology for LLM data curation that previous attention-based studies have yet to explore.

\subsection{Comparison with Concurrent Work: CONST}
Concurrently with our work, an independent study titled \textit{Sample Lottery: Unsupervised Discovery of Critical Instances in RLVR of LLMs} (termed \textbf{CONST}) also addresses data efficiency in RLVR under the same “pick in the dark” setting, where no ground-truth answers are available during the sample selection phase. 
The core idea of CONST is similar to ours—it seeks to identify a small set of "golden" samples from a large unlabeled pool for annotation and subsequent RLVR training. However, there are two key differences:

First, CONST uses only the annotated subset for training and discards the rest of the unlabeled data. In contrast, PivotTrace is inherently semi-supervised: it not only selects informative samples for labeling but also leverages the remaining high-utility unlabeled data with reliable self-supervision signals, making far more effective use of the available data.

Second, the sample selection of CONST requires generating $40$ model responses per question ($20$ for procedural volatility and $20$ for outcome volatility), which dominates the overall cost as we discuss in Appendix~\ref{app:runtime}. 
This makes its selection process $\sim40\times$ slower than ours, which needs just one response per sample. In practice, this overhead can rival or even exceed the cost of RLVR training itself, severely limiting its scalability to large unlabeled pools.

We reproduced CONST under the same annotation budget ($\sim4$k samples) for a fair comparison. 
As illustrated in Table~\ref{tab:compare_const}, PivotTrace outperforms CONST across all benchmarks, with ID and OOD average gains of $2.1\%$ and $4.8\%$, respectively. 
This not only clearly demonstrates the superiority of our method, but also provides empirical support once again for our argument in Appendix~\ref{app:train_label}: merely selecting “critical” labeled samples is insufficient, effectively leveraging unlabeled data is essential to maximize performance under limited annotation budgets.

\begin{figure}[t]
    \centering
    \includegraphics[width=\textwidth]{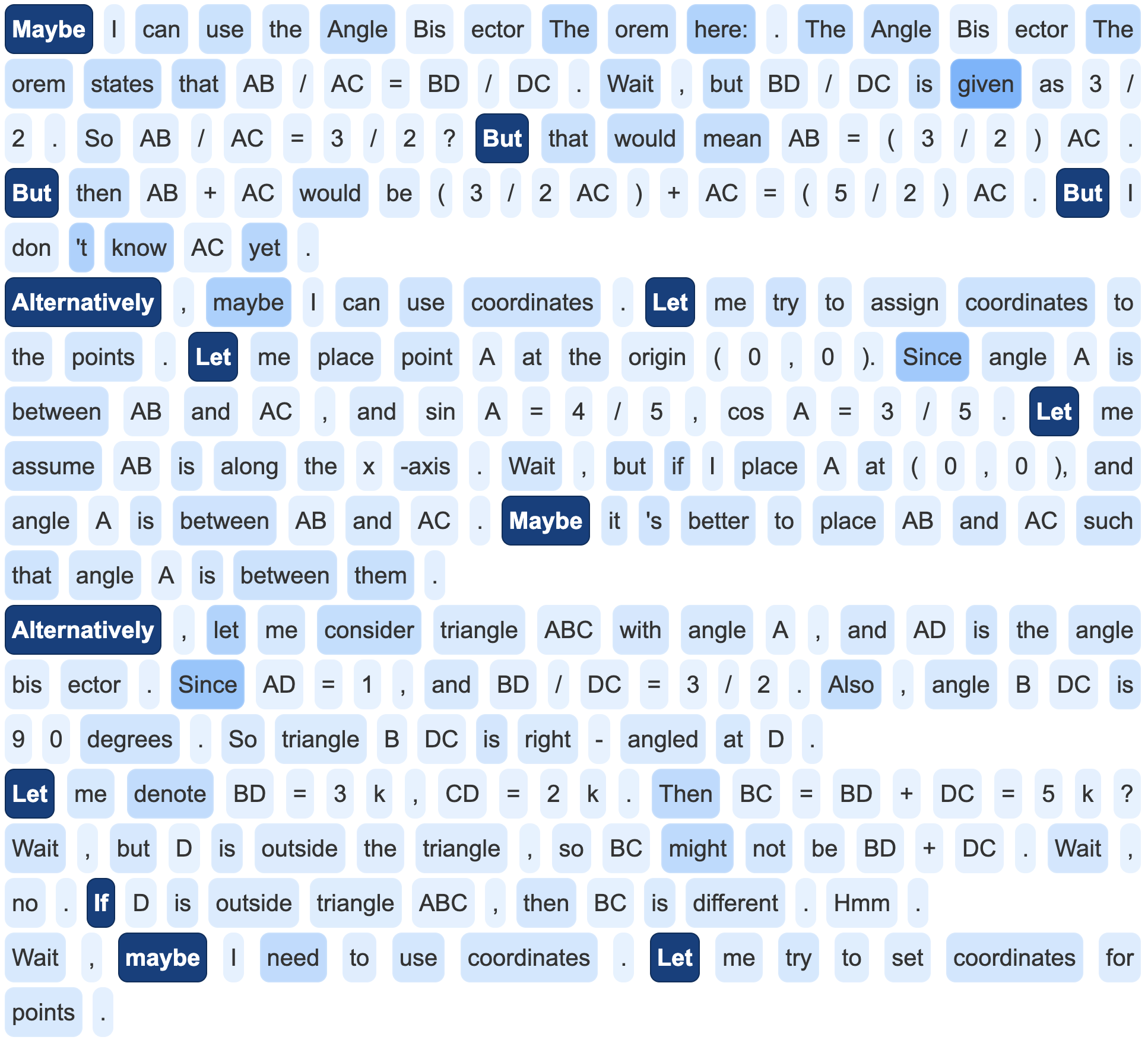}
    \caption{\small Visualization of metacognitive pivots in a reasoning segment. Token shading intensity represents the magnitude of received long-range attention. The most deeply shaded tokens indicate attention peaks, which correspond to the identified metacognitive pivots—critical junctures where the model performs significant informational transitions or self-correction during reasoning.}
    \label{fig:attn_vis}
\end{figure}

\begin{figure}[t]
    \centering
    \includegraphics[width=\textwidth]{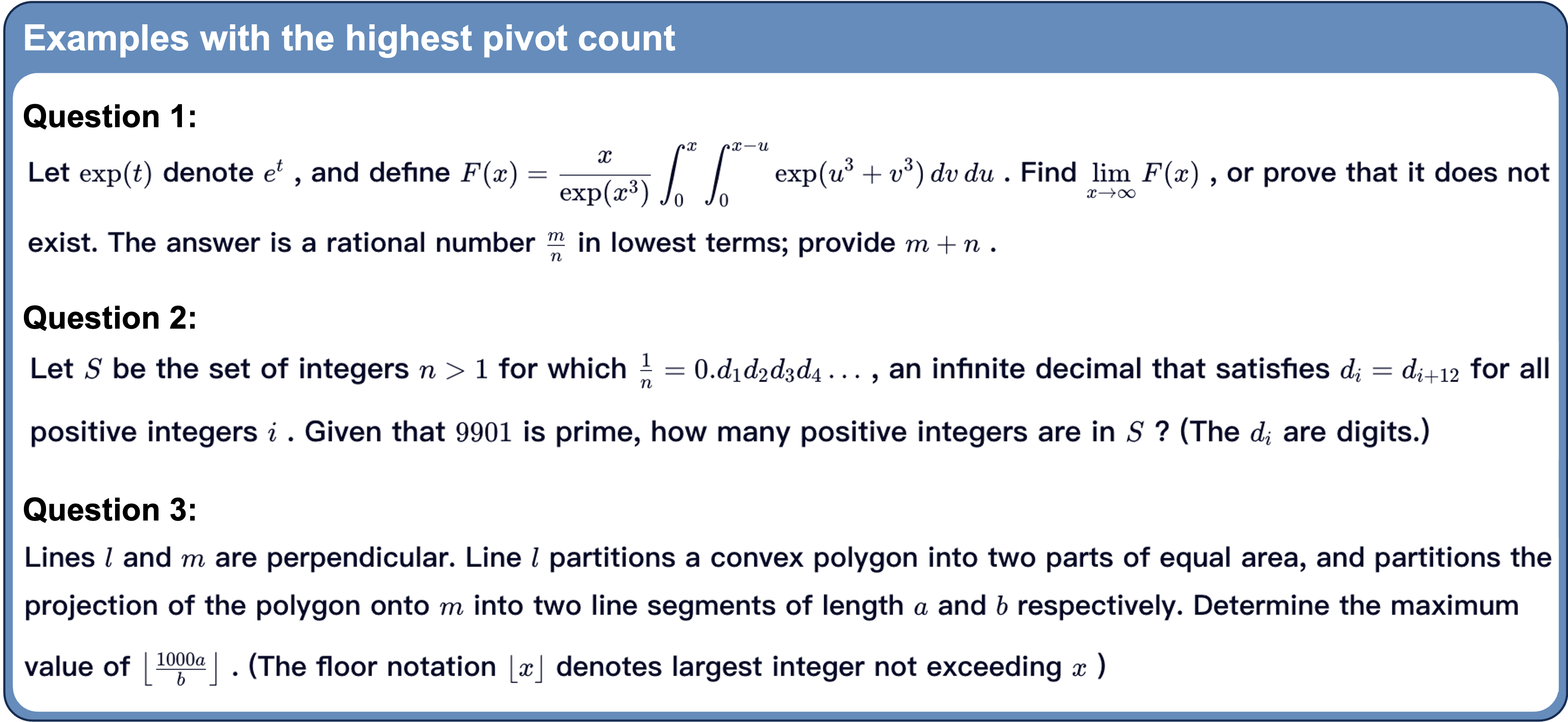}
    \caption{Example questions with the highest pivot count}
    \label{fig:case_study_1}
\end{figure}

\begin{figure}[t]
    \centering
    \includegraphics[width=\textwidth]{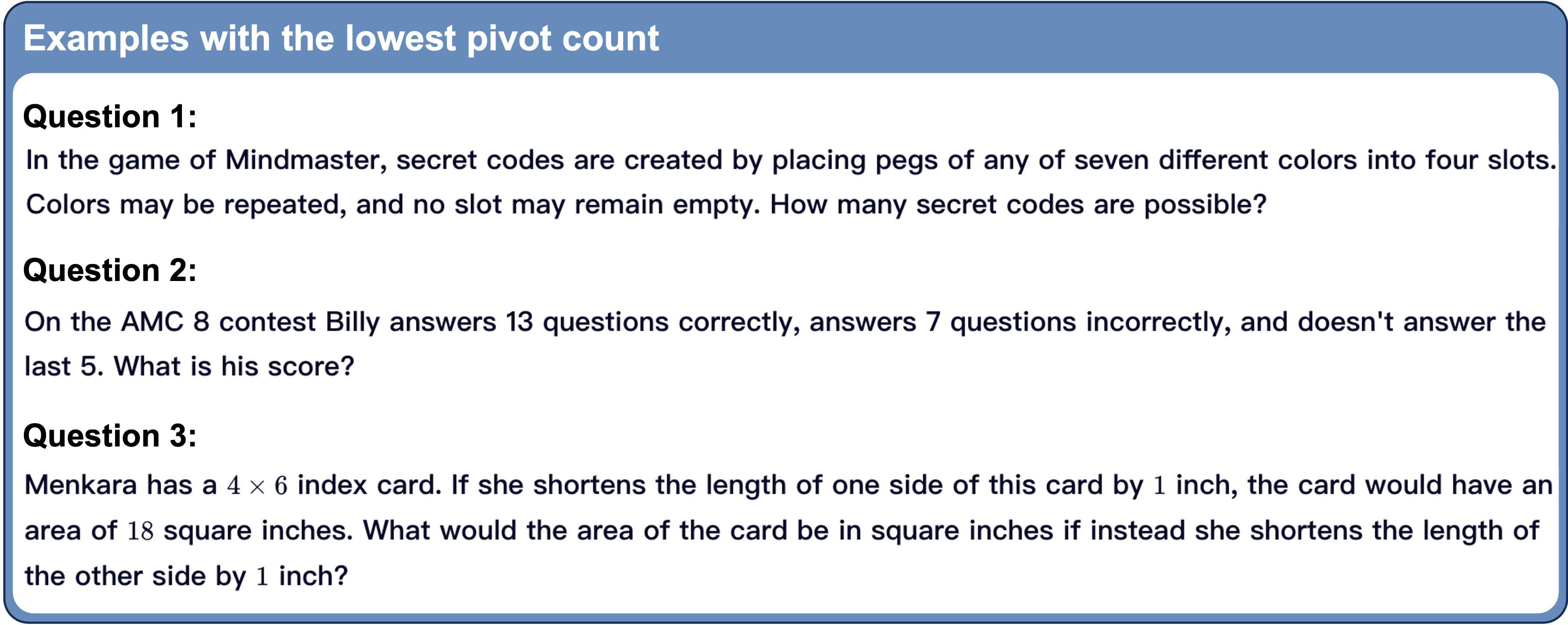}
    \caption{Example questions with the lowest pivot count}
    \label{fig:case_study_2}
\end{figure}

\subsection{Limitations}
\label{sec:limit}
Despite its effectiveness, our approach has several limitations. First, the reliance on full attention analysis may encounter memory bottlenecks when scaling to ultra-large models or extreme context lengths. Second, the optimal thresholds for uncertainty signals are task-dependent; while robust in general settings, they may require domain-specific recalibration for highly subjective or factually stringent applications. Finally, while our current framework operates as a discrete pre-training selection phase, future work could evolve this into an integrated in-training screening or multi-round annotation paradigm to dynamically refine data quality throughout the model's evolution.

\subsection{Broader Impacts}
\label{sec:impact}
Our work primarily focuses on enhancing data efficiency, which carries minimal direct ethical risk while offering significant social benefits. By reducing the reliance on massive human-labeled datasets, we lower the financial and technical barriers to AI development, contributing to the democratization of technology. Furthermore, the reduction in training time directly translates to a smaller carbon footprint. While any automated selection process could potentially inherit or amplify sampling biases from the source pool, this is a technical challenge rather than a fundamental ethical flaw. We advocate for integrating diversity-preserving constraints alongside our method to ensure that the resulting efficiency gains do not come at the expense of model fairness.

\FloatBarrier

\section{Pseudo Code}
\begin{algorithm}[H]
\caption{Three-Way Data Triage via PivotTrace}
\label{alg:pivot_rlvr}
\begin{algorithmic}[1]
\REQUIRE Question pool  $ \mathcal{Q} $ , base model  $ \theta_0 $ , future window bounds  $ d_{\min}, d_{\max} $ , long-range head size  $ k $ , peak thresholds  $ \zeta, \psi, \Delta $ , probing set size  $ N $ , sliding window size  $ K $ , accuracy levels  $ \gamma_l, \gamma_h $ 
\ENSURE Annotated set  $ \mathcal{D}_a $ , unlabeled set  $ \mathcal{D}_u $ , discard set $\mathcal{D}_d$

\FOR{each question  $ q $  in  $ \mathcal{Q} $ }
    \STATE Generate CoT response $y = (y_1, \dots, y_T)$ using base model $\theta_0$
    \STATE Perform forward pass on $(q, y)$ and extract attention maps $\{\mathbf{A}^{(h)}\}_{h=1}^H$ from all $H$ heads
    \FOR{ $ t = 1 $  to  $ T $ }
        \STATE Set future window  $ \mathcal{W}_t = \{ s \mid t + d_{\min} \leq s \leq \min(t + d_{\max}, T) \} $ 
        \FOR{ $ h = 1 $  to  $ H $ }
            \IF{ $ |\mathcal{W}_t| > 0 $ }
                \STATE Compute long-range attention  $ \alpha^{(h)}_t = \frac{1}{|\mathcal{W}_t|} \sum_{s \in \mathcal{W}_t} \mathbf{A}^{(h)}_{s,t} $ 
            \ELSE
                \STATE Set  $ \alpha^{(h)}_t = 0 $ 
            \ENDIF
        \ENDFOR
    \ENDFOR
\ENDFOR

\STATE Estimate head-level long-range score $\bar{\alpha}^{(h)}$ as the average of $\alpha^{(h)}_t$ over tokens in a small set of CoT responses
\STATE Select top-$ k $  heads  $ \mathcal{T}_k $  with highest  $ \bar{\alpha}^{(h)} $ 
\FOR{each  $ q $  in  $ \mathcal{Q} $ }
    \FOR{ $ t = 1 $  to  $ T $ }
        \STATE Aggregate long-range attention:  $ \alpha_t = \frac{1}{k} \sum_{h \in \mathcal{T}_k} \alpha^{(h)}_t $ 
    \ENDFOR
    \STATE Detect pivot positions  $ \bm{p} $  as local maxima in  $ (\alpha_1, \dots, \alpha_T) $  satisfying:
          (i)  $ \alpha_{p_i} \geq \zeta $ ,
          (ii) prominence of  $ p_i $  is at least  $ \psi $ ,
          (iii) distance between any two pivots is at least  $ \Delta $ 
    \STATE Record pivot count  $ c_q = |\bm{p}| $ 
\ENDFOR

\STATE Sort  $ \mathcal{Q} $  by increasing  $ c_q $  (higher count implies lower confidence)
\STATE Sample  $ N $  questions uniformly to form probing set  $ \mathcal{P} $ 
\FOR{each  $ q_j $  in  $ \mathcal{P} $ }
    \STATE Annotate ground-truth answer $a_j$ and generate  $ G $  model responses
    \STATE Compute empirical accuracy  $ \hat{\mu}_j $ 
\ENDFOR
\FOR{ $ i = 1 $  to  $ N - K + 1 $ }
    \STATE Let  $ \mathcal{S}_i $  be the  $ i $-th sliding window of size  $ K $  over  $ \mathcal{P} $ 
    \STATE Compute average accuracy  $ \bar{\mu}_i = \frac{1}{K} \sum_{q_j \in \mathcal{S}_i} \hat{\mu}_j $ 
\ENDFOR
\STATE Let $i_l = \min \{ i \mid \bar{\mu}_i < \gamma_l \}$ and $i_h = \min \{ i \mid \bar{\mu}_i < \gamma_h \}$
\STATE Set $\tau_l = \frac{1}{K} \sum_{q_j \in \mathcal{S}_{i_l}} c_{q_j}$ and $\tau_h = \frac{1}{K} \sum_{q_j \in \mathcal{S}_{i_h}} c_{q_j}$

\STATE Form annotation set  $ \mathcal{D}_a = \{ q \in \mathcal{Q} \mid c_q \geq \tau_h \} $ , unlabeled set  $ \mathcal{D}_u = \{ q \in \mathcal{Q} \mid \tau_l < c_q < \tau_h \} $ , and discard set  $ \mathcal{D}_d = \{ q \in \mathcal{Q} \mid c_q \leq \tau_l \} $ 
\STATE Annotate all questions in  $ \mathcal{D}_a $ 
\STATE \textbf{return} $\mathcal{D}_a, \mathcal{D}_u, \mathcal{D}_d$
\end{algorithmic}
\end{algorithm}

\FloatBarrier


\newpage
\input{checklist.tex}

\end{document}

%% file: checklist.tex
\section*{NeurIPS Paper Checklist}

\begin{enumerate}

\item {\bf Claims}
    \item[] Question: Do the main claims made in the abstract and introduction accurately reflect the paper's contributions and scope?
    \item[] Answer: \answerYes{} 
    \item[] Justification: The abstract and introduction clearly state that PivotTrace achieves dual efficiency in RLVR by tracing metacognitive pivots. The empirical results and theoretical analysis of uncertainty estimation fully support these claims.
    \item[] Guidelines:
    \begin{itemize}
        \item The answer \answerNA{} means that the abstract and introduction do not include the claims made in the paper.
        \item The abstract and/or introduction should clearly state the claims made, including the contributions made in the paper and important assumptions and limitations. A \answerNo{} or \answerNA{} answer to this question will not be perceived well by the reviewers. 
        \item The claims made should match theoretical and experimental results, and reflect how much the results can be expected to generalize to other settings. 
        \item It is fine to include aspirational goals as motivation as long as it is clear that these goals are not attained by the paper. 
    \end{itemize}

\item {\bf Limitations}
    \item[] Question: Does the paper discuss the limitations of the work performed by the authors?
    \item[] Answer: \answerYes{} 
    \item[] Justification:  We include a discussion of the limitations in the Appendix~\ref{sec:limit}.
    \item[] Guidelines:
    \begin{itemize}
        \item The answer \answerNA{} means that the paper has no limitation while the answer \answerNo{} means that the paper has limitations, but those are not discussed in the paper. 
        \item The authors are encouraged to create a separate ``Limitations'' section in their paper.
        \item The paper should point out any strong assumptions and how robust the results are to violations of these assumptions (e.g., independence assumptions, noiseless settings, model well-specification, asymptotic approximations only holding locally). The authors should reflect on how these assumptions might be violated in practice and what the implications would be.
        \item The authors should reflect on the scope of the claims made, e.g., if the approach was only tested on a few datasets or with a few runs. In general, empirical results often depend on implicit assumptions, which should be articulated.
        \item The authors should reflect on the factors that influence the performance of the approach. For example, a facial recognition algorithm may perform poorly when image resolution is low or images are taken in low lighting. Or a speech-to-text system might not be used reliably to provide closed captions for online lectures because it fails to handle technical jargon.
        \item The authors should discuss the computational efficiency of the proposed algorithms and how they scale with dataset size.
        \item If applicable, the authors should discuss possible limitations of their approach to address problems of privacy and fairness.
        \item While the authors might fear that complete honesty about limitations might be used by reviewers as grounds for rejection, a worse outcome might be that reviewers discover limitations that aren't acknowledged in the paper. The authors should use their best judgment and recognize that individual actions in favor of transparency play an important role in developing norms that preserve the integrity of the community. Reviewers will be specifically instructed to not penalize honesty concerning limitations.
    \end{itemize}

\item {\bf Theory assumptions and proofs}
    \item[] Question: For each theoretical result, does the paper provide the full set of assumptions and a complete (and correct) proof?
    \item[] Answer: \answerYes{} 
    \item[] Justification: We provide the complete assumptions and proofs in Appendix~\ref{sec:theory_proof}.
    \item[] Guidelines:
    \begin{itemize}
        \item The answer \answerNA{} means that the paper does not include theoretical results. 
        \item All the theorems, formulas, and proofs in the paper should be numbered and cross-referenced.
        \item All assumptions should be clearly stated or referenced in the statement of any theorems.
        \item The proofs can either appear in the main paper or the supplemental material, but if they appear in the supplemental material, the authors are encouraged to provide a short proof sketch to provide intuition. 
        \item Inversely, any informal proof provided in the core of the paper should be complemented by formal proofs provided in appendix or supplemental material.
        \item Theorems and Lemmas that the proof relies upon should be properly referenced. 
    \end{itemize}

    \item {\bf Experimental result reproducibility}
    \item[] Question: Does the paper fully disclose all the information needed to reproduce the main experimental results of the paper to the extent that it affects the main claims and/or conclusions of the paper (regardless of whether the code and data are provided or not)?
    \item[] Answer: \answerYes{} 
    \item[] Justification: The paper details the experimental setup (Section~\ref{sec:setup}, Appendix~\ref{app:exp_setup}), including datasets, baselines, hyperparameters, and hardware.
    \item[] Guidelines:
    \begin{itemize}
        \item The answer \answerNA{} means that the paper does not include experiments.
        \item If the paper includes experiments, a \answerNo{} answer to this question will not be perceived well by the reviewers: Making the paper reproducible is important, regardless of whether the code and data are provided or not.
        \item If the contribution is a dataset and\slash or model, the authors should describe the steps taken to make their results reproducible or verifiable. 
        \item Depending on the contribution, reproducibility can be accomplished in various ways. For example, if the contribution is a novel architecture, describing the architecture fully might suffice, or if the contribution is a specific model and empirical evaluation, it may be necessary to either make it possible for others to replicate the model with the same dataset, or provide access to the model. In general. releasing code and data is often one good way to accomplish this, but reproducibility can also be provided via detailed instructions for how to replicate the results, access to a hosted model (e.g., in the case of a large language model), releasing of a model checkpoint, or other means that are appropriate to the research performed.
        \item While NeurIPS does not require releasing code, the conference does require all submissions to provide some reasonable avenue for reproducibility, which may depend on the nature of the contribution. For example
        \begin{enumerate}
            \item If the contribution is primarily a new algorithm, the paper should make it clear how to reproduce that algorithm.
            \item If the contribution is primarily a new model architecture, the paper should describe the architecture clearly and fully.
            \item If the contribution is a new model (e.g., a large language model), then there should either be a way to access this model for reproducing the results or a way to reproduce the model (e.g., with an open-source dataset or instructions for how to construct the dataset).
            \item We recognize that reproducibility may be tricky in some cases, in which case authors are welcome to describe the particular way they provide for reproducibility. In the case of closed-source models, it may be that access to the model is limited in some way (e.g., to registered users), but it should be possible for other researchers to have some path to reproducing or verifying the results.
        \end{enumerate}
    \end{itemize}

\item {\bf Open access to data and code}
    \item[] Question: Does the paper provide open access to the data and code, with sufficient instructions to faithfully reproduce the main experimental results, as described in supplemental material?
    \item[] Answer: \answerYes{} 
    \item[] Justification: We provide the code in the supplementary material, and datasets used in this study are publicly available.
    \item[] Guidelines:
    \begin{itemize}
        \item The answer \answerNA{} means that paper does not include experiments requiring code.
        \item Please see the NeurIPS code and data submission guidelines (\url{https://neurips.cc/public/guides/CodeSubmissionPolicy}) for more details.
        \item While we encourage the release of code and data, we understand that this might not be possible, so \answerNo{} is an acceptable answer. Papers cannot be rejected simply for not including code, unless this is central to the contribution (e.g., for a new open-source benchmark).
        \item The instructions should contain the exact command and environment needed to run to reproduce the results. See the NeurIPS code and data submission guidelines (\url{https://neurips.cc/public/guides/CodeSubmissionPolicy}) for more details.
        \item The authors should provide instructions on data access and preparation, including how to access the raw data, preprocessed data, intermediate data, and generated data, etc.
        \item The authors should provide scripts to reproduce all experimental results for the new proposed method and baselines. If only a subset of experiments are reproducible, they should state which ones are omitted from the script and why.
        \item At submission time, to preserve anonymity, the authors should release anonymized versions (if applicable).
        \item Providing as much information as possible in supplemental material (appended to the paper) is recommended, but including URLs to data and code is permitted.
    \end{itemize}

\item {\bf Experimental setting/details}
    \item[] Question: Does the paper specify all the training and test details (e.g., data splits, hyperparameters, how they were chosen, type of optimizer) necessary to understand the results?
    \item[] Answer: \answerYes{} 
    \item[] Justification: The paper details the experimental setup (Section~\ref{sec:setup}, Appendix~\ref{app:exp_setup}), including datasets, baselines, hyperparameters, and hardware.
    \item[] Guidelines:
    \begin{itemize}
        \item The answer \answerNA{} means that the paper does not include experiments.
        \item The experimental setting should be presented in the core of the paper to a level of detail that is necessary to appreciate the results and make sense of them.
        \item The full details can be provided either with the code, in appendix, or as supplemental material.
    \end{itemize}

\item {\bf Experiment statistical significance}
    \item[] Question: Does the paper report error bars suitably and correctly defined or other appropriate information about the statistical significance of the experiments?
    \item[] Answer: \answerNo{} 
    \item[] Justification: Due to the substantial computational cost, error bars are rarely reported in LLM RLVR studies. Our work follows this common practice within the community.
    \item[] Guidelines:
    \begin{itemize}
        \item The answer \answerNA{} means that the paper does not include experiments.
        \item The authors should answer \answerYes{} if the results are accompanied by error bars, confidence intervals, or statistical significance tests, at least for the experiments that support the main claims of the paper.
        \item The factors of variability that the error bars are capturing should be clearly stated (for example, train/test split, initialization, random drawing of some parameter, or overall run with given experimental conditions).
        \item The method for calculating the error bars should be explained (closed form formula, call to a library function, bootstrap, etc.)
        \item The assumptions made should be given (e.g., Normally distributed errors).
        \item It should be clear whether the error bar is the standard deviation or the standard error of the mean.
        \item It is OK to report 1-sigma error bars, but one should state it. The authors should preferably report a 2-sigma error bar than state that they have a 96\% CI, if the hypothesis of Normality of errors is not verified.
        \item For asymmetric distributions, the authors should be careful not to show in tables or figures symmetric error bars that would yield results that are out of range (e.g., negative error rates).
        \item If error bars are reported in tables or plots, the authors should explain in the text how they were calculated and reference the corresponding figures or tables in the text.
    \end{itemize}

\item {\bf Experiments compute resources}
    \item[] Question: For each experiment, does the paper provide sufficient information on the computer resources (type of compute workers, memory, time of execution) needed to reproduce the experiments?
    \item[] Answer: \answerYes{} 
    \item[] Justification: We provide the hardware setup in Section~\ref{sec:setup} and compare the time efficiency in Appendix~\ref{app:runtime}.
    \item[] Guidelines:
    \begin{itemize}
        \item The answer \answerNA{} means that the paper does not include experiments.
        \item The paper should indicate the type of compute workers CPU or GPU, internal cluster, or cloud provider, including relevant memory and storage.
        \item The paper should provide the amount of compute required for each of the individual experimental runs as well as estimate the total compute. 
        \item The paper should disclose whether the full research project required more compute than the experiments reported in the paper (e.g., preliminary or failed experiments that didn't make it into the paper). 
    \end{itemize}
    
\item {\bf Code of ethics}
    \item[] Question: Does the research conducted in the paper conform, in every respect, with the NeurIPS Code of Ethics \url{https://neurips.cc/public/EthicsGuidelines}?
    \item[] Answer: \answerYes{} 
    \item[] Justification:  The research presented in this paper fully complies with the NeurIPS Code of Ethics. We have carefully considered issues such as reproducibility, fairness, transparency, potential societal impact, and the responsible use of data. All experiments were conducted ethically, and any datasets used were publicly available and appropriately cited. Code is provided in the supplementary material to support reproducibility.
    \item[] Guidelines:
    \begin{itemize}
        \item The answer \answerNA{} means that the authors have not reviewed the NeurIPS Code of Ethics.
        \item If the authors answer \answerNo, they should explain the special circumstances that require a deviation from the Code of Ethics.
        \item The authors should make sure to preserve anonymity (e.g., if there is a special consideration due to laws or regulations in their jurisdiction).
    \end{itemize}

\item {\bf Broader impacts}
    \item[] Question: Does the paper discuss both potential positive societal impacts and negative societal impacts of the work performed?
    \item[] Answer: \answerYes{} 
    \item[] Justification: We discuss broader impacts in Appendix~\ref{sec:impact}.
    \item[] Guidelines:
    \begin{itemize}
        \item The answer \answerNA{} means that there is no societal impact of the work performed.
        \item If the authors answer \answerNA{} or \answerNo, they should explain why their work has no societal impact or why the paper does not address societal impact.
        \item Examples of negative societal impacts include potential malicious or unintended uses (e.g., disinformation, generating fake profiles, surveillance), fairness considerations (e.g., deployment of technologies that could make decisions that unfairly impact specific groups), privacy considerations, and security considerations.
        \item The conference expects that many papers will be foundational research and not tied to particular applications, let alone deployments. However, if there is a direct path to any negative applications, the authors should point it out. For example, it is legitimate to point out that an improvement in the quality of generative models could be used to generate Deepfakes for disinformation. On the other hand, it is not needed to point out that a generic algorithm for optimizing neural networks could enable people to train models that generate Deepfakes faster.
        \item The authors should consider possible harms that could arise when the technology is being used as intended and functioning correctly, harms that could arise when the technology is being used as intended but gives incorrect results, and harms following from (intentional or unintentional) misuse of the technology.
        \item If there are negative societal impacts, the authors could also discuss possible mitigation strategies (e.g., gated release of models, providing defenses in addition to attacks, mechanisms for monitoring misuse, mechanisms to monitor how a system learns from feedback over time, improving the efficiency and accessibility of ML).
    \end{itemize}
    
\item {\bf Safeguards}
    \item[] Question: Does the paper describe safeguards that have been put in place for responsible release of data or models that have a high risk for misuse (e.g., pre-trained language models, image generators, or scraped datasets)?
    \item[] Answer: \answerNA{} 
    \item[] Justification: The paper does not describe explicit safeguards for responsible release, because the released resources (selected datasets) are low-risk compared to pre-trained language models or image generators. The datasets are intended for reproducible RLVR research and are documented with their construction process.
    \item[] Guidelines:
    \begin{itemize}
        \item The answer \answerNA{} means that the paper poses no such risks.
        \item Released models that have a high risk for misuse or dual-use should be released with necessary safeguards to allow for controlled use of the model, for example by requiring that users adhere to usage guidelines or restrictions to access the model or implementing safety filters. 
        \item Datasets that have been scraped from the Internet could pose safety risks. The authors should describe how they avoided releasing unsafe images.
        \item We recognize that providing effective safeguards is challenging, and many papers do not require this, but we encourage authors to take this into account and make a best faith effort.
    \end{itemize}

\item {\bf Licenses for existing assets}
    \item[] Question: Are the creators or original owners of assets (e.g., code, data, models), used in the paper, properly credited and are the license and terms of use explicitly mentioned and properly respected?
    \item[] Answer: \answerYes{} 
    \item[] Justification: All external datasets, codebases, and models used in our experiments are properly cited in the references. Licenses and terms of use have been respected.
    \item[] Guidelines:
    \begin{itemize}
        \item The answer \answerNA{} means that the paper does not use existing assets.
        \item The authors should cite the original paper that produced the code package or dataset.
        \item The authors should state which version of the asset is used and, if possible, include a URL.
        \item The name of the license (e.g., CC-BY 4.0) should be included for each asset.
        \item For scraped data from a particular source (e.g., website), the copyright and terms of service of that source should be provided.
        \item If assets are released, the license, copyright information, and terms of use in the package should be provided. For popular datasets, \url{paperswithcode.com/datasets} has curated licenses for some datasets. Their licensing guide can help determine the license of a dataset.
        \item For existing datasets that are re-packaged, both the original license and the license of the derived asset (if it has changed) should be provided.
        \item If this information is not available online, the authors are encouraged to reach out to the asset's creators.
    \end{itemize}

\item {\bf New assets}
    \item[] Question: Are new assets introduced in the paper well documented and is the documentation provided alongside the assets?
    \item[] Answer: \answerYes{} 
    \item[] Justification: We provide the source code and implementation details in the supplementary material for reproducibility. The code includes a README file with instructions for installation and environment setup.
    \item[] Guidelines:
    \begin{itemize}
        \item The answer \answerNA{} means that the paper does not release new assets.
        \item Researchers should communicate the details of the dataset\slash code\slash model as part of their submissions via structured templates. This includes details about training, license, limitations, etc. 
        \item The paper should discuss whether and how consent was obtained from people whose asset is used.
        \item At submission time, remember to anonymize your assets (if applicable). You can either create an anonymized URL or include an anonymized zip file.
    \end{itemize}

\item {\bf Crowdsourcing and research with human subjects}
    \item[] Question: For crowdsourcing experiments and research with human subjects, does the paper include the full text of instructions given to participants and screenshots, if applicable, as well as details about compensation (if any)? 
    \item[] Answer: \answerNA{} 
    \item[] Justification: No human subjects or crowdsourcing were involved.
    \item[] Guidelines:
    \begin{itemize}
        \item The answer \answerNA{} means that the paper does not involve crowdsourcing nor research with human subjects.
        \item Including this information in the supplemental material is fine, but if the main contribution of the paper involves human subjects, then as much detail as possible should be included in the main paper. 
        \item According to the NeurIPS Code of Ethics, workers involved in data collection, curation, or other labor should be paid at least the minimum wage in the country of the data collector. 
    \end{itemize}

\item {\bf Institutional review board (IRB) approvals or equivalent for research with human subjects}
    \item[] Question: Does the paper describe potential risks incurred by study participants, whether such risks were disclosed to the subjects, and whether Institutional Review Board (IRB) approvals (or an equivalent approval/review based on the requirements of your country or institution) were obtained?
    \item[] Answer: \answerNA{} 
    \item[] Justification: Not applicable, as no human subjects research was conducted.
    \item[] Guidelines:
    \begin{itemize}
        \item The answer \answerNA{} means that the paper does not involve crowdsourcing nor research with human subjects.
        \item Depending on the country in which research is conducted, IRB approval (or equivalent) may be required for any human subjects research. If you obtained IRB approval, you should clearly state this in the paper. 
        \item We recognize that the procedures for this may vary significantly between institutions and locations, and we expect authors to adhere to the NeurIPS Code of Ethics and the guidelines for their institution. 
        \item For initial submissions, do not include any information that would break anonymity (if applicable), such as the institution conducting the review.
    \end{itemize}

\item {\bf Declaration of LLM usage}
    \item[] Question: Does the paper describe the usage of LLMs if it is an important, original, or non-standard component of the core methods in this research? Note that if the LLM is used only for writing, editing, or formatting purposes and does \emph{not} impact the core methodology, scientific rigor, or originality of the research, declaration is not required.
    \item[] Answer: \answerNA{} 
    \item[] Justification: The core method development in this research does not involve LLMs as any important, original, or non-standard components.
    \item[] Guidelines:
    \begin{itemize}
        \item The answer \answerNA{} means that the core method development in this research does not involve LLMs as any important, original, or non-standard components.
        \item Please refer to our LLM policy in the NeurIPS handbook for what should or should not be described.
    \end{itemize}

\end{enumerate}